\documentclass[10pt,a4paper,twocolumn]{sg_preprint}

\usepackage[authoryear,round]{natbib}
\usepackage{mathtools,bm}
\usepackage{array}
\usepackage{placeins}
\usepackage{microtype}
\hypersetup{pdftitle={Shared Gaussianization: What Gaussian Regularizers Certify About Contrastive Learning, and What They Miss},pdfauthor={Ruoyu Zhao, Yuting Chen, Jinheng Zhang, Zhehao Zou, Tong Che},colorlinks=true,linkcolor=blue!50!black,citecolor=blue!50!black,urlcolor=blue!50!black}
\usepackage[capitalise,nameinlink]{cleveref}

\newcommand{\R}{\mathbb{R}}
\newcommand{\E}{\mathbb{E}}
\newcommand{\Sd}{\mathbb{S}^{d-1}}
\newcommand{\J}{\mathcal{J}}
\newcommand{\Dlift}{D_{\rm lift}}
\newcommand{\Db}{D_\beta}
\newcommand{\Ub}{\mathcal{U}_\beta}
\newcommand{\Lb}{\mathcal{L}_\beta}
\newcommand{\dd}{\mathrm{d}}
\newcommand{\hypF}{{}_1F_1}

\theoremstyle{plain}
\newtheorem{theorem}{Theorem}
\newtheorem{proposition}{Proposition}
\newtheorem{lemma}{Lemma}
\newtheorem{corollary}{Corollary}
\theoremstyle{definition}

\newtheorem{remark}{Remark}
\crefname{lemma}{Lemma}{Lemmas}
\crefname{proposition}{Proposition}{Propositions}
\crefname{remark}{Remark}{Remarks}
\crefname{corollary}{Corollary}{Corollaries}
\crefname{appendix}{Appendix}{Appendices}\Crefname{appendix}{Appendix}{Appendices}
\crefname{subappendix}{Appendix}{Appendices}\Crefname{subappendix}{Appendix}{Appendices}
\crefname{subsubappendix}{Appendix}{Appendices}\Crefname{subsubappendix}{Appendix}{Appendices}

\title{Shared Gaussianization: What Gaussian Regularizers Certify About Contrastive Learning, and What They Miss}
\runningtitle{Shared Gaussianization: What Gaussian Regularizers Certify About Contrastive Learning, and What They Miss}
\preprintlabel{Preprint}

\author[1,$\ast$]{Ruoyu Zhao}
\author[2,$\ast$]{Yuting Chen}
\author[3]{Jinheng Zhang}
\author[4]{Zhehao Zou}
\author[5,$\dagger$]{Tong Che}
\affil[1]{City University of Hong Kong, \texttt{ruoyuzhao8-c@my.cityu.edu.hk}}
\affil[2]{Georgia Institute of Technology, \texttt{yuting3123@gmail.com}}
\affil[3]{University of Pennsylvania, \texttt{jinhengz@seas.upenn.edu}}
\affil[4]{The Chinese University of Hong Kong, \texttt{1155257174@link.cuhk.edu.hk}}
\affil[5]{NVIDIA Research, \texttt{tongc@nvidia.com}}
\firstpagenote{$^\ast$Equal contribution.\quad $^\dagger$Project lead.}

\begin{document}

\begin{abstract}
What can a distribution-matching regularizer such as SIGReg in LeJEPA certify about contrastive learning? We study shared Gaussianization (SG), a characteristic-function Gaussianity test on the average of two normalized views, scaled by an independent $\chi_d$ radius. Because disagreeing views shorten the average, one test detects both misalignment and non-uniformity. SG vanishes exactly at the aligned, uniform minimizers of population InfoNCE, and under equal marginals it bounds the InfoNCE excess by $4\cdot3^{3/4}\beta\sqrt{\mathcal J}$ plus an $O(\mathcal J)$ term. The rate and the dimension-free leading constant are sharp, and no squared mean-embedding distance on view pairs achieves a faster rate. With an explicit alignment term, a rotation-invariant uniformity test gives a linear bound if and only if its spectrum dominates that of $e^{\beta u^\top v}$; SG's own test does, Gaussian kernels $e^{-\gamma\|u-v\|^2}$ qualify exactly when $\gamma\ge\beta/2$, and moment matching never does. Away from the optimum, the objectives differ. Along an isotropic nuisance channel, pure SG lowers its loss by adding per-view nuisance whenever the shared code is non-uniform. An alignment weight above the channel's gain makes the nuisance-free solution a strict local minimizer; for LeJEPA, the same rule gives a critical SIGReg weight that decreases with the batch size. At finite batch size, an off-diagonal U-statistic removes a plug-in bias toward misalignment. In controlled latent-variable models, pure SG retains per-view style, an alignment weight above the measured gain removes it, and for LeJEPA at three batch sizes the measured gain separates the encoders that retain style from those that do not. InfoNCE training also reaches a lower SG$_{0.2}$ loss than SG$_{0.2}$ training from scratch, which points to an optimization gap.
\end{abstract}

\maketitle

\section{Introduction}

Self-supervised representation learning commonly uses two families of joint-embedding objectives: contrastive methods and distribution-matching methods. \emph{Contrastive} methods such as InfoNCE and SimCLR \citep{oord2018cpc,chen2020simclr} pull positive pairs together and push the rest of the batch away; population InfoNCE is \emph{alignment plus uniformity} on the sphere \citep{wang2020understanding}. \emph{Distribution-matching} methods drop the negatives and prevent collapse with a test of the embedding distribution: variance--covariance penalties \citep{bardes2022vicreg,zbontar2021barlow}, spherical uniformity tests \citep{zheng2023sfrik,nicollier2026sphere}, and the sketched isotropic Gaussian regularizer (SIGReg) of LeJEPA \citep{balestriero2025lejepa}, recently recast as a kernel maximum mean discrepancy (MMD) \citep{zimmermann2025kerjepa}. Such a test needs no pairwise similarity matrix; LeJEPA enforces view agreement with a separate alignment (prediction) term. Tuned carefully, the families achieve comparable accuracy \citep{garrido2023duality}, which leaves open what a distribution test certifies about contrastive learning and whether the two learn the same representation.

We ask whether a single distribution-matching loss can enforce both alignment and uniformity, and study \emph{shared Gaussianization} (SG). For normalized embeddings $U,V\in\Sd$ of two augmented views, SG tests the law of $Z=RC$, with centroid $C=\frac12(U+V)$ and an independent radius $R\sim\chi_d$, against $\mathcal N(0,I_d)$ using SIGReg's characteristic-function loss $\J$. A uniform direction times an independent $\chi_d$ radius is exactly Gaussian, while $\|C\|^2=1-\frac14\|U-V\|^2$, so disagreement shortens the centroid, and the same test sees both global non-uniformity and pairwise misalignment.

To compare SG with population InfoNCE, we separate three questions: whether two objectives share the same ideal solution, whether one quantitatively controls the other, and whether they favor the same representations away from that solution. SG answers yes to the first two and no to the third. Away from the optimum, SG can keep per-view nuisance that InfoNCE rejects, unless an explicit alignment price exceeds the gain of the nuisance channel. Finite-batch estimation adds a further difference. Our main contributions are as follows.
\begin{itemize}[leftmargin=*,itemsep=1pt,topsep=2pt]
\item \textbf{Population guarantees} (\cref{sec:theory}). SG vanishes exactly when the views agree almost surely and their common law is uniform. Under equal marginals it controls excess InfoNCE at a square-root rate whose dimension-free leading constant is attained by an explicit extremal family (\cref{thm:infonce}), and no mean-embedding (Hilbertian) pair discrepancy with this target can do better (\cref{thm:barrier}). With an explicit alignment term the certificate becomes linear, and a spectral criterion, condition (U), characterizes which uniformity tests allow this (\cref{thm:spectral}).
\item \textbf{Finite-batch estimation} (\cref{sec:finite}). The plug-in estimator inherits SIGReg's finite-sample bias, and the centroid turns it into a preference for imperfect alignment. An off-diagonal U-statistic removes the bias at $O(BdM)$ cost without a pairwise similarity matrix (\cref{prop:bias,prop:unbiased}).
\item \textbf{Nuisance preferences and the alignment price} (\cref{sec:nuisance}). Population agreement does not fix preferences away from the optimum. Along an isotropic channel, per-view nuisance lowers SG at first order whenever the shared code is non-uniform, while population InfoNCE rejects it near uniformity (\cref{thm:slaving}). An alignment price (SG$_\alpha$), $\J_\alpha=\J+\alpha\Delta$ with $\Delta$ the expected misalignment, makes the nuisance-free solution a strict local minimizer when $\alpha$ exceeds the channel's gain and not when it falls below; the same rule applies to two-term objectives such as LeJEPA (\cref{prop:condA}).
\item \textbf{Experiments} (\cref{sec:experiments}). In controlled latent models, SG$_\alpha$ comes close to InfoNCE without per-view nuisance, and Gaussian uniformity tests that satisfy (U) reach InfoNCE's uniformity in training while moment matching does not. With nuisance, pure SG retains or increases it, and an alignment price above the gain measured on the trained encoders removes it; for LeJEPA at three batch sizes the measured gain separates the encoders that keep style from those that do not. InfoNCE training also reaches lower values of the SG$_{0.2}$ objective than SG$_{0.2}$ trained from scratch, which points to an \emph{optimization gap}.
\end{itemize}

\section{Related Work}

\paragraph{Alignment and uniformity.} Our comparison builds on the decomposition of InfoNCE, in the limit of infinitely many negatives, into alignment and uniformity \citep{wang2020understanding}; see \citet{wang2021behaviour} for the role of temperature. Downstream guarantees \citep{arora2019theoretical,haochen2021spectral} and identifiability results for contrastive \citep{zimmermann2021cl,vonkugelgen2021content,rusak2025infonce} and LeJEPA-type objectives \citep{klindt2026lejepa,nicollier2026beyond} characterize what the optimum represents. \citet{betser2026infonce} show that, under alignment and concentration assumptions, low-dimensional projections of InfoNCE representations become Gaussian as the dimension grows. We instead compare a distribution test with InfoNCE away from the optimum, through a sharp bound on InfoNCE's excess that holds for every law with equal marginals and through local preferences.

\paragraph{Gaussian and kernel regularizers.} SIGReg \citep{balestriero2025lejepa} matches sliced characteristic functions of each view to $\mathcal N(0,I)$ \citep{epps1983test,baringhaus1988consistent} and adds a predictive alignment term; KerJEPA \citep{zimmermann2025kerjepa} recasts it as a sliced MMD; we use its kernel form and SIGReg's bias identity directly. SFRIK \citep{zheng2023sfrik} and SPHERE-JEPA \citep{nicollier2026sphere,nicollier2026expanding} use zonal or sliced uniformity tests on the sphere, which are Sobolev tests of uniformity \citep{gine1975invariant,garciaportugues2018overview}. Reported limitations of SIGReg-type regularizers include vanishing gradients \citep{yu2026qqworld,wu2026visreg} and failure to prevent spurious attributes from structuring the latent space \citep{nicollier2026uowreg}, and one-view marginals do not identify the coupling of the views \citep{jenkinsonalvarez2026beyondisotropy}; our gain--price rule (\cref{sec:alignment-price}) gives the invariance weight at which a two-term objective nevertheless rejects a given nuisance channel. A sphere--interval decomposition of the Gaussian also appears in \citet{parakhin2026wristband}. These regularizers test marginal embedding distributions, whereas SG tests a statistic of the pair. Testing the pair gives InfoNCE's exact target and a sharp comparison with it, and it turns SIGReg's finite-sample bias into a bias toward misalignment.

\paragraph{Contrastive and non-contrastive.} \citet{garrido2023duality} relate covariance criteria to sample-Gram criteria, both second-moment objects, as is the whitening of W-MSE \citep{ermolov2021wmse}; SSL-HSIC maximizes a kernel dependence \citep{li2021sslhsic}, and \citet{fang2024rethinking} propose a Wasserstein uniformity metric; MMCR \citep{yerxa2023mmcr,schaeffer2024mmcr} also uses the norm of the view centroid, with a nuclear-norm objective. \citet{mikulasch2026ldm} unify contrastive, non-contrastive and predictive objectives as latent distribution matching, \citet{chen2024secretly} recast InfoNCE as optimal-transport alignment, and \citet{dufumier2023kernel} replace its uniformity term with a kernel-based decoupled uniformity. We compare targets, certificates and local preferences quantitatively, and show that second-moment matching cannot control InfoNCE's uniformity term (\cref{cor:examples}).

\section{Shared Gaussianization}\label{sec:setup}

Let $d\ge2$, $\Sd\subset\R^d$ be the unit sphere, and $\sigma$ be its uniform probability measure. A \emph{paired law} $P$ is the joint law of $(U,V)\in\Sd\times\Sd$, with marginals $q_U,q_V$ and mixture marginal $\bar q=\frac12(q_U+q_V)$; $P$ has \emph{equal marginals} if $q_U=q_V=:q$. We measure misalignment by
\begin{equation}\label{eq:delta}
\Delta(P)=\tfrac14\,\E\|U-V\|^2=\tfrac12\big(1-\E\,U^\top V\big)\in[0,1].
\end{equation}

\paragraph{The shared loss.} Given $(U,V)\sim P$, let
\begin{equation}\label{eq:construction}
C=\tfrac12(U+V),\qquad Z=RC,\qquad R\sim\chi_d,
\end{equation}
with $R$ independent of $(U,V)$, and write $\varphi_Z(\omega)=\E\,e^{i\omega^\top Z}$. SG matches the law of $Z$ to $\mathcal N(0,I_d)$ through
\begin{equation}\label{eq:J}
\J(P)=\E_{a\sim\sigma,\,t\sim\mathcal N(0,1)}\big|\varphi_Z(ta)-e^{-t^2/2}\big|^2 ,
\end{equation}
where $e^{-t^2/2}$ is the standard Gaussian characteristic function at $ta$ ($\|a\|=1$). This is the sliced Epps--Pulley statistic of SIGReg \citep{epps1983test,balestriero2025lejepa}, applied to a statistic of the pair instead of to each view; equivalently, $\J$ is the squared MMD \citep{gretton2012kernel} between the law of $Z$ and $\mathcal N(0,I_d)$ for the kernel ${}_1F_1(\frac12;\frac d2;-\|x-y\|^2/2)$ \citep[Thm.~7]{zimmermann2025kerjepa}. If $U=V\sim\sigma$, then $Z=RU\sim\mathcal N(0,I_d)$; otherwise $\|C\|<1$ on disagreeing pairs. The centroid is not renormalized since its norm carries the alignment signal. SG itself is defined for arbitrary paired laws; equal marginals are needed only for the comparison with InfoNCE.

\paragraph{Population InfoNCE.} Let $P$ have equal marginals $q$ and let $\beta=1/\tau>0$. InfoNCE with $N$ negatives, minus $\log N$, converges as $N\to\infty$ to \citep{wang2020understanding}
\begin{equation}\label{eq:infonce}
\Lb(P)=-\beta\,\E\,U^\top V+\E_{U\sim q}\log\E_{W\sim q}e^{\beta U^\top W}.
\end{equation}
With $\kappa_d(\beta)=\E_{W\sim\sigma}e^{\beta u^\top W}$, which does not depend on $u\in\Sd$, and $\mathcal L^\star_\beta=-\beta+\log\kappa_d(\beta)$, the excess separates into a misalignment cost and a uniformity excess (\cref{lem:decomp}):
\begin{equation}\label{eq:infonce-decomp}
\Lb(P)-\mathcal L^\star_\beta=2\beta\Delta(P)+\Ub(q),
\end{equation}
where $\Ub(q)=\E_{U\sim q}\log\E_{W\sim q}e^{\beta U^\top W}-\log\kappa_d(\beta)\ge0$, with equality only at $q=\sigma$ (\cref{lem:unif}).
Our question is whether the single discrepancy $\J(P)$ controls both terms. For NT-Xent, whose negatives are pooled from both views, the population limit is $\Lb(\tilde P)$, where $\tilde P$ swaps the coordinates with probability $\frac12$; $\tilde P$ has equal marginals $\bar q$ and the same $\Delta$ and $\J$, so equal marginals suffice for this symmetrized comparison.

\begin{figure*}[!t]
\centering
\includegraphics[width=\textwidth]{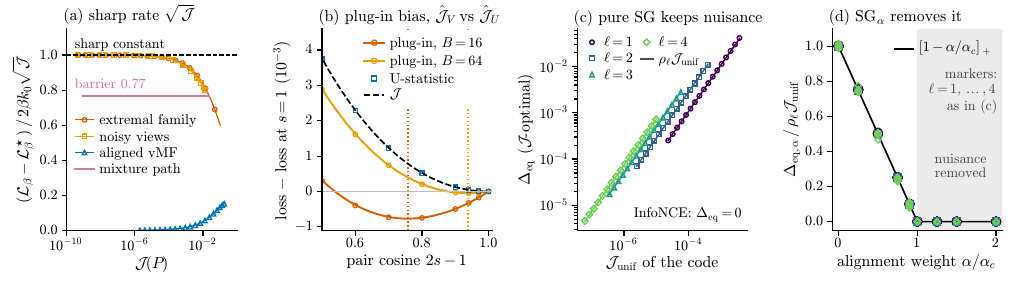}
\caption{\textbf{Theory on controlled families} (\cref{app:exp-details}). (a) InfoNCE excess relative to the leading term $2\beta k_0\sqrt\J$ of \cref{thm:infonce} ($d=16$, $\beta=2$): misaligned families attain the sharp constant, the mixture path of \cref{thm:barrier} stays at $0.77$, and the aligned von Mises--Fisher (vMF) path has excess $\Theta(\J)$. (b) The plug-in estimator is minimized away from alignment (dotted), whereas the U-statistic follows $\J$ (extremal family, $d=16$; markers: Monte Carlo). (c) Along the isotropic channel, pure SG's optimal misalignment for residual non-uniformity of degree $\ell$ follows $\rho_\ell\J_{\rm unif}$ (\cref{thm:slaving}, $d=8$; $\rho_\ell$ as after \eqref{eq:balance}); InfoNCE's is zero. (d) With $\J_\alpha$, the rescaled misalignment collapses onto $[1-\alpha/\alpha_c]_+$, where $\alpha_c=g$ is the channel's gain.}
\label{fig:theory}
\end{figure*}

\section{Population Guarantees for SG}\label{sec:theory}

Because view disagreement shortens the centroid (\cref{sec:setup}), we can identify the exact population minimizers of SG and obtain a sharp comparison with population InfoNCE, at a rate that no Hilbertian pair discrepancy with InfoNCE's target can improve. Once misalignment is charged linearly, a spectral condition decides which uniformity tests give a linear certificate. Proofs are in \cref{app:proofs}.

\subsection{Alignment from the Centroid Radius}\label{sec:alignment}\label{sec:sg}

Let $H=\frac12(U-V)$ and $Q=\|H\|^2$. Then $\|C\|^2+Q=1$ and $\E Q=\Delta(P)$, so disagreement appears as a deficit in the centroid radius. For a fixed centroid, averaging over the uniform direction $a$ and the radius turns $Ra$ into a standard Gaussian vector, hence
\begin{equation}\label{eq:radial-cf}
\E_{a\sim\sigma}\varphi_Z(ta)=\E\,e^{-t^2\|C\|^2/2}=e^{-t^2/2}\,\E\,e^{t^2Q/2}.
\end{equation}

\begin{lemma}[Alignment control]\label{lem:align}
Let $j(\Delta)=(3-2\Delta)^{-1/2}-2(3-\Delta)^{-1/2}+3^{-1/2}$, which is increasing on $[0,1]$. For every paired law $P$,
\[
\J(P)\ \ge\ j\big(\Delta(P)\big)\ \ge\ \frac{\Delta(P)^2}{12\sqrt3},
\]
hence $\Delta(P)\le k_0\sqrt{\J(P)}$ with $k_0=2\cdot3^{3/4}\approx4.56$.
The first inequality is an equality on an extremal family of pairs at a fixed angle about a uniform centroid direction (\cref{sec:infonce-comparison}).
\end{lemma}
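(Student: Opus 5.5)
The plan is to reduce $\J$ to a one-dimensional radial quantity with two Jensen steps, and then evaluate the resulting Gaussian integrals in closed form.

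\textbf{Averaging over directions.} For each fixed $t$, the map $x\mapsto|x-e^{-t^2/2}|^2$ is convex, and $e^{-t^2/2}$ does not depend on $a$. Jensen over $a\sim\sigma$ therefore gives
\[
\J(P)\ \ge\ \E_t\Big|\E_{a}\varphi_Z(ta)-e^{-t^2/2}\Big|^2 .
\]
Substituting \eqref{eq:radial-cf}, the right-hand side equals $\E_t\, e^{-t^2}\big(\E\, e^{t^2Q/2}-1\big)^2$.

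\textbf{Averaging over $Q$.} Since $e^{sQ}$ is convex in $Q$ and $s=t^2/2\ge0$, Jensen gives $\E e^{sQ}\ge e^{s\E Q}=e^{s\Delta}\ge1$. The quantity inside the square is therefore nonnegative and at least $e^{t^2\Delta/2}-1$, which yields
\[
\J(P)\ \ge\ \E_t\Big[e^{-t^2}\big(e^{t^2\Delta/2}-1\big)^2\Big].
\]
Expand the square and use $\E_{t\sim\mathcal N(0,1)}e^{-ct^2}=(1+2c)^{-1/2}$ for $c>-\tfrac12$. The three exponents are $c=1-\Delta$, $c=1-\Delta/2$ and $c=1$, all positive because $\Delta\le1$. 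This produces exactly
\[
(3-2\Delta)^{-1/2}-2(3-\Delta)^{-1/2}+3^{-1/2}=j(\Delta).
\]

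\textbf{Equality on the extremal family.} Both Jensen steps are tight when two conditions hold.
\begin{itemize}
\item The law of $Z$ is rotation invariant, so $\varphi_Z(ta)$ does not depend on $a$. This follows when the centroid direction is uniform and independent of $\|C\|$.
\item $Q$ is almost surely constant, equal to $\Delta$. This is the case of pairs at a fixed angle.
\end{itemize}
This gives the equality claim for the family described in \cref{sec:infonce-comparison}.

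\textbf{Properties of $j$.} Differentiating,
\[
j'(\Delta)=(3-2\Delta)^{-3/2}-(3-\Delta)^{-3/2}\ge0,
\]
since $3-2\Delta\le3-\Delta$. So $j$ is increasing, with $j(0)=j'(0)=0$. Differentiating again,
\[
j''(\Delta)=3(3-2\Delta)^{-5/2}-\tfrac32(3-\Delta)^{-5/2}.
\]
Bounding the first term below by $3(3-\Delta)^{-5/2}$ gives $j''\ge\tfrac32(3-\Delta)^{-5/2}\ge\tfrac32\cdot3^{-5/2}=1/(6\sqrt3)$ on $[0,1]$. Taylor's theorem with remainder then gives $j(\Delta)\ge\Delta^2/(12\sqrt3)$. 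Inverting,
\[
\Delta\le\sqrt{12\sqrt3\,\J}=2\cdot3^{3/4}\sqrt{\J}.
\]

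\textbf{Main obstacle.} The only delicate point is the sign condition in the second Jensen step: squaring preserves the inequality only because $\E e^{sQ}-1\ge0$. This holds since $Q\ge0$, so it needs to be stated explicitly. The rest is routine Gaussian calculus.
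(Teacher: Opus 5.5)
Your proof is correct and follows the paper's argument. You use the same two Jensen steps, over the direction $a$ and over $Q$, with the sign check $e^{t^2\Delta/2}-1\ge0$ justifying the squaring, and the same Gaussian integrals to reach $j(\Delta)$; equality holds on the extremal family for the same reasons ($Q\equiv\Delta$, and $\varphi_Z(ta)$ does not depend on $a$). The only difference is minor: you obtain $j(\Delta)\ge\Delta^2/(12\sqrt3)$ from $j(0)=j'(0)=0$ and $j''\ge1/(6\sqrt3)$ on $[0,1]$, whereas the paper inserts $e^x-1\ge x$ under the integral and uses $\E\, t^4e^{-t^2}=3^{-3/2}$; both are valid.
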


\emph{Proof sketch.} Jensen's inequality over $a$ inside the modulus, \eqref{eq:radial-cf}, and convexity of $x\mapsto e^{t^2x/2}$ with $\E Q=\Delta$ (both sides are $\ge1$, so squaring preserves the order) give
\begin{align*}
\J(P)&\ge\E_t\big[e^{-t^2}\big(\E e^{t^2Q/2}-1\big)^2\big]\\
&\ge\E_t\big[e^{-t^2}\big(e^{t^2\Delta/2}-1\big)^2\big]=j(\Delta)\\
&\ge\tfrac{\Delta^2}{4}\,\E_t\big[t^4e^{-t^2}\big]=\tfrac{\Delta^2}{12\sqrt3}.
\end{align*}
Since $j(\Delta)=\Delta^2/(12\sqrt3)+O(\Delta^3)$, the alignment floor rises only quadratically, and on the extremal family, which attains it, so does $\J$; this is the source of the square root below.

\subsection{Exact Population Minimizers}\label{sec:minimizers}\label{sec:ideal}

\begin{proposition}[Exact population minimizers]\label{prop:sg-minimizers}
For every paired law $P$, $\J(P)=0$ if and only if $U=V$ almost surely and $U\sim\sigma$.
\end{proposition}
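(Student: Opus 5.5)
The plan is to prove the two directions separately. The reverse direction is a direct computation; the forward direction combines \cref{lem:align} with injectivity of the characteristic function.

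\emph{If direction.} Suppose $U=V$ almost surely and $U\sim\sigma$. Then $C=U$, so $Z=RU$ with $R\sim\chi_d$ independent of $U$. A uniform direction times an independent $\chi_d$ radius is exactly $\mathcal N(0,I_d)$, by the polar decomposition of the standard Gaussian. Hence $\varphi_Z(ta)=e^{-t^2/2}$ for every $t$ and every $a$, and $\J(P)=0$.

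\emph{Only if direction, alignment.} Suppose $\J(P)=0$. By \cref{lem:align}, $\Delta(P)\le k_0\sqrt{\J(P)}=0$, so $\E\|U-V\|^2=0$ and therefore $U=V$ almost surely. Then $C=U$ and $Z=RU$ almost surely.

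\emph{Only if direction, uniformity.} Since $\J(P)=0$, the integrand $|\varphi_Z(ta)-e^{-t^2/2}|^2$ vanishes for $(\sigma\otimes\mathcal N(0,1))$-almost every $(a,t)$. The map $(a,t)\mapsto ta$ sends this product measure to a law on $\R^d$ with an everywhere positive density away from the origin. Both $\varphi_Z$ and the Gaussian characteristic function are continuous. So they agree on a dense set, hence on all of $\R^d$, and $Z\sim\mathcal N(0,I_d)$ by uniqueness of characteristic functions.

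It remains to recover the law of $U$ from the law of $Z=RU$. Since $\|Z\|=R$ and $Z/\|Z\|=U$ (with $R>0$ almost surely), the direction $U=Z/\|Z\|$ is a measurable function of $Z$. For a standard Gaussian this direction is uniform on $\Sd$. Therefore $U\sim\sigma$. This is simpler than deconvolving the radial mixture, because the radius and the direction of $Z$ are read off directly.

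\emph{Main obstacle.} The only delicate step is passing from the sliced statistic vanishing almost everywhere to equality of characteristic functions everywhere. The argument uses continuity together with full support of $ta$ under the slicing measure. Equivalently, one can invoke the fact that $\J$ is a squared MMD with a characteristic kernel \citep{zimmermann2025kerjepa}, so $\J(P)=0$ forces $Z\sim\mathcal N(0,I_d)$ directly.
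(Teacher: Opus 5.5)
Your proposal is correct and follows the paper's proof essentially step for step. \cref{lem:align} forces $\Delta=0$ and hence $U=V$ almost surely; continuity plus full support of the slicing measure upgrades almost-everywhere vanishing to $\varphi_Z(\omega)=e^{-\|\omega\|^2/2}$ on all of $\R^d$; and $U=Z/\|Z\|$ is then uniform by the Gaussian polar decomposition. The only cosmetic difference is that you push the slicing measure forward to $\R^d$, whereas the paper argues with full support on $\Sd\times\R$ directly.
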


The alignment bound already forces agreement; the Gaussian target then determines the distribution of their common direction.

\emph{Proof.} If $\J=0$, \cref{lem:align} gives $U=V$ almost surely. Since $\omega=ta$ covers $\R^d$ and characteristic functions are continuous, $\varphi_Z(\omega)=e^{-\|\omega\|^2/2}$ for all $\omega$, so $Z=RU\sim\mathcal N(0,I_d)$; as $R>0$, $U=Z/\|Z\|\sim\sigma$. The converse is the polar decomposition. $\square$

Thus SG and population InfoNCE have the same minimizers, the aligned, uniform laws. For two-view objectives that test each view separately, the target depends on the test. Let $X_1,X_2\in\R^d$ be unnormalized view embeddings. Write $I(X_1,X_2)$ for a nonnegative population invariance loss that vanishes exactly when $X_1=X_2$ almost surely, and $S(X)$ for a nonnegative characteristic population Gaussian discrepancy, integrated over all directions and a full-support frequency law, that vanishes exactly when $X\sim\mathcal N(0,I_d)$. By population VICReg we mean a positive weighted sum of the two-view invariance loss, coordinatewise variance hinges with target standard deviation one, and squared off-diagonal covariance penalties, all evaluated at population moments.

\begin{proposition}[Directional targets]\label{prop:ideal}\label{thm:ident}
\leavevmode
\begin{enumerate}[label=(\roman*),leftmargin=*,itemsep=1pt,topsep=2pt]
\item For $\lambda\in(0,1)$, the idealized population objective $(1{-}\lambda)\,I(X_1,X_2)+\frac\lambda2\{S(X_1)+S(X_2)\}$ is zero if and only if $X_1=X_2$ almost surely and the common embedding is $\mathcal N(0,I_d)$. Its normalized directions therefore have the aligned, uniform target of \cref{prop:sg-minimizers}.
\item Population VICReg's zero set also contains aligned embeddings uniform on the cross-polytope $\{\pm\sqrt d\,e_i:i\le d\}$. These embeddings have mean zero and covariance $I_d$, while their directions are non-uniform.
\end{enumerate}
\end{proposition}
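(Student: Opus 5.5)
For (i) I will use only nonnegativity and the two vanishing properties. Each of the three terms is nonnegative, and the weights $1-\lambda$ and $\lambda/2$ are strictly positive when $\lambda\in(0,1)$. So the objective is zero if and only if $I(X_1,X_2)=0$, $S(X_1)=0$ and $S(X_2)=0$ all hold. By the stated properties of $I$ and $S$, this means $X_1=X_2$ almost surely and $X_1\sim\mathcal N(0,I_d)$. The converse is immediate: each term vanishes at such a pair.

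For the directional claim I will use the polar decomposition, as in the proof of \cref{prop:sg-minimizers}. If $X\sim\mathcal N(0,I_d)$, then $X\neq0$ almost surely and $X/\|X\|\sim\sigma$. Hence the normalized directions $U=X_1/\|X_1\|$ and $V=X_2/\|X_2\|$ satisfy $U=V\sim\sigma$ almost surely, which is the aligned, uniform target.

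For (ii) I will take $X_1=X_2=X$, with $X$ uniform on the $2d$ points $\{\pm\sqrt d\,e_i\}$. The invariance term vanishes because the two views are equal. The remaining work is the moment computation.
\begin{itemize}
\item The mean is zero by the $\pm$ symmetry.
\item For the diagonal of the covariance, coordinate $k$ equals $\pm\sqrt d$ with total probability $2/(2d)=1/d$ and zero otherwise. So $\E X_k^2=d\cdot(1/d)=1$, each coordinate has standard deviation exactly one, and the variance hinge $\max(0,1-\mathrm{sd})$ is zero.
\item For the off-diagonal entries, only one coordinate of $X$ is nonzero at a time, so $X_kX_l=0$ for $k\neq l$. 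Every off-diagonal covariance entry is therefore zero.
\end{itemize}
Together these show the covariance is $I_d$ and every VICReg term vanishes, so this law lies in the zero set.

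It remains to show that the directions are non-uniform. The direction $X/\|X\|$ is uniform on the $2d$ points $\{\pm e_i\}$. This is a discrete law, and $\sigma$ is non-atomic for $d\ge2$, so the two laws differ.

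I expect no real obstacle here. The only point that needs care is (i): there I should make explicit that a characteristic discrepancy "vanishes exactly" at the Gaussian, which is a hypothesis on $S$ and not something to prove. I would also remark that the cross-polytope law has $\E\|X\|^2=d$, matching the Gaussian, so second moments cannot tell the two apart.
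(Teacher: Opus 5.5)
Your proposal is correct and follows the paper's proof essentially step for step. In (i) you use nonnegativity, the strictly positive weights and the defining vanishing properties of $I$ and $S$, followed by the Gaussian polar decomposition; in (ii) you check that the cross-polytope law has mean zero and covariance $I_d$, so every VICReg term vanishes, while its directions have finite support and hence differ from $\sigma$, and your explicit moment computations fill in what the paper states in one line.
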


Thus an idealized population LeJEPA objective (exact invariance, no predictor) has the same normalized target as SG and InfoNCE, whereas population VICReg, which constrains only the first two moments, also admits aligned embeddings with non-uniform directions. \Cref{sec:experiments} finds the same contrast in trained encoders.

\subsection{A Sharp Comparison with Population InfoNCE}\label{sec:infonce-comparison}\label{sec:bridge}

Sharing minimizers gives only qualitative agreement. Under equal marginals, SG also controls the excess population InfoNCE loss.

\begin{theorem}[Sharp comparison with population InfoNCE]\label{thm:infonce}\label{thm:infonce-full}
Let $P$ have equal marginals and $\beta>0$. Then
\begin{equation}\label{eq:main}
0\le\Lb(P)-\mathcal L^\star_\beta\le2\beta k_0\sqrt{\J(P)}+\log\big(1+\bar c_{d,\beta}\J(P)\big),
\end{equation}
where $\bar c_{d,\beta}<\infty$ depends only on $d$ and $\beta$. Moreover, there is a family of paired laws with uniform marginals such that $\J(P)\to0$ and
\begin{equation}\label{eq:sharp-leading}
\Lb(P)-\mathcal L^\star_\beta=2\beta k_0\sqrt{\J(P)}+o\big(\sqrt{\J(P)}\big).
\end{equation}
Hence both the square-root order and the dimension-free leading constant $2\beta k_0$ are sharp.
\end{theorem}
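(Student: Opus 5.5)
The argument splits along the decomposition \eqref{eq:infonce-decomp}, $\Lb(P)-\mathcal L^\star_\beta=2\beta\Delta(P)+\Ub(q)$. Nonnegativity follows from $\Delta\ge0$ and \cref{lem:unif}. For the upper bound, the misalignment term is handled directly by \cref{lem:align}: $2\beta\Delta(P)\le2\beta k_0\sqrt{\J(P)}$. What remains is to show $\Ub(q)\le\log(1+\bar c_{d,\beta}\J(P))$.

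\textbf{Bounding $\Ub$.} By Jensen over $U\sim q$,
\[
\Ub(q)\le\log\frac{\E_{U,W\sim q}e^{\beta U^\top W}}{\kappa_d(\beta)}=\log\Big(1+\frac{\sum_{\ell\ge1}\lambda_\ell\|\hat q_\ell\|^2}{\kappa_d(\beta)}\Big).
\]
Here $\lambda_\ell\ge0$ are the Funk--Hecke coefficients of $e^{\beta u^\top v}$, and $\hat q_\ell$ are the degree-$\ell$ spherical harmonic projections of $q$. We must therefore bound $\sum_{\ell\ge1}\lambda_\ell\|\hat q_\ell\|^2$ by a constant times $\J(P)$.

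We write $\J$ as an MMD with the rotation-invariant kernel ${}_1F_1(\frac12;\frac d2;-\|x-y\|^2/2)$, integrate out the independent radius, and expand in spherical harmonics of the centroid's law. This expresses $\J$ as a positive combination of harmonic energies of the law of $C$, radially weighted, plus a nonnegative radial term.

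The key difficulty is that $\J$ sees the centroid $C=\frac12(U+V)$, not $q$ itself.

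\textbf{Transferring from $C$ to $q$.} I would test against the smooth functions $f_\ell(x)=\|x\|^\ell Y_\ell(x/\|x\|)$, which are harmonic homogeneous polynomials. Since $\|C\|$ is close to $1$ except on misaligned pairs, we get
\[
|\E Y_\ell(U)-\E f_\ell(C)|\lesssim \ell\,\E\|U-C\|\le \ell\sqrt{\Delta}.
\]
The worry is that this step produces a $\sqrt\Delta$ error, not $O(\J)$. To avoid it, use equal marginals together with the symmetry $U-C=H$, $V-C=-H$. Expanding $f_\ell(C\pm H)$ to second order, the first-order terms cancel in $\frac12(\E Y_\ell(U)+\E Y_\ell(V))=\E Y_\ell(U)$, leaving an error of order $\ell^2\E Q=\ell^2\Delta$.

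Then
\[
\|\hat q_\ell\|\lesssim \|\widehat{(C)}_\ell\|+\ell^2\Delta\lesssim \sqrt{\J}+\ell^2\sqrt{\J},
\]
using $\Delta\le k_0\sqrt\J$. Squaring gives order $\J$ per degree. The rapid decay of $\lambda_\ell$ (like $\beta^\ell/\ell!$ growth of $d$-dependent factors) absorbs the polynomial factors in $\ell$, and the growth of the inverse MMD weights must be checked against this decay. Summing yields $\bar c_{d,\beta}$.

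This transfer step, showing that the uniformity excess of $q$ is $O(\J)$ rather than $O(\sqrt\J)$, is the main obstacle. If the harmonic weights of $\J$ decay too fast relative to $\lambda_\ell$, I would instead split into low degrees $\ell\le L$ and a tail controlled by $\Delta$ and the radius profile.

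\textbf{Sharpness.} Take $C$-direction $\Theta\sim\sigma$ and a fixed half-angle, so that $Q\equiv\Delta$ deterministically. Set $U=\sqrt{1-\Delta}\,\Theta+\sqrt\Delta\,\xi$ and $V=\sqrt{1-\Delta}\,\Theta-\sqrt\Delta\,\xi$, with $\xi$ a uniform unit vector orthogonal to $\Theta$. By rotation invariance the marginals are exactly $\sigma$, so $\Ub=0$.

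Since $Z=R\sqrt{1-\Delta}\,\Theta$ is exactly $\mathcal N(0,(1-\Delta)I)$, both Jensen steps in \cref{lem:align} are equalities, giving $\J=j(\Delta)=\Delta^2/(12\sqrt3)+O(\Delta^3)$. Hence $\Delta=k_0\sqrt\J\,(1+O(\Delta))$, and the excess $2\beta\Delta$ equals $2\beta k_0\sqrt\J+o(\sqrt\J)$, which proves \eqref{eq:sharp-leading}.
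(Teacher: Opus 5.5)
Your proposal is correct and follows the paper's skeleton:
\begin{itemize}
\item the split $2\beta\Delta+\Ub(q)$, with \cref{lem:align} for the misalignment term;
\item Jensen's inequality to reduce $\Ub$ to $\Db^2(q,\sigma)/\kappa_d(\beta)$;
\item an $O(\Delta)$ transfer from the centroid to the marginal, based on the $\pm H$ symmetry;
\item a degree-by-degree comparison with the lifted spectrum;
\item the same extremal family (yours is the paper's with $s=1-\Delta$).
\end{itemize}

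The real difference is how the transfer is done.

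\emph{The paper's route.} It treats all frequencies at once. The identity $\frac12(e^{ix^\top U}+e^{ix^\top V})=e^{ix^\top C}\cos(x^\top H)$ together with $1-\cos y\le y^2/2$ gives $\|\varphi_Z-\varphi_{T_{\bar q}}\|\le\frac32\Delta$. Hence $\Dlift(\bar q,\sigma)\le\sqrt\J+\frac32\Delta$ (\cref{thm:structure}). This buys several things:
\begin{itemize}
\item it needs no derivative bounds on harmonics;
\item it handles the radius automatically;
\item it uses equal marginals only to set $\bar q=q$ (your first-order cancellation likewise comes from $U=C+H$, $V=C-H$ alone);
\item it yields the explicit constant $\bar c_{d,\beta}=(1+\frac32k_0)^2C^\star_{d,\beta}/\kappa_d(\beta)$.
\end{itemize}

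\emph{Your route.} The per-degree Taylor expansion of $f_\ell$ achieves the same second-order cancellation, but with $\ell$-dependent constants: $\sqrt{N_\ell}$ sup norms and Bernstein--Markov factors for $D^2f_\ell$ on the ball. It also needs one more $O(\Delta)$ step that you passed over as ``radially weighted.'' The degree-$\ell$ part of $\J$ is $\E_t\sum_m|\E[h_\ell(t\|C\|)Y_{\ell m}(C/\|C\|)]|^2$, with $h_\ell(tr)=c_\ell(tr)^\ell\hypF(D+\frac\ell2;D+\ell;-\frac{t^2r^2}2)$. This is not a multiple of $\sum_m|\E f_{\ell m}(C)|^2$. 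So you must also replace $\|C\|^2=1-Q$ by $1$ inside $\hypF$, at a cost of at most $\frac12c_\ell|t|^{\ell+2}\sqrt{N_\ell}\,\Delta$ in the vector norm. All of these error factors grow at most geometrically in $\ell$, and the factorial decay of $\lambda^\beta_\ell$ absorbs them. Your route therefore closes, with worse and less explicit constants.

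\emph{The deferred check.} The check you left open is exactly \cref{prop:kernel}. Since $\lambda^{\rm lift}_\ell$ decays only geometrically (\cref{lem:eig-bounds}(ii)), $C^\star_{d,\beta}=\sup_\ell\lambda^\beta_\ell/\lambda^{\rm lift}_\ell\le\sqrt6\,e^{4\beta}$. Your fallback (low degrees plus a tail charged to $\Delta$) could not have replaced this check. On aligned laws $\Delta=0$ and $\J=\Dlift^2(q,\sigma)$, so an unbounded ratio would rule out any $O(\J)$ bound on $\Ub$ (\cref{thm:spectral}(ii)).
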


Moreover, for laws with equal marginals, $\J(P_n)\to0$ if and only if $\Lb(P_n)-\mathcal L^\star_\beta\to0$ (\cref{app:infonce-proof}). Explicitly, $\bar c_{d,\beta}=(1+\frac32k_0)^2C^\star_{d,\beta}/\kappa_d(\beta)$ with the kernel-comparison constant $C^\star_{d,\beta}\le\sqrt6\,e^{4\beta}$ of \cref{prop:kernel}. Unlike $2\beta k_0$, $\bar c_{d,\beta}$ depends on $d$ and grows at least like $e^{\beta/4+o(\beta)}$ as $\beta\to\infty$ at fixed $d$; at $d=8$, $\beta=5$ it is about $4.9\times10^4$. Replacing $2\beta k_0\sqrt\J$ by $2\beta\Delta^\star(\J)$, with $\Delta^\star(\J)=j^{-1}(\J)$ for $\J\le j(1)$ and $1$ otherwise, gives a bound whose first term is exact on the extremal family; aligned laws have $\Delta=0$ and $\J=\Dlift^2(q,\sigma)$, so their excess is $\Ub(q)\le(C^\star_{d,\beta}/\kappa_d(\beta))\,\J$ (\cref{lem:decomp,prop:kernel}; \cref{fig:theory}a).

\emph{Proof sketch.} By \eqref{eq:infonce-decomp}, it suffices to bound $2\beta\Delta$ and $\Ub(q)$. \emph{Alignment:} \cref{lem:align} gives $\Delta\le k_0\sqrt\J$. \emph{Transfer:} let $\Dlift^2(q,\sigma)$, the \emph{lifted} discrepancy, be the SG loss of the aligned law with marginal $q$. Since $\frac12(e^{ix^\top U}+e^{ix^\top V})=e^{ix^\top C}\cos(x^\top H)$, the pair test controls the marginal (\cref{thm:structure}):
\begin{equation}\label{eq:marginal-transfer}
\big|\sqrt{\J(P)}-\Dlift(\bar q,\sigma)\big|\le\tfrac32\,\Delta(P).
\end{equation}
Under equal marginals, $\bar q=q$. \emph{Kernel comparison:} $\Dlift$ is an MMD on the sphere whose zonal kernel, like $e^{\beta u^\top v}$, diagonalizes in spherical harmonics; comparing eigenvalues degree by degree gives $\Db^2\le C^\star_{d,\beta}\Dlift^2$ for the MMD $\Db$ of $e^{\beta u^\top v}$ (\cref{prop:kernel}). \emph{Uniformity:} by Jensen's inequality and $\E_{W\sim\sigma}e^{\beta u^\top W}=\kappa_d(\beta)$, $\E_q\log\E_qe^{\beta U^\top W}\le\log\E_{q\otimes q}e^{\beta U^\top W}=\log(\kappa_d(\beta)+\Db^2(q,\sigma))$, so $\Ub(q)\le\log(1+\Db^2/\kappa_d(\beta))$ (\cref{lem:decomp}).

\paragraph{Sharpness.} Let $(n,h)$ be a Haar-distributed orthonormal pair in $\R^d$ and, for $s\in[0,1]$,
\[
U_s=\sqrt s\,n+\sqrt{1-s}\,h,\qquad V_s=\sqrt s\,n-\sqrt{1-s}\,h .
\]
Both marginals are uniform, $C_s=\sqrt s\,n$ and $\Delta=1-s$, so $Z_s\sim\mathcal N(0,sI_d)$ and $\J=j(\Delta)$, so this \emph{extremal family} attains \cref{lem:align}. Its uniformity excess vanishes, so $\Lb-\mathcal L^\star_\beta=2\beta\Delta$, and $j(\Delta)=\Delta^2/(12\sqrt3)+O(\Delta^3)$ gives \eqref{eq:sharp-leading}. The same family is used in \cref{prop:bias} and \cref{fig:theory}a,b.

\subsection{Optimality of the Square-Root Rate}\label{sec:sqrt-barrier}

The square-root rate is optimal for a broad class of pair discrepancies. Call a pair discrepancy $F$ \emph{Hilbertian} (a squared mean-embedding distance) if there exist a bounded feature map $\Psi$ into a Hilbert space $\mathcal H$, a target $m^\star\in\mathcal H$ and independent auxiliary randomness $\xi$ such that $F(P)=\|\E_P\Psi(U,V,\xi)-m^\star\|_{\mathcal H}^2$. This class contains $\J$, for any radius law and frequency weight, as well as every bounded-kernel squared MMD of a statistic of the pair.

\begin{proposition}[Square-root barrier]\label{thm:barrier}\label{prop:sqrt-barrier}
Let $F$ be Hilbertian with $F(P_0)=0$ for an aligned law $P_0$ with uniform marginals. Let $P_1$ have uniform marginals and $\Delta(P_1)>0$, and let $P_p=(1-p)P_0+pP_1$. Then $F(P_p)=p^2F(P_1)$, while $\Lb(P_p)-\mathcal L^\star_\beta=2\beta p\,\Delta(P_1)$. Hence either $F$ vanishes on a misaligned law, or no bound $\Lb-\mathcal L^\star_\beta\le\psi(F)$ with $\psi(x)=o(\sqrt x)$ holds.
\end{proposition}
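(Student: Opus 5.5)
The plan has three parts: linearity of the mean embedding in $P$, linearity of the InfoNCE excess along the mixture path, and a short contradiction argument.

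\emph{Hilbertian part.} Under a mixture, the feature mean is linear in the law. Since $\xi$ is independent of $(U,V)$ and $\Psi$ is bounded, the Bochner integral $m(P)=\E_P\Psi(U,V,\xi)$ exists and satisfies $m(P_p)=(1-p)m(P_0)+p\,m(P_1)$. The hypothesis $F(P_0)=0$ gives $m(P_0)=m^\star$. Hence $m(P_p)-m^\star=p\,(m(P_1)-m^\star)$, and squaring the norm yields $F(P_p)=p^2F(P_1)$.

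\emph{InfoNCE part.} Both $P_0$ and $P_1$ have marginal $\sigma$, so $P_p$ also has equal marginals $\sigma$. Then \eqref{eq:infonce-decomp} applies with $\Ub(\sigma)=0$ (\cref{lem:unif}, or directly from the definition of $\kappa_d$). The functional $\Delta$ in \eqref{eq:delta} is an expectation, so it is linear in $P$. Because $P_0$ is aligned, $\Delta(P_0)=0$, which gives $\Delta(P_p)=p\,\Delta(P_1)$ and therefore $\Lb(P_p)-\mathcal L^\star_\beta=2\beta p\,\Delta(P_1)$.

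\emph{Dichotomy.} Suppose $F$ does not vanish on misaligned laws, so $F(P_1)>0$ (we may fix any misaligned $P_1$ with uniform marginals). Assume a bound $\Lb-\mathcal L^\star_\beta\le\psi(F)$ holds with $\psi(x)=o(\sqrt x)$. Setting $x_p=p^2F(P_1)\to0$ and substituting both identities gives
\[
2\beta p\,\Delta(P_1)\le\psi\big(p^2F(P_1)\big)=o(p),
\]
which forces $\Delta(P_1)=0$. This contradicts $\Delta(P_1)>0$. The only way to avoid the contradiction is $F(P_1)=0$ for this misaligned $P_1$, i.e.\ $F$ vanishes on a misaligned law.

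The only delicate point is the Hilbert-space step: $m$ must be well defined (Bochner integrability) and must commute with mixtures. Both follow from boundedness of $\Psi$ and independence of $\xi$; every other step is routine.
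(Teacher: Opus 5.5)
Your proposal is correct and follows essentially the same route as the paper's proof. The mean embedding is affine in $P$ with $\E_{P_0}\Psi=m^\star$, which gives $F(P_p)=p^2F(P_1)$; linearity of $\Delta$ together with uniform marginals and \eqref{eq:infonce-decomp} gives the linear excess; and comparing $2\beta p\,\Delta(P_1)$ with $\psi(p^2F(P_1))=o(p)$ yields the dichotomy. You also make explicit two points the paper leaves implicit: Bochner integrability of $\Psi$, and the fact that the independence of $\xi$ lets the mean embedding commute with mixtures.
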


\emph{Proof.} $F(P_0)=0$ gives $\E_{P_0}\Psi=m^\star$, so $\E_{P_p}\Psi-m^\star=p(\E_{P_1}\Psi-m^\star)$ and $F(P_p)=p^2F(P_1)$. The marginals of $P_p$ stay uniform, so $\Ub=0$, and since $\Delta$ is linear in $P$ with $\Delta(P_0)=0$, the excess is $2\beta p\Delta(P_1)$ by \eqref{eq:infonce-decomp}. If $F(P_1)>0$, the excess equals $\big(2\beta\Delta(P_1)/\sqrt{F(P_1)}\big)\sqrt{F(P_p)}$ for all $p$, which no $\psi(x)=o(\sqrt x)$ dominates. $\square$

A mean embedding sees a fraction $p$ of misaligned pairs at order $p^2$, whereas InfoNCE sees it at order $p$; along the mixture path only the coupling changes. Thus no Hilbertian pair discrepancy controls the InfoNCE excess faster than $\sqrt F$, and SG attains this rate with the sharp constant of \cref{thm:infonce}; for SG the order comes from \cref{lem:align}, since $\J$ responds to misalignment only at second order.

\subsection{Which Uniformity Tests Control InfoNCE Linearly}\label{sec:linear}

Charging misalignment directly removes the barrier. With the alignment price $\J_\alpha=\J+\alpha\Delta$ of \cref{sec:alignment-price}, under equal marginals, $\Lb-\mathcal L^\star_\beta\le c_{\beta,\alpha}\J_\alpha$ for every $\alpha>0$, with explicit $c_{\beta,\alpha}$ (\cref{prop:calibrated}). More generally, once misalignment is paid for linearly, a linear certificate needs a test of the marginal that controls InfoNCE's uniformity term $\Ub$ linearly. Which uniformity tests do?

\begin{figure}[!t]
\centering
\includegraphics[width=\columnwidth]{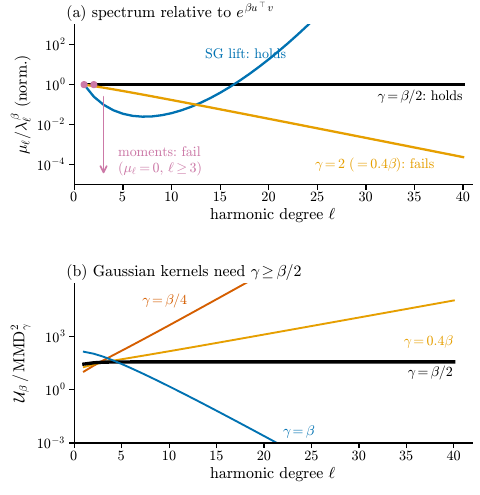}
\caption{\textbf{Condition (U)} ($d=8$, $\beta=5$). (a) Eigenvalues $\mu_\ell$ of zonal squared MMDs relative to those of $e^{\beta u^\top v}$, normalized at $\ell=1$: the ratio is bounded below for SG's lift and Gaussian kernels with $\gamma\ge\beta/2$, and not for $\gamma=2$ or moment matching ($\mu_\ell=0$ for $\ell\ge3$). (b) Exact $\Ub$ over the squared MMD of $e^{-\gamma\|u-v\|^2}$ for a law perturbed in degree $\ell$; bounded in $\ell$ if and only if $\gamma\ge\beta/2$ (\cref{cor:examples}).}
\label{fig:spectral}
\end{figure}

Squared MMDs to $\sigma$ with zonal (rotation-invariant) kernels are classical Sobolev statistics of uniformity; they diagonalize in spherical harmonics, and their weights decide which departures from uniformity they detect \citep{gine1975invariant,garciaportugues2018overview,garciaportugues2023projection}. $\Ub(q)=\int\log\big(\int k_\beta(\cdot,w)\,\dd q(w)\big)\,\dd q-\log\kappa_d(\beta)$, with $k_\beta(u,v)=e^{\beta u^\top v}$, is not of this form, because the logarithm sits inside the outer expectation. Let $\lambda^\beta_\ell$ be the degree-$\ell$ eigenvalues of $k_\beta$, so $\lambda^\beta_0=\kappa_d(\beta)$, and let $\Db$ be the MMD of $k_\beta$. For a zonal kernel $K$ with eigenvalues $(\mu_\ell)$, $\mathrm{MMD}_K^2(q,\sigma)=\sum_{\ell\ge1}\mu_\ell\|\hat q_\ell\|^2$, where $\hat q_\ell$ are the harmonic coefficients of $q$ (\cref{app:spectral}).

\begin{theorem}[InfoNCE uniformity is spectral]\label{thm:spectral}
Fix $d\ge2$, $\beta>0$, and write $\kappa=\kappa_d(\beta)$.
\begin{enumerate}[label=(\roman*),leftmargin=*,itemsep=1pt,topsep=2pt]
\item Let $\delta:=\|\dd q/\dd\sigma-1\|_\infty\le\frac12$ and
\[
\mathcal Q_\beta(q)=\sum_{\ell\ge1}\frac{\lambda^\beta_\ell}{\kappa}\Big(1-\frac{\lambda^\beta_\ell}{2\kappa}\Big)\|\hat q_\ell\|^2 .
\]
Then $\frac12\Db^2/\kappa\le\mathcal Q_\beta\le\Db^2/\kappa$ and $|\Ub(q)-\mathcal Q_\beta(q)|\le\frac32\,\delta\,\Db^2(q,\sigma)/\kappa$.
\item For a continuous positive-definite zonal kernel $K$ with eigenvalues $(\mu_\ell)$, the best constant $c_K$ in $\Ub\le c_K\,\mathrm{MMD}_K^2$ satisfies
\[
\sup_{\ell\ge1}\frac{\lambda^\beta_\ell(1-\lambda^\beta_\ell/2\kappa)}{\kappa\,\mu_\ell}\ \le\ c_K\ \le\ \sup_{\ell\ge1}\frac{\lambda^\beta_\ell}{\kappa\,\mu_\ell}=:\frac{S_K}{\kappa}.
\]
with the convention $x/0=\infty$ for $x>0$.
\end{enumerate}
\end{theorem}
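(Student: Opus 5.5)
The plan is to write $q$ through its density and expand $\Ub$ to second order around $\sigma$; part (ii) then follows from (i) and the Jensen bound already used in \cref{lem:decomp}.

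\textbf{Setup for (i).} Write $\dd q/\dd\sigma=1+g$ with $\|g\|_\infty=\delta\le\frac12$ and $\int g\,\dd\sigma=0$. Expand $g=\sum_{\ell\ge1}g_\ell$ in spherical harmonics, normalized so that $\|g_\ell\|_{L^2(\sigma)}^2=\|\hat q_\ell\|^2$; this makes $\Db^2=\sum_\ell\lambda^\beta_\ell\|\hat q_\ell\|^2$ as in \cref{app:spectral}. By the Funk--Hecke theorem,
\[
\int k_\beta(u,w)\,\dd q(w)=\kappa\big(1+\varepsilon(u)\big),\qquad \varepsilon=\sum_{\ell\ge1}\frac{\lambda^\beta_\ell}{\kappa}\,g_\ell .
\]
Two facts control $\varepsilon$.
\begin{itemize}
\item Pointwise, $|\kappa\varepsilon(u)|=\big|\int k_\beta(u,w)g(w)\,\dd\sigma(w)\big|\le\delta\kappa$, so $|\varepsilon|\le\delta\le\frac12$.
\item Since $k_\beta$ is positive definite, $0\le\lambda^\beta_\ell\le\lambda^\beta_0=\kappa$.
\end{itemize}
Orthogonality then gives three identities and one bound:
\[
\int\varepsilon\,\dd\sigma=0,\qquad \int\varepsilon g\,\dd\sigma=\frac{\Db^2}{\kappa},\qquad \int\varepsilon^2\,\dd\sigma=\sum_\ell\frac{(\lambda^\beta_\ell)^2}{\kappa^2}\|\hat q_\ell\|^2\le\frac{\Db^2}{\kappa}.
\]
The two-sided bound $\frac12\Db^2/\kappa\le\mathcal Q_\beta\le\Db^2/\kappa$ is immediate, because $1-\lambda^\beta_\ell/2\kappa\in[\frac12,1]$.

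\textbf{Expansion for (i).} Write $\Ub(q)=\int\log(1+\varepsilon)\,(1+g)\,\dd\sigma$ and set $\log(1+\varepsilon)=\varepsilon-\frac12\varepsilon^2+r(\varepsilon)$. The terms collect as
\[
\Ub=\underbrace{\int\varepsilon g}_{\Db^2/\kappa}-\tfrac12\int\varepsilon^2\;-\;\tfrac12\int\varepsilon^2 g+\int r(\varepsilon)(1+g).
\]
The first two terms are exactly $\mathcal Q_\beta$. For the error:
\begin{itemize}
\item $\big|\frac12\int\varepsilon^2 g\big|\le\frac12\delta\,\Db^2/\kappa$.
\item For $|\varepsilon|\le\frac12$, the alternating series gives $|r(\varepsilon)|\le\frac{|\varepsilon|^3}{3(1-|\varepsilon|)}\le\frac23|\varepsilon|^3\le\frac23\delta\varepsilon^2$.
\item Since $1+g\le\frac32$, this yields $\big|\int r(1+g)\big|\le\delta\,\Db^2/\kappa$.
\end{itemize}
The total error is at most $\frac32\delta\,\Db^2/\kappa$, as claimed. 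The step I expect to need the most care is this remainder constant: a crude Lagrange bound on $r$ loses a factor and overshoots $\frac32$. The series bound for $|\varepsilon|\le\frac12$, combined with $\int\varepsilon^2\le\Db^2/\kappa$ (which is exactly where $\lambda^\beta_\ell\le\kappa$ enters), is what makes the constants close.

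\textbf{Part (ii).} For the upper bound, the proof sketch of \cref{thm:infonce} (\cref{lem:decomp}) already gives
\[
\Ub\le\log\Big(1+\frac{\Db^2}{\kappa}\Big)\le\frac{\Db^2}{\kappa}=\sum_\ell\frac{\lambda^\beta_\ell}{\kappa\mu_\ell}\,\mu_\ell\|\hat q_\ell\|^2\le\frac{S_K}{\kappa}\,\mathrm{MMD}_K^2 .
\]
Degrees with $\mu_\ell=0<\lambda^\beta_\ell$ make $S_K=\infty$, in which case the bound is vacuous.

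For the lower bound, fix $\ell$ and a real degree-$\ell$ harmonic $Y$ with $\|Y\|_\infty=1$, and take $q_t=(1+tY)\sigma$ with $t\to0$. Then $\delta=t$, $\Db^2=\lambda^\beta_\ell t^2\|Y\|_2^2$, $\mathrm{MMD}_K^2=\mu_\ell t^2\|Y\|_2^2$, and $\mathcal Q_\beta=\frac{\lambda^\beta_\ell}{\kappa}\big(1-\frac{\lambda^\beta_\ell}{2\kappa}\big)t^2\|Y\|_2^2$. Part (i) therefore gives
\[
\frac{\Ub(q_t)}{\mathrm{MMD}_K^2(q_t,\sigma)}\;\to\;\frac{\lambda^\beta_\ell\,(1-\lambda^\beta_\ell/2\kappa)}{\kappa\,\mu_\ell},
\]
because the error term is $O(t)$ relative to the main term. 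When $\mu_\ell=0$, the same family makes the ratio infinite, which matches the convention $x/0=\infty$. Taking the supremum over $\ell$ gives the lower bound on $c_K$.
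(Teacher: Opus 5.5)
Your proposal is correct and follows the paper's proof essentially line for line. It uses the same substitution $h=\mathcal K u/\kappa$ with $\|h\|_\infty\le\delta$, the same remainder bound $|r(x)|\le\frac23|x|^3$ giving $\frac12+1=\frac32$, the Jensen bound of \cref{lem:decomp} for the upper constant, and a single-degree perturbation for the lower one. The one variation is that you get $\Ub(q_t)>0$ when $\mu_\ell=0$ from part (i), whereas the paper uses \cref{lem:unif}; your route also works, because $\lambda^\beta_\ell>0$ by the positive series.

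One justification needs fixing. The bound $\lambda^\beta_\ell\le\kappa$ does not follow from positive definiteness: the kernel $u^\top v$ has $\lambda_0=0<\lambda_1$. It follows instead from $k_\beta>0$ and $|P_\ell|\le1$ via Funk--Hecke, as in \cref{lem:lam-range}. Equivalently, you can apply your own pointwise bound to a degree-$\ell$ harmonic.
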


Part (i) expands the log-partition functional to second order, with a remainder uniform in $d$ and $\beta$; near $\sigma$, $\Ub$ lies within a factor $2$ of the squared MMD with kernel $e^{\beta u^\top v}/\kappa$, up to an $O(\delta)$ error. Part (ii) combines this local necessity with the global bound $\Ub\le\log(1+\Db^2/\kappa)$ (\cref{lem:decomp}). A zonal test controls InfoNCE's uniformity term at temperature $1/\beta$ linearly if and only if its spectrum dominates that of $e^{\beta u^\top v}$, that is, $S_K<\infty$. We call this condition (U).

\begin{corollary}[Which tests satisfy (U)]\label{cor:examples}
For every $d\ge2$ and $\beta>0$:
\begin{enumerate}[label=(\alph*),leftmargin=*,itemsep=1pt,topsep=2pt]
\item The Gaussian lift of SG: $S_K=C^\star_{d,\beta}<\infty$, with $C^\star_{d,\beta}/\kappa_d(\beta)\ge e^{\beta/4+o(\beta)}$ as $\beta\to\infty$ with $d$ fixed.
\item Gaussian kernels $e^{-\gamma\|u-v\|^2}$: $S_K<\infty$ if and only if $\gamma\ge\beta/2$. The same threshold decides linear control of $\Ub$ by the excess of the uniformity loss $\log\E_{q\otimes q}e^{-\gamma\|U-U'\|^2}$ ($U'$ an independent copy) of \citet{wang2020understanding}.
\item Heat kernels, band-limited kernels, and first- and second-moment matching: $S_K=\infty$.
\end{enumerate}
\end{corollary}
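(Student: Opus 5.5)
Each part of the corollary reduces to \cref{thm:spectral}(ii): condition (U) holds exactly when $S_K=\sup_{\ell\ge1}\lambda^\beta_\ell/\mu_\ell<\infty$. It therefore suffices to know the large-$\ell$ asymptotics of the eigenvalues of each kernel and compare them with those of $k_\beta$. The reference spectrum comes from the Funk--Hecke formula. Expanding $e^{\beta t}$ in Gegenbauer polynomials gives $\lambda^\beta_\ell=\Gamma(\tfrac d2)(2/\beta)^{d/2-1}I_{\ell+d/2-1}(\beta)$ up to normalization. These eigenvalues are positive, and by the Bessel series they decay like $(\beta/2)^{\ell}/\Gamma(\ell+d/2)$. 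So (U) is equivalent to $\mu_\ell$ being eventually bounded below by a constant times $(\beta/2)^\ell/\Gamma(\ell+d/2)$, and $\mu_\ell>0$ for all $\ell\ge1$.

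\emph{(b) Gaussian kernels.} On the sphere, $e^{-\gamma\|u-v\|^2}=e^{-2\gamma}e^{2\gamma u^\top v}$, so $\mu_\ell=e^{-2\gamma}\lambda^{2\gamma}_\ell$. This gives
\[
\frac{\lambda^\beta_\ell}{\mu_\ell}=e^{2\gamma}\Big(\frac{2\gamma}{\beta}\Big)^{d/2-1}\frac{I_{\ell+\nu}(\beta)}{I_{\ell+\nu}(2\gamma)},\qquad \nu=\tfrac d2-1 .
\]
The ratio behaves like $(\beta/2\gamma)^{\ell}$. It is bounded exactly when $\beta\le2\gamma$, and at equality it is constant. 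For the statement about the uniformity loss of \citet{wang2020understanding}, note that $\log\E_{q\otimes q}e^{-\gamma\|U-U'\|^2}-\log\E_{\sigma\otimes\sigma}(\cdot)=\log(1+\mathrm{MMD}_K^2/\mu_0)$. This is comparable to $\mathrm{MMD}_K^2$ because the MMD is bounded: $\mathrm{MMD}_K^2\le 2\sup K$. The linear comparison therefore holds iff it holds for $\mathrm{MMD}_K^2$, which is again $S_K<\infty$.

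\emph{(a) SG's lift.} I will write $\Dlift^2$ as a zonal MMD. By \eqref{eq:radial-cf} and the kernel form of $\J$, its kernel is $k(u,v)=\E_R\,{}_1F_1(\tfrac12;\tfrac d2;-R^2\|u-v\|^2/2)$-type. More directly, it is $\E_{t,a}\cos(tRa^\top(u-v))$, which equals a mixture over $s=t^2R^2\ge0$ of Gaussian kernels $e^{-s\|u-v\|^2/4}$-type with a density on $(0,\infty)$ having unbounded support. Its eigenvalues are therefore mixtures $\mu_\ell=\int e^{-2\gamma}\lambda^{2\gamma}_\ell\,\dd\pi(\gamma)$ with $\pi$ putting mass on $\gamma\ge\beta/2$. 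Restricting to $\gamma\in[\beta/2,\beta]$ and using (b) gives $\mu_\ell\ge c\,\lambda^\beta_\ell$, so $S_K=C^\star_{d,\beta}<\infty$, consistent with \cref{prop:kernel}. For the lower bound $C^\star_{d,\beta}/\kappa\ge e^{\beta/4+o(\beta)}$, I will evaluate at $\ell=1$, using $\lambda^\beta_1/\kappa\to1$ as $\beta\to\infty$. I then need $\mu_1\le e^{-\beta/4+o(\beta)}$. This holds because $\mu_1$ is a fixed, $\beta$-independent number, so the bound cannot come from $\ell=1$ alone. Instead I will take $\ell\approx c\beta$ and use a Laplace/saddle-point estimate of $\sup_\ell \lambda^\beta_\ell/\mu_\ell$, with the Gaussian-tail mixing density controlling $\mu_\ell$. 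This asymptotic optimization is the main obstacle. I would first try $\ell=\lfloor\beta/2\rfloor$ with Stirling bounds on $I_{\ell+\nu}$.

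\emph{(c) Failures.} A band-limited kernel and first/second-moment matching have $\mu_\ell=0$ for $\ell$ beyond some degree (for moment matching, $\ell\ge3$), while $\lambda^\beta_\ell>0$, so $S_K=\infty$. Heat kernels have $\mu_\ell=e^{-t\ell(\ell+d-2)}$, which decays faster than $(\beta/2)^\ell/\Gamma(\ell+d/2)\approx e^{-\ell\log\ell}$. Hence $\lambda^\beta_\ell/\mu_\ell\to\infty$ and $S_K=\infty$.
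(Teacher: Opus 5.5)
\textbf{Parts (b) and (c).} These are correct and follow the paper's proof. The paper gets $\lambda^\beta_\ell/\mu_\ell\le e^{2\gamma}$ for every $\ell$ from monotonicity in $x$ of $\sum_k(x/2)^{2k}/(k!\,\Gamma(k+\ell+D))$ rather than from Bessel asymptotics; the difference is cosmetic.

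\textbf{The gap: the growth claim in (a).} You do not prove $C^\star_{d,\beta}/\kappa_d(\beta)\ge e^{\beta/4+o(\beta)}$. Evaluating at $\ell=1$ gives only a bounded ratio, as you notice mid-sentence. What replaces it is a plan that lacks both key estimates.
\begin{enumerate}
\item[(i)] You need an upper bound on $\lambda^{\rm lift}_\ell$ with an exponential rate. The paper uses $|J_\mu(s)|\le(s/2)^\mu/\Gamma(\mu+1)$ to get $|h_\ell(t)|\le c_\ell|t|^\ell$, hence $\lambda^{\rm lift}_\ell\le c_\ell^2(2\ell-1)!!=2^{-\ell+o(\ell)}$ by Stirling.
\item[(ii)] You need a lower bound on $I_{\nu+\ell}(\beta)/I_\nu(\beta)=\lambda^\beta_\ell/\kappa_d(\beta)$ that stays sharp when $\ell\asymp\beta$. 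If your ``Stirling bounds on $I_{\ell+\nu}$'' means estimating the leading series term $(\beta/2)^\ell/\Gamma(\ell+D)$, that is too weak. At $\ell=c\beta$ it loses a factor $e^{-\beta(\sqrt{1+c^2}-c\operatorname{asinh}c-c+c\log 2c)}$, about $e^{-0.28\beta}$ at $c=3/4$. Combined with the bound in (i), it gives exponent $c-c\log c-1\le0$, i.e.\ no exponential growth at all.
\end{enumerate}
The paper instead writes the ratio as $\prod_{k<\ell}I_{\nu+k+1}(\beta)/I_{\nu+k}(\beta)$ and applies Amos' bound to each factor, which gives $\exp(-\beta\int_0^c\operatorname{asinh}x\,\dd x)$. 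Even then the choice of $c$ matters. The resulting exponent $c\log2-c\operatorname{asinh}c+\sqrt{1+c^2}-1$ is about $0.224$ at your $c=1/2$, and reaches $\tfrac14$ only at the maximizer $c=\tfrac34$.

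\textbf{An error in the finiteness part of (a).} $\Dlift$ is built from the feature map $\psi_u(a,t)=\E_R e^{itRa^\top u}$. So $K_{\rm lift}(u,v)=\langle\psi_u,\psi_v\rangle$ carries two independent radii: $K_{\rm lift}(u,v)=\E_{R,R'}\,{}_1F_1(\tfrac12;\tfrac d2;-\|Ru-R'v\|^2/2)$. Your single-radius kernel $\E_R\,{}_1F_1(\tfrac12;\tfrac d2;-R^2\|u-v\|^2/2)=\E_{t,a,R}\cos(tRa^\top(u-v))=(1+\|u-v\|^2)^{-1/2}$ is a different kernel. For instance, it equals $1$ on the diagonal while $K_{\rm lift}(u,u)<1$. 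Any rate read off its mixing density therefore says nothing about $\lambda^{\rm lift}_\ell$.

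\textbf{How to repair it.} The idea survives. Combine $\|Ru-R'v\|^2=(R-R')^2+RR'\|u-v\|^2$ with ${}_1F_1(\tfrac12;\tfrac d2;-z)=\E e^{-zS}$, where $S\sim\mathrm{Beta}(\tfrac12,\tfrac{d-1}2)$. This gives $K_{\rm lift}(u,v)=\E\big[e^{-S(R-R')^2/2}e^{-(SRR'/2)\|u-v\|^2}\big]$. That is a positive sub-probability mixture $\int e^{-\gamma\|u-v\|^2}\,\dd\pi(\gamma)$ whose scale $\gamma=SRR'/2$ has full support on $(0,\infty)$. Your restriction to $\gamma\in[\beta/2,\beta]$ then yields $\lambda^{\rm lift}_\ell\ge\pi([\beta/2,\beta])\,e^{-2\beta}\lambda^\beta_\ell$, hence $C^\star_{d,\beta}<\infty$.

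\textbf{Comparison with the paper.} This repaired route is genuinely different from, and softer than, the paper's. The paper compares the explicit bounds $\lambda^\beta_\ell\le\Gamma(D)(\beta/2)^\ell e^\beta/\Gamma(D+\ell)$ and $\lambda^{\rm lift}_\ell\ge c_\ell^2(2\ell-1)!!\,3^{-\ell-1/2}$ to obtain the explicit constant $\sqrt6\,e^{4\beta}$. Your route gives finiteness but no usable constant.
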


Characteristic Gaussian and heat kernels still identify $\sigma$ when (U) fails, unlike finite-moment tests, but they control $\Ub$ only sublinearly, with a defect that grows with the harmonic degree (\cref{fig:spectral}). The squared MMD with the default $\gamma=2$ kernel of \citet{wang2020understanding}, whose uniformity loss it matches to first order near $\sigma$ up to a constant factor, therefore satisfies (U) exactly for $\tau\ge1/4$. Since $e^{-\gamma\|u-v\|^2}=e^{-2\gamma}e^{2\gamma u^\top v}$, the threshold says that the kernel must be at least as concentrated as InfoNCE's own kernel $e^{\beta u^\top v}$ (inverse temperature $2\gamma\ge\beta$). SG's lift passes at every temperature, with a constant that grows exponentially in $\beta$ (at $d=8$, $\beta=5$ the supremum in $S_K$ is attained at $\ell=7$, with $S_K/\kappa\approx790$), whereas the matched Gaussian kernel $\gamma=\beta/2$ has $S_K/\kappa=e^\beta/\kappa_d(\beta)=O(\beta^{(d-1)/2})$; a Gaussian kernel certifies only $\beta\le2\gamma$, so the lift trades a larger constant for validity at every temperature. With an alignment term, condition (U) is necessary and sufficient for a linear certificate.

\begin{corollary}[Linear certificates]\label{cor:linear}
Let $\alpha>0$ and let $K$ be a continuous positive-definite zonal kernel. Under equal marginals,
\[
\Lb-\mathcal L^\star_\beta\le\max\Big\{\frac{2\beta}{\alpha},\frac{S_K}{\kappa_d(\beta)}\Big\}\big(\alpha\Delta+\mathrm{MMD}^2_K(q,\sigma)\big),
\]
and no bound of this form holds if $S_K=\infty$.
\end{corollary}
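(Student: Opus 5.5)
The upper bound follows from the decomposition \eqref{eq:infonce-decomp} together with part (ii) of \cref{thm:spectral}. Under equal marginals, $\Lb-\mathcal L^\star_\beta=2\beta\Delta+\Ub(q)$. The global argument behind the upper half of \cref{thm:spectral}(ii) gives
\[
\Ub(q)\le\log\bigl(1+\Db^2/\kappa\bigr)\le\Db^2/\kappa .
\]
Expanding in harmonics gives $\Db^2=\sum_{\ell\ge1}\lambda^\beta_\ell\|\hat q_\ell\|^2\le S_K\sum_{\ell\ge1}\mu_\ell\|\hat q_\ell\|^2=S_K\,\mathrm{MMD}_K^2(q,\sigma)$. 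Hence $\Ub\le (S_K/\kappa)\,\mathrm{MMD}_K^2$. Writing $2\beta\Delta=(2\beta/\alpha)\,\alpha\Delta$ and bounding each coefficient by the maximum gives the displayed inequality. When $S_K=\infty$ the inequality is vacuous, so nothing more is needed in that case.

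For the converse, suppose $S_K=\infty$ and that some finite constant $c$ satisfies $\Lb-\mathcal L^\star_\beta\le c(\alpha\Delta+\mathrm{MMD}_K^2)$ for all equal-marginal laws. I would restrict to aligned laws $U=V\sim q$. Then $\Delta=0$, and the assumed bound becomes $\Ub(q)\le c\,\mathrm{MMD}_K^2(q,\sigma)$, so $c_K\le c<\infty$. The lower bound in \cref{thm:spectral}(ii) then gives
\[
\sup_\ell \frac{\lambda^\beta_\ell\bigl(1-\lambda^\beta_\ell/2\kappa\bigr)}{\kappa\mu_\ell}<\infty .
\]
The remaining step is to show that this supremum is infinite whenever $\sup_\ell \lambda^\beta_\ell/(\kappa\mu_\ell)=\infty$. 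This holds because $0<\lambda^\beta_\ell\le\kappa$, so $1-\lambda^\beta_\ell/2\kappa\ge\tfrac12$, and the two suprema agree up to a factor of $2$. The convention $x/0=\infty$ covers the degrees with $\mu_\ell=0$; there $\lambda^\beta_\ell>0$ because the exponential kernel has strictly positive eigenvalues in every degree.

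I expect the main obstacle to be justifying the eigenvalue facts used above, namely $0<\lambda^\beta_\ell\le\lambda^\beta_0=\kappa$ for $\ell\ge1$. I would obtain them from the Funk--Hecke formula applied to $e^{\beta t}$: its Gegenbauer coefficients are positive (modified Bessel functions), and each is bounded by $\int e^{\beta t}\,\dd\nu$ since $|P_\ell|\le1$. Everything else is a direct assembly of \cref{lem:decomp} and \cref{thm:spectral}, with aligned laws showing that the alignment term cannot rescue a test that fails (U).
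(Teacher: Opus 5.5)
Your proposal is correct and follows the paper's own proof: the upper bound comes from \cref{lem:decomp}, $\log(1+x)\le x$ and the degree-by-degree comparison $\Db^2\le S_K\,\mathrm{MMD}_K^2$. The converse restricts to aligned laws, where $\Delta=0$, so a finite constant would contradict the lower bound $c_K\ge\sup_\ell\lambda^\beta_\ell/(2\kappa\mu_\ell)=\infty$ of \cref{thm:spectral}(ii).
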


\Cref{sec:experiments} tests this distinction in trained encoders.

\section{Finite-Batch Estimation}\label{sec:finite}

In training, we estimate $\J$ from a batch of $B$ i.i.d.\ pairs with independent radii, $Z_j=R_jC_j$, and $M$ fresh frequencies $\omega_m=t_ma_m$ with $a_m\sim\sigma$, $t_m\sim\mathcal N(0,1)$. Let $\hat\varphi_m=\frac1B\sum_je^{i\omega_m^\top Z_j}$ and $g_m=e^{-t_m^2/2}$. The plug-in estimator inherits SIGReg's finite-sample bias \citep[Thm.~6]{balestriero2025lejepa}; through the centroid, this bias changes the alignment incentive. Proofs are in \cref{app:finite-proofs}.

\subsection{Plug-in Bias Toward Misalignment}\label{sec:finite-bias}

\begin{proposition}[Plug-in bias favors misalignment]\label{prop:bias}
The plug-in estimator $\hat\J_V=\frac1M\sum_{m}|\hat\varphi_m-g_m|^2$ satisfies
\[
\E\hat\J_V=\J(P)+\frac1B\,\E_{a,t}\big[1-|\varphi_Z(ta)|^2\big].
\]
For every $B\ge2$, no aligned uniform law minimizes $\E\hat\J_V$: along the extremal family, $\E\hat\J_V$ has a unique minimizer, with pair cosine $\E\,U^\top V=1-4/B+O(B^{-2})$.
\end{proposition}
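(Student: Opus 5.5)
Conditional on the frequency $\omega_m=t_ma_m$, I will compute the bias of $|\hat\varphi_m-g_m|^2$ exactly. The batch terms $e^{i\omega^\top Z_j}$ are i.i.d.\ with mean $\varphi_Z(\omega)$ and modulus one, so
\[
\E|\hat\varphi-g|^2=|\varphi_Z(\omega)-g|^2+\mathrm{Var}(\hat\varphi)=|\varphi_Z(\omega)-g|^2+\tfrac1B\big(1-|\varphi_Z(\omega)|^2\big).
\]
Averaging over $m$ and over the fresh frequencies $a\sim\sigma$, $t\sim\mathcal N(0,1)$ gives the identity, by definition \eqref{eq:J} of $\J$. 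This is the same computation as SIGReg's Thm.~6; only the variable $Z=RC$ differs.

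For the misalignment claim I will work on the extremal family of \cref{sec:infonce-comparison}. There $Z_s\sim\mathcal N(0,sI_d)$, so $\varphi_Z(ta)=e^{-st^2/2}$ for every $a$, and $\Delta=1-s$. Writing $x=\Delta$, the expected estimator becomes an explicit one-variable function:
\[
f_B(x)=\E_t\Big[\big(e^{-(1-x)t^2/2}-e^{-t^2/2}\big)^2\Big]+\tfrac1B\,\E_t\big[1-e^{-(1-x)t^2}\big].
\]
The Gaussian integrals are closed form, using $\E_te^{-ct^2/2}=(1+c)^{-1/2}$. The first term is $j(x)$. The second is $\tfrac1B\big(1-(3-2x)^{-1/2}\big)$, which is strictly decreasing in $x$.

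To show the aligned uniform law is not a minimizer, I will use derivatives at $x=0$. We have $j'(0)=0$, since $j(x)=x^2/(12\sqrt3)+O(x^3)$. The bias term has derivative $-\tfrac1B(3-2x)^{-3/2}$, equal to $-\tfrac1B3^{-3/2}<0$ at $x=0$. Hence $f_B'(0)<0$ for every $B$, so moving along the family strictly lowers $\E\hat\J_V$.

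For the location of the minimizer, setting $f_B'(x)=0$ and expanding near $0$ gives
\[
\frac{x}{6\sqrt3}\approx\frac1B\,3^{-3/2},\qquad\text{so}\qquad x\approx\frac2B.
\]
Then $\E\,U^\top V=1-2\Delta=1-4/B+O(B^{-2})$, matching the claim, and I will make this precise with the implicit function theorem in $1/B$.

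The main obstacle is \emph{uniqueness of the critical point} on all of $[0,1]$ for every $B\ge2$, not just asymptotically. I will write
\[
f_B'(x)=(3-2x)^{-3/2}-2(3-x)^{-3/2}\cdot\tfrac12\cdot2+\dots
\]
explicitly and show it changes sign exactly once. The first attempt is to prove that $f_B$ is strictly convex on $[0,1]$, which would suffice together with $f_B'(0)<0$ and a check at $x=1$. The term $-\tfrac1B(3-2x)^{-1/2}$ is concave, so convexity would have to come from $j''$ dominating it. If that fails for small $B$, I will instead factor $f_B'(x)$ as a function of $y=(3-2x)^{-1/2}$ and argue monotonicity of a ratio $j'(x)(3-2x)^{3/2}$, which is increasing from $0$. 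The minimizer is then the unique solution of that ratio equal to $1/B$, or the endpoint $x=1$ if it never gets there; that endpoint case needs to be ruled out or accommodated for $B=2$.
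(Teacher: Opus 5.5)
Your proposal is correct and is essentially the paper's proof. It uses the same variance identity for the bias and the same closed form on the extremal family. Your equation $j'(x)(3-2x)^{3/2}=1-\bigl(\tfrac{3-2x}{3-x}\bigr)^{3/2}=\tfrac1B$ is exactly the paper's condition $r(s)=\tfrac{1+2s}{2+s}=a_B=(1-\tfrac1B)^{2/3}$ with $s=1-x$. The paper solves this explicitly as $s_B=(2a_B-1)/(2-a_B)$ and reads off $1-4/B+O(B^{-2})$ without an implicit-function argument.

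Your loose ends close at once:
\begin{itemize}
\item The correct derivative is $f_B'(x)=(1-\tfrac1B)(3-2x)^{-3/2}-(3-x)^{-3/2}$; in your display the coefficient of $(3-x)^{-3/2}$ should be $-1$, not $-2$.
\item The ratio reaches $1-2^{-3/2}\approx0.65>\tfrac12\ge\tfrac1B$ at $x=1$, so the endpoint case never occurs for $B\ge2$.
\item Your convexity route also works, since $f_B''(x)=3(1-\tfrac1B)(3-2x)^{-5/2}-\tfrac32(3-x)^{-5/2}>0$ on $(0,1]$ for $B\ge2$.
\item Every aligned uniform law has $Z\sim\mathcal N(0,I_d)$, so all of them share the value $f_B(0)$. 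This is what extends your single-family argument to all aligned uniform laws.
\end{itemize}
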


The bias comes from the diagonal terms: $\E\hat\varphi_m=\varphi_Z(\omega_m)$ and $\E|\hat\varphi_m|^2=(1-\frac1B)|\varphi_Z(\omega_m)|^2+\frac1B$, and expanding $|\hat\varphi_m-g_m|^2$ gives the identity. Shortening the centroid concentrates $Z$, which raises $|\varphi_Z|$ and lowers the bias term. On the extremal family, $|\varphi_Z(ta)|^2=e^{-t^2(1-\Delta)}$, so the bias is $\frac1B[1-(3-2\Delta)^{-1/2}]$, with slope $-3^{-3/2}/B$ at $\Delta=0$, against $j'(\Delta)\approx\Delta/(6\sqrt3)$; balancing the two gives $\Delta\approx2/B$, that is, pair cosine $1-4/B$ (\cref{fig:theory}b).

\subsection{An Unbiased U-Statistic Estimator}\label{sec:finite-unbiased}

Since $\E(1-|\hat\varphi_m|^2)=(1-\frac1B)(1-|\varphi_Z(\omega_m)|^2)$, removing the diagonal terms gives
\begin{equation}\label{eq:J-U}
\hat\J_U=\hat\J_V-\frac{1}{M(B-1)}\sum_{m=1}^M\big(1-|\hat\varphi_m|^2\big).
\end{equation}

\begin{proposition}[Unbiased SG estimator]\label{prop:unbiased}
For every paired law $P$, $B\ge2$ and $M\ge1$, $\E\hat\J_U=\J(P)$, where the expectation is over the batch, the radii and the frequencies. $\hat\J_U$ is a U-statistic over distinct pairs $j\ne k$ and is computed from the sums $\hat\varphi_m$ in $O(BdM)$ time and $O(BM+dM)$ memory, without a $B\times B$ similarity matrix.
\end{proposition}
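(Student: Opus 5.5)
The plan is to condition on the frequencies, use the diagonal-term moment identity behind \cref{prop:bias} to show that the correction removes exactly the bias, and then read off the U-statistic form and the cost from the formula \eqref{eq:J-U}.

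\emph{Unbiasedness.} Fix the frequencies $\omega_m=t_ma_m$. The summands $e^{i\omega_m^\top Z_j}$ are i.i.d. over $j$, since pairs and radii are drawn independently. Hence $\E\hat\varphi_m=\varphi_Z(\omega_m)$ and
\[
\E|\hat\varphi_m|^2=\tfrac1{B^2}\textstyle\sum_{j,k}\E e^{i\omega_m^\top(Z_j-Z_k)}=\tfrac1B+\big(1-\tfrac1B\big)|\varphi_Z(\omega_m)|^2 ,
\]
because the $B$ diagonal terms equal $1$ and the off-diagonal terms factor by independence. Expanding $|\hat\varphi_m-g_m|^2=|\hat\varphi_m|^2-2g_m\,\mathrm{Re}\,\hat\varphi_m+g_m^2$, with $g_m$ real, gives
\[
\E\big[|\hat\varphi_m-g_m|^2\mid\omega_m\big]=|\varphi_Z(\omega_m)-g_m|^2+\tfrac1B\big(1-|\varphi_Z(\omega_m)|^2\big).
\]
This is \cref{prop:bias} conditionally on $\omega_m$. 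The same computation gives $\E[1-|\hat\varphi_m|^2\mid\omega_m]=(1-\frac1B)(1-|\varphi_Z(\omega_m)|^2)$. Dividing by $B-1$ turns this into exactly $\frac1B(1-|\varphi_Z|^2)$, so subtracting it cancels the bias term for each $m$. Averaging over $m$ and then over $\omega_m\sim(a,t)$ gives $\E\hat\J_U=\E_{a,t}|\varphi_Z(ta)-e^{-t^2/2}|^2=\J(P)$ by \eqref{eq:J}. Since $B\ge2$, the factor $B-1$ is nonzero.

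\emph{U-statistic form.} I will substitute $|\hat\varphi_m|^2=\frac1{B^2}\big(B+\sum_{j\ne k}e^{i\omega_m^\top(Z_j-Z_k)}\big)$ into \eqref{eq:J-U}. The constant terms collect as $\frac1B-\frac{1}{B-1}\big(1-\frac1B\big)=0$. The off-diagonal coefficient becomes $\frac1{B^2}\big(1+\frac1{B-1}\big)=\frac1{B(B-1)}$. This leaves
\[
\hat\J_U=\frac1M\sum_m\Big[\frac{1}{B(B-1)}\sum_{j\ne k}e^{i\omega_m^\top(Z_j-Z_k)}-\frac{2g_m}{B}\sum_j\cos(\omega_m^\top Z_j)+g_m^2\Big],
\]
which is an order-two U-statistic in the i.i.d. triples $(U_j,V_j,R_j)$, conditionally on the frequencies. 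This expression also gives a second, direct proof of unbiasedness.

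\emph{Complexity.} Forming the matrix of projections $\omega_m^\top Z_j$ costs $O(BdM)$ time. Building $Z_j$ costs $O(Bd)$ and the frequencies $O(dM)$. The sums $\hat\varphi_m$ then cost $O(BM)$, and \eqref{eq:J-U} needs only these $M$ sums. Memory is $O(BM)$ for the projections plus $O(dM)$ for the frequencies, and no $B\times B$ matrix is formed.

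The only delicate step is the bookkeeping of the diagonal terms, namely checking that the correction coefficient $1/(M(B-1))$ cancels the $1/B$ bias exactly rather than to leading order. That cancellation is the identity $\frac1{B-1}\big(1-\frac1B\big)=\frac1B$.
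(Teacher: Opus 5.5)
Your proof is correct and follows the paper's argument. You condition on the frequencies, use $\E|\hat\varphi_m|^2=\frac1B+(1-\frac1B)|\varphi_Z(\omega_m)|^2$ to show that the correction cancels the bias term $\frac1B(1-|\varphi_Z|^2)$ exactly, split $|\hat\varphi_m|^2$ into diagonal and off-diagonal terms, and cost the computation as one $B\times d$ by $d\times M$ product. The only step you leave implicit is rewriting the single-index term and the constant as sums over distinct pairs, e.g.\ $\frac{2g_m}{B}\sum_j\cos(\omega_m^\top Z_j)=\frac{g_m}{B(B-1)}\sum_{j\ne k}\big(\cos\omega_m^\top Z_j+\cos\omega_m^\top Z_k\big)$; this exhibits the paper's symmetric kernel $H(z,z')=\mathrm{Re}\langle\xi(z),\xi(z')\rangle$ with $(\xi(z))_m=M^{-1/2}(e^{i\omega_m^\top z}-g_m)$.
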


The correction is the standard U-statistic; for SG, the bias it removes also shifts the alignment optimum. $\hat\J_U$ can be negative on a batch; clipping it would reintroduce bias, so we optimize it directly. In our experiments ($d=8$, $B=256$), SG$_\alpha$ (\cref{sec:alignment-price}, $\alpha=0.2$) with $M=B/2$ frequencies reaches the CKA of $M=256$ and $M=1024$ in all runs but one vMF seed, while $M=16$ is too few (\cref{app:exp-details}).

\section{Nuisance Preferences and the Alignment Price}\label{sec:nuisance}

SG and InfoNCE share their minimizers but can favor different representations away from the optimum. We exhibit a population failure mode that persists with exact expectations. When the shared code is non-uniform, pure SG can prefer independent per-view nuisance, because the nuisance improves the marginal distributional match faster than SG penalizes the resulting misalignment. Proofs are in \cref{app:slaving,app:gain}.

\subsection{An Isotropic Nuisance Channel}\label{sec:nuisance-channel}

Let $W\sim q$ be a shared code on $\Sd$ and, given $W$, draw $U_\vartheta,V_\vartheta$ independently from the heat kernel on $\Sd$ at time $\vartheta\ge0$. At $\vartheta=0$ the views coincide; for $\vartheta>0$ they carry independent perturbations, with $\Delta(\vartheta)=\frac12(1-e^{-2(d-1)\vartheta})$, and their marginal is smoothed toward $\sigma$. Let $\J_q(\vartheta)$ be the SG loss along this channel and $\J_{\rm unif}(q)=\J_q(0)=\Dlift^2(q,\sigma)$.

\subsection{Why Pure SG Can Prefer Nuisance}\label{sec:nuisance-preference}

The channel creates two competing effects. Independent perturbations shorten the centroid and hurt alignment, but they also smooth a non-uniform marginal toward the uniform distribution. For pure SG, the latter effect dominates to first order.

\begin{theorem}[Nuisance preference of pure SG]\label{thm:slaving}\label{thm:nuisance}
Along the isotropic nuisance channel:
\begin{enumerate}[label=(\roman*),leftmargin=*,itemsep=1pt,topsep=2pt]
\item For every $q\ne\sigma$, the right derivative satisfies $\partial_\vartheta\J_q|_{\vartheta=0}=-\sum_{\ell\ge1}G_\ell\|\hat q_\ell\|^2<0$, where $\hat q_\ell$ is the degree-$\ell$ harmonic component of $q-\sigma$ and $G_\ell>0$ is explicit. Thus a small amount of nuisance strictly decreases $\J$, and, for this fixed code $q$, the aligned law is not a local minimizer of $\J$ along the channel.
\item Let $q_\varepsilon=(1+\varepsilon f)\sigma$ with $f\ne0$ smooth and $\int f\,\dd\sigma=0$. There is $\vartheta_0(d)>0$ such that, for all small $\varepsilon$, $\J_{q_\varepsilon}$ has a unique minimizer on $[0,\vartheta_0]$, which is interior, and its misalignment $\Delta_{\rm eq}$ there satisfies $\Delta_{\rm eq}/\J_{\rm unif}(q_\varepsilon)\to\rho(f)$, an explicit constant with $\rho(f)>6\sqrt3$.
\item In the same setting, $\vartheta=0$ is the unique minimizer of $\Lb-\mathcal L^\star_\beta$ on $[0,\vartheta_0]$ for all small $\varepsilon$.
\end{enumerate}
\end{theorem}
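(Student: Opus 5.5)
The plan is to write $\J$ as a quadratic functional of the law of the centroid and then differentiate along the heat semigroup. Expanding the modulus in \eqref{eq:J} and averaging over the independent radius gives
\[
\J(P)=\E\,\kappa(C,C')-2\,\E\,\kappa_1(C)+\text{const},
\]
where $C'$ is the centroid of an independent copy of the pair and $\kappa,\kappa_1$ are rotation-invariant kernels obtained by averaging $e^{ita^\top(Rc-R'c')}$ and $e^{ita^\top Rc-t^2/2}$ over $a,t,R,R'$. Along the channel, $(U_\vartheta,V_\vartheta)$ is the pair heat semigroup started at $(W,W)$, with generator $\Delta_u+\Delta_v$. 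So the right derivative at $\vartheta=0$ is this generator applied to the integrands in each pair slot, evaluated on the diagonal $u=v=w$ with $w\sim q$.

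For a function $g((u+v)/2)$ at $u=v=w$, I would split $(\Delta_u+\Delta_v)g$ into two parts.
\begin{itemize}
\item A \emph{tangential} part, which equals the spherical heat generator at half speed acting on the direction of $C$. This is the smoothing effect.
\item A \emph{radial} part, equal to $-(d-1)$ times the radial derivative of $g$ at $\|c\|=1$. This is the centroid shortening; note that $\Delta'(0)=d-1$.
\end{itemize}
Both pieces give a functional of $q$ that is at most quadratic in $q-\sigma$. Its value at $q=\sigma$ is $0$: smoothing does nothing there, and by \eqref{eq:radial-cf} the alignment cost $j(\Delta)$ is quadratic in $\Delta$. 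Its linear term in $q-\sigma$ is a rotation-invariant linear functional of a mean-zero signed measure, hence also $0$. What remains is a rotation-invariant quadratic form, which diagonalizes in spherical harmonics and gives $-\sum_\ell G_\ell\|\hat q_\ell\|^2$. The coefficient $G_\ell$ is the degree-$\ell$ eigenvalue of the lifted kernel times the heat rate $\ell(\ell+d-2)$, minus the radial correction, which is the degree-$\ell$ eigenvalue of the radial derivative of $\kappa$.

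\textbf{Main obstacle: proving $G_\ell>0$ for every $\ell\ge1$.} Both contributions are ${}_1F_1$-type Gegenbauer coefficients of the lift (\cref{prop:kernel}). I would first try to write each eigenvalue as a Gaussian integral $\E_t[e^{-t^2}\,\cdot\,]$ of a Bessel-type function of $t$. Then the radial term should reduce, via $\partial_r$ of $e^{-t^2r^2/2}$-weighted harmonics, to $t\partial_t$ of the same integrand. An integration by parts in $t$ should then show that the radial term is strictly smaller than the smoothing term. Once this holds, $q\ne\sigma$ gives $\partial_\vartheta\J_q|_0<0$, which proves (i).

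For (ii), I would expand $\J_{q_\varepsilon}(\vartheta)$ jointly in $\varepsilon$ and $\vartheta\le\vartheta_0$:
\[
\J_{q_\varepsilon}(\vartheta)=\varepsilon^2 A_f(\vartheta)+j(\Delta(\vartheta))+O(\varepsilon^3)+O(\varepsilon^2\vartheta^2).
\]
Here $A_f(0)=\J_{\rm unif}/\varepsilon^2+o(1)$ and $A_f'(0)=-\sum_\ell G_\ell\|\hat f_\ell\|^2$, while $j(\Delta)=\Delta^2/(12\sqrt3)+O(\Delta^3)$ is the uniform-marginal term from \cref{lem:align}, which is exact here because the extremal structure is rotation-invariant to leading order. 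I would choose $\vartheta_0(d)$ so that $\partial_\vartheta^2\J$ is dominated by $j''(\Delta)\Delta'^2>0$ on $[0,\vartheta_0]$ uniformly in small $\varepsilon$. Strict convexity then gives a unique minimizer, and (i) makes it interior. Balancing $-\varepsilon^2\sum_\ell G_\ell\|\hat f_\ell\|^2/(d-1)$ against $\Delta/(6\sqrt3)$ gives
\[
\Delta_{\rm eq}/\J_{\rm unif}\to\rho(f)=6\sqrt3\,\frac{\sum_\ell G_\ell\|\hat f_\ell\|^2}{(d-1)\sum_\ell\mu_\ell\|\hat f_\ell\|^2}.
\]
The inequality $\rho(f)>6\sqrt3$ then reduces to $G_\ell>(d-1)\mu_\ell$ for each $\ell$. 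This is the same spectral comparison as in (i), now with the factor $\ell(\ell+d-2)\ge d-1$ to spare. I would prove both bounds together.

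For (iii), I would use \eqref{eq:infonce-decomp}:
\[
\Lb-\mathcal L^\star_\beta=2\beta\Delta(\vartheta)+\Ub\bigl(q_\varepsilon^{(\vartheta)}\bigr),
\]
where $q_\varepsilon^{(\vartheta)}$ is the heat-smoothed marginal. By \cref{thm:spectral}(i), $\Ub=O(\varepsilon^2)$ with $\vartheta$-derivative $O(\varepsilon^2)$ on $[0,\vartheta_0]$, since smoothing only contracts the harmonic coefficients. On the other hand, $\Delta'(\vartheta)=(d-1)e^{-2(d-1)\vartheta}$ is bounded below on $[0,\vartheta_0]$. So the derivative of the excess is positive for small $\varepsilon$, the excess is strictly increasing, and $\vartheta=0$ is its unique minimizer.
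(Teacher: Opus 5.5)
Your overall architecture (pair generator on the diagonal, harmonic diagonalization, Gaussian integration by parts, convexity in $\vartheta$, and a heat-flow derivative for InfoNCE) is the paper's. However, you have the sign of the radial term wrong, and this breaks your route to $\rho(f)>6\sqrt3$.

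With $F_{a,t}(c)=\E_Re^{itRa^\top c}$, the pair generator at $u=v=w$ is $\frac12\nabla^2_{\mathbb S}F-\frac{d-1}{2}\,w^\top\nabla F$, and $c^\top\nabla F_{a,t}=t\,\partial_tF_{a,t}$. The coefficient is $\frac{d-1}{2}$, not $d-1$: it is $\|C\|^2$, not $\|C\|$, that drops at rate $\Delta'(0)=d-1$. The degree-$\ell$ component of $\varphi_Z-g$ is $i^\ell h_\ell(t)u_\ell(a)$. On it, the radial part contributes $-(d-1)\E_t[t\,h_\ell h_\ell']=-\frac{d-1}{2}\E_t[(t^2-1)h_\ell^2]$ to $\partial_\vartheta\J$ per unit $\|\hat q_\ell\|^2$. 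This contribution is \emph{negative}: for a pure contraction $C=rW$ the degree-$\ell$ energy is $\E_t h_\ell(rt)^2$, which increases in $r$ at $r=1$. So contraction lowers the harmonic part of $\J$ just as smoothing does, and
\[
G_\ell=\ell(\ell+d-2)\lambda^{\rm lift}_\ell+\tfrac{d-1}{2}\,\E_t\big[(t^2-1)h_\ell(t)^2\big]
\]
is a sum of two positive terms. The centroid-shortening cost you have in mind sits in the rotation-invariant part and is $O(\vartheta^2)$, as you note yourself. What must be proved is therefore not that the radial term is smaller than the smoothing term, but that it is positive: $\E_t[t^2h_\ell^2]>\E_t[h_\ell^2]$. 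The paper gets this from Kummer's Euler integral, $h_\ell(t)=c_\ell t^\ell\,\E e^{-St^2/2}$ with $S\sim\mathrm{Beta}(D+\frac\ell2,\frac\ell2)$. This makes the ratio equal to $(2\ell+1)\E A^{-\ell-3/2}/\E A^{-\ell-1/2}$ with $A=1+S+S'\in(1,3)$, hence it lies in $\big(\frac{2\ell+1}3,2\ell+1\big)$.

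The sign is not cosmetic. At $\ell=1$ you have $\ell(\ell+d-2)=d-1$ exactly, so there is nothing to spare. If the radial term really had to be subtracted, you would get $G_1<(d-1)\lambda^{\rm lift}_1$ (your $\mu_1$), and your formula would give $\rho(f)<6\sqrt3$ for every degree-one $f$. With the correct sign, $G_\ell>\ell(\ell+d-2)\lambda^{\rm lift}_\ell\ge(d-1)\lambda^{\rm lift}_\ell$ gives (i) and $\rho(f)>6\sqrt3$ at once.

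There are two smaller points.
\begin{itemize}
\item In (ii), the channel is linear in $q$ and the cross term vanishes by rotation equivariance. So $\J=\mathcal A(\vartheta)+\varepsilon^2\mathcal C(\vartheta)$ exactly, with $\mathcal A$ the loss of the $q=\sigma$ channel, and there is no $O(\varepsilon^3)$ term. However, $\mathcal A(\vartheta)\neq j(\Delta(\vartheta))$ for $\vartheta>0$: along the heat channel $\|C\|$ is random, so the first Jensen step of \cref{lem:align} is strict. Only $\mathcal A(\vartheta)=a_2\vartheta^2+O(\vartheta^3)$ holds, with $a_2=(d-1)^2/(12\sqrt3)$. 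Strict convexity also needs $C^2$ bounds on $\mathcal A$ and $\mathcal C$, which come from the semigroup, not just $O(\cdot)$ remainders.
\item In (iii), \cref{thm:spectral}(i) bounds the value of $\Ub$, not its $\vartheta$-derivative. A value bound leaves an $O(\varepsilon^3)$ slack that the linear gain $2\beta\Delta(\vartheta)$ cannot beat for $\vartheta\lesssim\varepsilon^3$. You need to differentiate $\Ub$ along the heat flow directly; there the vanishing first variation at $\sigma$ makes the derivative $O(\varepsilon^2)$.
\end{itemize}
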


Nuisance helps SG at first order because the centroid direction diffuses at half the speed of the views, which smooths the non-uniformity, and radial contraction reduces the remaining harmonic contributions. Misalignment is cheap, because along the channel the coefficient of $\Delta^2$ is $1/(12\sqrt3)$ to leading order, as in \cref{lem:align}. In the regime of (ii), therefore,
\begin{equation}\label{eq:balance}
\J\approx\J_{\rm unif}-g\,\Delta+\frac{\Delta^2}{12\sqrt3},\qquad g=\frac{\rho(f)\,\J_{\rm unif}}{6\sqrt3},
\end{equation}
where $g=-\dd\J/\dd\Delta|_{\Delta=0}$ is the channel's \emph{gain}. InfoNCE pays $2\beta\Delta$ at first order, while its uniformity improvement is of order $\varepsilon^2$; near uniformity InfoNCE therefore keeps no nuisance. For $f$ of pure degree $\ell$ write $\rho_\ell=\rho(f)$; at $d=8$, $\rho_1,\dots,\rho_4=10.8,28.5,49.5,73.8$; at equal $\J_{\rm unif}$, finer residuals induce more nuisance (\cref{fig:theory}c). The certificate still applies: $\Delta\le k_0\sqrt\J$ bounds the amount of nuisance that pure SG keeps.

\subsection{Adding an Alignment Price}\label{sec:alignment-price}

This mismatch suggests a minimal modification, SG$_\alpha$:
\begin{equation}\label{eq:J-alpha}
\J_\alpha(P)=\J(P)+\alpha\Delta(P),\qquad\alpha\ge0 .
\end{equation}
With \eqref{eq:balance}, $\J_\alpha\approx\J_{\rm unif}+(\alpha-g)\Delta+\Delta^2/(12\sqrt3)$. If $\alpha>g$, infinitesimal nuisance increases the objective and alignment is locally preferred; if $\alpha<g$, the incentive remains. In the regime of (ii), with $\alpha$ of order $\varepsilon^2$, the minimizer is $\Delta=[\rho(f)\J_{\rm unif}-6\sqrt3\,\alpha]_++o(\varepsilon^2)$, so the equilibrium nuisance vanishes for $\alpha\ge g$ (\cref{cor:threshold}, \cref{fig:theory}d). The price keeps SG's minimizers (\cref{prop:calibrated}).

The rule applies beyond the isotropic model. For a family $(P_\vartheta)_{\vartheta\in[0,\vartheta_0]}$ of paired laws with $\Delta(P_0)=0$, along which $\Delta$ and an objective $F$ are right-differentiable at $0$ with $\Delta'(0)>0$, call $g=-(\dd F/\dd\vartheta)/(\dd\Delta/\dd\vartheta)|_0$ the \emph{gain} of the family, and $\alpha>g$ \emph{condition (A)}. LeJEPA minimizes $\lambda S+(1-\lambda)I$ where $S$ is the average of the per-view SIGReg terms and $I$ is an invariance term; its reference implementation multiplies $S$ by the batch size $B$.

\begin{proposition}[Condition (A): gain against price]\label{prop:condA}
Let $g$ be the gain of a family as above.
\begin{enumerate}[label=(\roman*),leftmargin=*,itemsep=1pt,topsep=2pt]
\item If $\alpha>g$, $P_0$ is a strict local minimizer of $F+\alpha\Delta$ along the family; if $\alpha<g$, it is not.
\item For $\lambda S+(1-\lambda)I$ with $\lambda\in(0,1)$, along a family with $I(P_0)=0$, on which $S$ and $I$ are right-differentiable at $0$ with $\dd I/\dd\vartheta|_0>0$, (i) holds with $F$ replaced by $S$, $\Delta$ by $I$, $\alpha$ by $(1-\lambda)/\lambda$, and $g$ by $g_S=-\dd S/\dd I|_0$. Equivalently, for $g_S>0$, $P_0$ is a strict local minimizer if $\lambda<\lambda_c=1/(1+g_S)$ and is not one if $\lambda>\lambda_c$. Multiplying $S$ by $B$ replaces $g_S$ by $Bg_S$.
\end{enumerate}
\end{proposition}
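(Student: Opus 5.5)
The plan is to reduce both parts to one first-order computation along the family, using only right-differentiability at $\vartheta=0$. For (i), set $\Phi(\vartheta)=F(P_\vartheta)+\alpha\Delta(P_\vartheta)$. Since $\Delta(P_0)=0$, the right derivative at $0$ is
\[
\Phi'(0)=F'(0)+\alpha\Delta'(0)=\Delta'(0)\,(\alpha-g),
\]
where $g=-F'(0)/\Delta'(0)$ is the gain. The hypothesis $\Delta'(0)>0$ then fixes the sign of $\Phi'(0)$ as the sign of $\alpha-g$. In the case $\alpha>g$ we have $\Phi'(0)>0$. By the definition of the right derivative, $\Phi(\vartheta)-\Phi(0)=\vartheta\big(\Phi'(0)+o(1)\big)$, so there is $\vartheta_1>0$ with $\Phi(\vartheta)>\Phi(0)$ for all $\vartheta\in(0,\vartheta_1]$. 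Hence $P_0$ is a strict local minimizer along the one-sided family. In the case $\alpha<g$ we have $\Phi'(0)<0$, and the same expansion gives $\Phi(\vartheta)<\Phi(0)$ for all small $\vartheta>0$, so $P_0$ is not a local minimizer. I would note that the boundary case $\alpha=g$ is left open, since it depends on second-order terms such as the $\Delta^2/(12\sqrt3)$ term in \eqref{eq:balance}.

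For (ii), divide the objective by $\lambda>0$, which does not change its minimizers:
\[
\lambda S+(1-\lambda)I=\lambda\Big(S+\tfrac{1-\lambda}{\lambda}\,I\Big).
\]
This is exactly the setting of (i) with $F=S$, $\Delta$ replaced by $I$, and $\alpha=(1-\lambda)/\lambda$. The hypotheses carry over: $I(P_0)=0$, $I'(0)>0$, and $S$ and $I$ are right-differentiable at $0$. The gain becomes $g_S=-S'(0)/I'(0)$, which is the quantity written $-\dd S/\dd I|_0$.

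The condition $(1-\lambda)/\lambda>g_S$ is equivalent, for $\lambda\in(0,1)$, to $1-\lambda>\lambda g_S$, that is, $\lambda<1/(1+g_S)=\lambda_c$. The reverse inequality gives $\lambda>\lambda_c$, and $g_S>0$ ensures $\lambda_c\in(0,1)$. For the batch-size scaling, replacing $S$ by $BS$ multiplies $S'(0)$ by $B$ and leaves $I'(0)$ unchanged, so the gain becomes $Bg_S$ and the threshold becomes $\lambda_c=1/(1+Bg_S)$.

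The argument is elementary; the main point needing care is the one-sided nature of the family. Strict local minimality is claimed only along $\vartheta\in[0,\vartheta_0]$, and only right derivatives are available, so the strict inequality in the case $\alpha>g$ must come directly from the first-order term with a uniform $o(\vartheta)$ remainder. That is all the definition of the right derivative provides, and I would spell this step out explicitly.
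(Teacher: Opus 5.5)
Your proposal is correct and follows essentially the same route as the paper's proof: the right derivative $\Delta'(0)(\alpha-g)$ of $F+\alpha\Delta$ decides (i), dividing $\lambda S+(1-\lambda)I$ by $\lambda>0$ reduces (ii) to (i), and multiplying $S$ by $B$ multiplies $g_S$ by $B$. Your explicit $o(\vartheta)$ argument and the algebra for $\lambda_c$ just spell out what the paper states in one line.
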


At $\alpha=g$, higher-order terms decide. For LeJEPA, the price of misalignment is $(1-\lambda)/\lambda$, set against the gain $Bg_S$ of its batch-scaled regularizer; at fixed per-sample gain, the critical SIGReg weight $\lambda_c$ therefore decreases with $B$. The gain depends on the channel and can be measured on a trained encoder (\cref{sec:experiments}, \cref{fig:slaving}).

\paragraph{Beyond the isotropic channel.} If the two views are conditionally independent given a latent, with common conditional mean $c(\Lambda)W$, $c(\Lambda)\in[0,1]$, for a \emph{mean code} $W\sim q$ on $\Sd$, then $|\sqrt{\J(P)}-\sqrt{\J_{\rm unif}(q)}|\le6.5\,\Delta(P)$. Hence every such family with fixed $q$ that starts at the aligned law $U=V=W$ has gain at most $13\sqrt{\J_{\rm unif}(q)}$ (\cref{prop:centered,cor:centered-gain}), and heat channels whose diffusion time varies with the code point attain this order (\cref{prop:hetero}). Condition (A) thus becomes easier to meet as the code becomes uniform.

\begin{figure*}[!t]
\centering
\includegraphics[width=\textwidth]{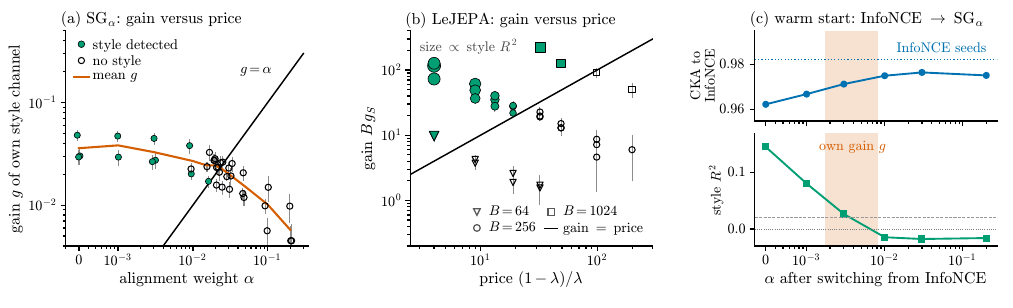}
\caption{\textbf{Gain against price in trained encoders} (content--style model, $d=8$; filled: style detected). (a) SG$_\alpha$: style is detected only when the own-channel gain exceeds $\alpha$. (b) LeJEPA, three batch sizes: style is detected exactly when $Bg_S$ exceeds the price $(1-\lambda)/\lambda$. (c) Warm starts stay close to InfoNCE; the style added by pure SG vanishes once $\alpha$ exceeds their gains (shaded; dashed: detection threshold).}
\label{fig:slaving}
\end{figure*}

\section{Experiments}\label{sec:experiments}\label{sec:latent}

We test \cref{sec:theory,sec:finite,sec:nuisance} in trained encoders. We ask whether SG reaches InfoNCE's representation when the target is the only difference, whether uniformity tests that satisfy (U) reach InfoNCE's uniformity, whether pure SG prefers per-view nuisance, and whether the measured gain decides which SG$_\alpha$ and LeJEPA encoders keep it.

\paragraph{Setup.} We use latent-variable models with known latent content \citep{zimmermann2021cl,vonkugelgen2021content}. In the \emph{content--style} model, a fixed random injective network maps a \emph{content} $c\sim\sigma$ on $\mathbb S^{7}$, shared by the two views, and an independent per-view \emph{style} $s\sim\mathcal N(0,I_4)$ to an observation $x=m(c,s)$. An encoder that recovers $c$ up to a measure-preserving map and ignores $s$ is aligned and uniform, so style is a per-view nuisance channel and the content plays the role of the shared code. In the von Mises--Fisher (\emph{vMF}) model of \citet{zimmermann2021cl} there is no style, and the second view's content is $\tilde c\sim\mathrm{vMF}(c,10)$. We train the same MLP encoder ($d=8$, $B=256$, 8k steps, 3 seeds; \cref{app:exp-details}) with InfoNCE ($\beta=5$), SG and SG$_\alpha$ (with $\hat\J_U$), LeJEPA (SIGReg weight $\lambda$) and VICReg. To separate the objective from the optimization path, we also switch InfoNCE encoders to SG$_\alpha$ halfway through training (warm start). The \emph{gain} of a trained encoder is measured on its own style channel. We rescale the style input by $\eta$ and fit $F-F(0)\approx-g\Delta+c_2\Delta^2$ for small $\eta$, with $F=\hat\J_U$ (for LeJEPA, the per-sample SIGReg term against $I$), averaging over independent repetitions. Style is \emph{detected} when a held-out probe from $(U,c)$ to the style reaches $R^2>0.02$; CKA is linear centered kernel alignment \citep{kornblith2019similarity} to InfoNCE encoders of other seeds.

\begin{table}[t]
\caption{Controlled study ($d=8$, $\beta=5$; mean over 3 seeds). CKA to InfoNCE encoders of other seeds; style: held-out probe $R^2$ (detected if $>0.02$); $\J$ and $\J_{0.2}=\J+0.2\Delta$, in units of $10^{-3}$, are evaluated on held-out pairs for every row, whatever the training loss. Warm starts switch InfoNCE to SG or SG$_{0.2}$ after 4k of 8k steps. Standard deviations: \cref{tab:latent-sd,tab:vmf}.}\label{tab:latent}
\centering\footnotesize
\setlength{\tabcolsep}{2.4pt}
\begin{tabular}{@{}lcccccc@{}}
\toprule
 & vMF & \multicolumn{5}{c}{content--style}\\
\cmidrule(lr){2-2}\cmidrule(l){3-7}
Method & CKA & CKA & style & $\Ub$ & $\J$ & $\J_{0.2}$\\
\midrule
\multicolumn{7}{@{}l}{\emph{from scratch}}\\
InfoNCE & $0.997$ & $0.982$ & $-0.02$ & $0.02$ & $0.17$ & $1.7$\\
SG & $0.938$ & $0.646$ & $0.11$ & $0.05$ & $0.85$ & $20.6$\\
SG$_{0.2}$ & $0.993$ & $0.828$ & $-0.02$ & $0.16$ & $0.98$ & $3.6$\\
LeJEPA, $\lambda=0.01$ & $0.981$ & $0.696$ & $-0.02$ & $0.26$ & $1.57$ & $6.0$\\
VICReg & $0.811$ & $0.597$ & $-0.02$ & $1.38$ & $95$ & $97$\\
\midrule
\multicolumn{7}{@{}l}{\emph{warm start from InfoNCE (4k steps)}}\\
switch point & -- & $0.969$ & $0.03$ & $0.05$ & $1.27$ & $3.9$\\
$\to$ SG & -- & $0.962$ & $0.14$ & $0.02$ & $0.12$ & $4.0$\\
$\to$ SG$_{0.2}$ & -- & $0.975$ & $-0.02$ & $0.04$ & $0.27$ & $2.2$\\
\bottomrule
\end{tabular}

\end{table}

\paragraph{Without per-view nuisance.} With noisy positives and no per-view style (vMF), SG$_{0.2}$ and LeJEPA learn representations close to InfoNCE's: CKA $0.993$ and $0.981$, against $0.997$ between InfoNCE seeds (\cref{tab:latent}), with content $R^2_c$ of $0.99$ and $0.97$ (\cref{tab:vmf}). Pure SG follows at $0.938$ ($0.935$ with $\hat\J_V$, as the bias of \cref{prop:bias} is $O(1/B)$), and VICReg at $0.811$. On content--style, VICReg aligns the views (pair cosine $0.979$, \cref{tab:latent-sd}) but remains far from uniform ($\Ub=1.38$ against $0.02$), in line with its larger target set (\cref{prop:ideal}). LeJEPA and VICReg were selected on small grids (\cref{app:exp-details}), so their rows are not fully tuned.

\begin{figure}[!t]
\centering
\includegraphics[width=\columnwidth]{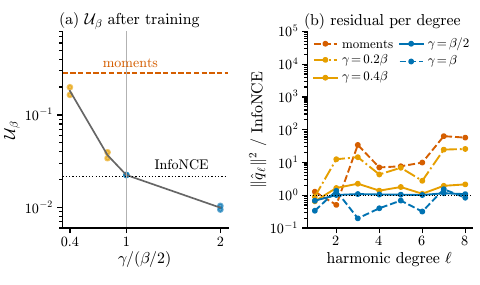}
\caption{\textbf{Condition (U) in trained encoders} (alignment term $+$ uniformity term; content--style model, $d=8$, $\beta=5$). (a) $\Ub$ of encoders trained with the Gaussian uniformity loss at scale $\gamma$ (one point per seed; blue: $\gamma\ge\beta/2$; amber: $\gamma<\beta/2$; gray line: mean over seeds) and mean over 3 seeds for moment matching (dashed) and InfoNCE (dotted). (b) Residual non-uniformity $\|\hat q_\ell\|^2$ per harmonic degree, relative to InfoNCE's (mean over seeds; dotted: equal to InfoNCE).}
\label{fig:utrain}
\end{figure}

\paragraph{Condition (U) in training.} \Cref{sec:linear} separates uniformity tests that permit a linear certificate from those that do not, and we check whether the distinction persists in training. We train the Gaussian uniformity loss of \citet{wang2020understanding}, with alignment term $\frac\beta2\E\|U-V\|^2=2\beta\Delta$, at $\gamma\in\{1,2,2.5,5\}$ (2 seeds), and first- and second-moment matching with alignment term $\Delta$ (3 seeds; \cref{app:exp-details}). All of them align the views (pair cosine $0.97$--$0.99$). With $\gamma\ge\beta/2=2.5$ the uniformity excess reaches InfoNCE's ($\Ub=0.022$ and $0.010$, against $0.021$ for InfoNCE in these runs), at $\gamma=2$ it is $0.037$, and at $\gamma=1$ it is $0.18$. Moment matching, which fails (U) at every temperature, keeps $\Ub=0.29$, and its residual non-uniformity is $7$--$63$ times InfoNCE's at every degree $\ell=3,\dots,8$ (\cref{fig:utrain}).

\paragraph{Pure SG prefers nuisance.} From scratch, pure SG keeps style ($R^2=0.11$; pair cosine $0.80$). The warm start isolates the preference. The half-trained InfoNCE encoders carry weak style ($R^2=0.03$; two of three seeds), which continuing with InfoNCE or with SG$_{0.2}$ removes ($-0.02$); continuing with SG raises it to $0.14$. With the same start and budget, the SG branch reaches the lowest $\J$ in \cref{tab:latent}, $0.12\times10^{-3}$, against $0.27\times10^{-3}$ for the style-free SG$_{0.2}$ branch and $0.17\times10^{-3}$ for fully trained InfoNCE (\cref{tab:latent}), in line with \cref{thm:slaving}(i).

\paragraph{The alignment price.} Across $38$ SG and SG$_\alpha$ encoders, every encoder with detected style has a gain above its price (\cref{fig:slaving}a). Style is detected in some seeds up to $\alpha=0.015$ and in none from $\alpha=0.02$, so the transition lies just below the crossing of the mean gain with $\alpha$ ($\alpha\approx0.023$). Warm starts compare prices from a common representation. Style is still detected at $\alpha=0.003$ (mean $R^2=0.027$) and is removed for $\alpha\ge0.01$, so the transition overlaps the range of the warm-started encoders' own gains, $[0.002,0.008]$ (\cref{fig:slaving}c).

\paragraph{Beyond SG: LeJEPA.} For LeJEPA, \cref{prop:condA}(ii) gives the critical weight $\lambda_c=1/(1+Bg_S)$. Across $34$ encoders and three batch sizes, style is retained by exactly those encoders whose measured gain $Bg_S$ exceeds the price $(1-\lambda)/\lambda$ (\cref{fig:slaving}b). The observed boundary moves from $\lambda\in(0.03,0.05)$ at $B=256$ to $(0.1,0.2)$ at $B=64$ (two seeds) and to $(0.01,0.02)$ at $B=1024$ (one seed). A single weight trades nuisance rejection against uniformity. At $\lambda=0.01$, LeJEPA discards style but has $\Ub=0.26$ (InfoNCE: $0.02$), and at $\lambda=0.1$ it keeps style ($R^2=0.52$; \cref{tab:latent-sd}). The finite-batch bias of the plug-in SIGReg used in training (\cref{prop:bias}) changes no classification (\cref{app:exp-details}).

\paragraph{An optimization gap.} With per-view nuisance, SG$_{0.2}$ from scratch stays farther from InfoNCE (CKA $0.83$) than without nuisance ($0.993$), although it retains no style, so this gap is separate from the preference of \cref{thm:slaving}; sharing InfoNCE's minimizers does not ensure that training reaches InfoNCE's representation. With a shared optimizer, schedule, budget and peak learning rate, InfoNCE training reaches a lower value of the SG$_{0.2}$ objective than SG$_{0.2}$ training itself. In units of $10^{-3}$, $\J_{0.2}=1.7$ for InfoNCE against $3.6$ for SG$_{0.2}$ from scratch; from the common switch point ($3.9$), continuing SG$_{0.2}$ reaches $2.2$ and keeps InfoNCE's representation (CKA $0.975$; InfoNCE seeds $0.982$). Pure SG likewise ends above InfoNCE on its own objective ($\J=0.85$ against $0.17$). More frequencies do not move SG$_{0.2}$ toward InfoNCE (CKA $0.825$ at $M=1024$; \cref{app:exp-details}), and without nuisance the gap nearly closes (vMF, SG$_{0.2}$ against InfoNCE: $\J=1.42$ and $1.37$, $\J_{0.2}=32.0$ and $31.6$; \cref{tab:vmf}). SG$_{0.2}$ training thus stops at representations that its own objective scores worse than InfoNCE's, which suggests that part of the difference lies in optimization.

\section{Discussion}

A single centroid test shares InfoNCE's minimizers and controls its excess at the optimal rate among Hilbertian pair discrepancies, yet away from the optimum it can favor per-view nuisance, as our controlled experiments illustrate. Three practical consequences follow. First, a linear guarantee needs two ingredients, an explicit alignment term and a uniformity test whose spectrum dominates that of $e^{\beta u^\top v}$; Gaussian kernels with $\gamma\ge\beta/2$ and SG's lift qualify and moment matching does not; within the Gaussian family, and against moment matching, the difference is visible in trained encoders. Second, the gain of a nuisance channel can be measured on a trained encoder, and comparing it with the alignment weight checks whether that weight rejects the channel. Third, because LeJEPA's reference implementation scales SIGReg by the batch size, its critical SIGReg weight falls with $B$, so a weight that rejects nuisance at one batch size can admit it at a larger one.

\paragraph{Limitations and next steps.} Our guarantees are population statements (equal marginals, infinitely many negatives), and the nuisance results are local to a channel. The certificate's second term has a constant that grows exponentially in $\beta$; at $d=8$, $\beta=5$ it is about $4.9\times10^4$, so the second term exceeds the leading one for $10^{-6}\lesssim\J\lesssim2\times10^{-2}$, a range that contains the $\J$ of trained encoders. The gains in \cref{sec:experiments} are measured on the trained encoders, so the comparisons with the price check condition (A) at the end of training; predicting retention from a gain measured before training, and testing the rule on natural images and at scale, are next steps, as is identifying the cause of the optimization gap.

{\small
\bibliography{refs}
}

\clearpage
\onecolumn
\appendix
\crefalias{section}{appendix}\crefalias{subsection}{subappendix}\crefalias{subsubsection}{subsubappendix}
\begin{center}{\titlefont Appendix}\end{center}
\vspace{0.5em}

\section{Proofs}\label{app:proofs}

\subsection{Auxiliary statements}\label{app:deferred}

The following statements are cited in the main text; their proofs are given below.

\begin{theorem}[Structure]\label{thm:structure}
For every paired law, $|\sqrt{\J(P)}-\Dlift(\bar q,\sigma)|\le\tfrac32\Delta(P)$.
\end{theorem}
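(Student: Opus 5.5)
The approach is to realize $\sqrt{\J(P)}$ and $\Dlift(\bar q,\sigma)$ as norms in one Hilbert space and then use the triangle inequality. Write $L^2(\mu)$ for square-integrable functions of $(a,t)$ under $\mu=\sigma\otimes\mathcal N(0,1)$. By \eqref{eq:J},
\[
\sqrt{\J(P)}=\big\|\varphi_Z(ta)-e^{-t^2/2}\big\|_{L^2(\mu)} .
\]
Similarly, $\Dlift(\bar q,\sigma)=\|\varphi_{\bar Z}(ta)-e^{-t^2/2}\|_{L^2(\mu)}$, where $\bar Z=R\bar U$ with $\bar U\sim\bar q$; this is the SG loss of the aligned law with marginal $\bar q$. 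The reverse triangle inequality then gives
\[
\big|\sqrt{\J(P)}-\Dlift(\bar q,\sigma)\big|\le\|\varphi_Z-\varphi_{\bar Z}\|_{L^2(\mu)} ,
\]
so it suffices to show $\|\varphi_Z-\varphi_{\bar Z}\|_{L^2(\mu)}\le\frac32\Delta(P)$.

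Next, condition on $R=r$ and write $x=rta$. Since $\bar q$ is the average of the two marginals,
\[
\varphi_{\bar Z}(ta)=\E_R\,\E\big[\tfrac12(e^{ix^\top U}+e^{ix^\top V})\big]=\E_R\,\E\big[e^{ix^\top C}\cos(x^\top H)\big],
\]
using the identity in the proof sketch of \cref{thm:infonce}. Meanwhile $\varphi_Z(ta)=\E_R\,\E[e^{ix^\top C}]$. The difference is therefore $\E[e^{ix^\top C}(1-\cos(x^\top H))]$, with the expectation over $R$ and the pair. Its modulus is at most $\E[1-\cos(x^\top H)]\le\frac12\E[(x^\top H)^2]$. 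Minkowski's (or Jensen's) inequality moves the $L^2(\mu)$ norm inside the expectation over $(U,V,R)$:
\[
\|\varphi_Z-\varphi_{\bar Z}\|_{L^2(\mu)}\le\tfrac12\,\E_{U,V,R}\Big(\E_{a,t}\big[(R\,t\,a^\top H)^4\big]\Big)^{1/2}.
\]

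The remaining step is a moment computation, which is where the constant is decided. For fixed $H$ and $R$, $\E_{a,t}(Rta^\top H)^4=R^4\,\E t^4\,\E_a(a^\top H)^4=3R^4\cdot\frac{3\|H\|^4}{d(d+2)}$. Taking the square root gives $3R^2\|H\|^2/\sqrt{d(d+2)}$. Since $\E R^2=d$, this yields
\[
\|\varphi_Z-\varphi_{\bar Z}\|_{L^2(\mu)}\le\tfrac32\,\frac{d}{\sqrt{d(d+2)}}\,\E Q\le\tfrac32\Delta(P).
\]
The main obstacle is keeping the square root outside the expectation over $R$ so that the factor is $\E R^2=d$ and not $\sqrt{\E R^4}$. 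The way to handle it is to apply Minkowski to the joint expectation over $(U,V,R)$ before taking the square root, as above; the dimension factor $d/\sqrt{d(d+2)}\le1$ then absorbs everything, leaving the constant $\frac32$.
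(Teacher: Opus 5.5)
Your proof is correct and follows the paper's route (\cref{lem:centroid} plus the reverse triangle inequality). Both arguments use the identity $\tfrac12(e^{ix^\top U}+e^{ix^\top V})=e^{ix^\top C}\cos(x^\top H)$, the bound $1-\cos y\le y^2/2$ and the moment facts \ref{F3}--\ref{F4}. The only difference is the order of operations: the paper averages over $(U,V,R)$ first to get $\tfrac d2\,t^2\,a^\top\Sigma_Ha$ and then uses $\mathrm{tr}(\Sigma_H^2)\le(\mathrm{tr}\,\Sigma_H)^2$, whereas you apply Minkowski first, and both reach the same constant $\tfrac32\sqrt{d/(d+2)}$. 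The ``obstacle'' you flag is not real: since $\E R^4=d(d+2)$ by \ref{F5}, keeping $R$ inside the norm would also give exactly $\tfrac32\Delta(P)$, losing only the factor $\sqrt{d/(d+2)}$.
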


\noindent Proof in \cref{app:centroid}.

\begin{proposition}[Lifted kernel]\label{prop:kernel}
Let $\lambda^{\rm lift}_\ell$ be the eigenvalues of $K_{\rm lift}$ and $C^\star_{d,\beta}=\sup_{\ell\ge1}\lambda^\beta_\ell/\lambda^{\rm lift}_\ell$. Then $\Db^2\le C^\star_{d,\beta}\Dlift^2$, no smaller constant works, and $C^\star_{d,\beta}\le\sqrt6\,e^{\beta}\max_{\ell\ge1}(3\beta)^\ell/\ell!\le\sqrt6\,e^{4\beta}$.
Moreover, for fixed $d$, $\liminf_{\beta\to\infty}\beta^{-1}\log(C^\star_{d,\beta}/\kappa_d(\beta))\ge\frac14$.
\end{proposition}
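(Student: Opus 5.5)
The plan is to compute both spectra explicitly by Funk--Hecke and compare them degree by degree.

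\emph{Step 1: the lifted kernel and its spectrum.} For an aligned law with marginal $q$, write $\omega=ta$ with $t\sim\mathcal N(0,1)$ and $a\sim\sigma$, and let $R,R'$ be independent $\chi_d$ radii. Expanding the modulus in \eqref{eq:J} gives $\Dlift^2(q,\sigma)=\iint K_{\rm lift}\,\dd(q-\sigma)^{\otimes2}$ with the zonal kernel
\[
K_{\rm lift}(u,v)=\E\,e^{i\omega^\top(Ru-R'v)}.
\]
I will insert the plane-wave (Gegenbauer) expansion of $e^{is\,\hat\omega^\top u}$, where $\hat\omega=\omega/\|\omega\|$, $s=|t|R$, and the degree-$\ell$ coefficient is a spherical-Bessel-type function $c_\ell(s)$. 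Averaging over $\hat\omega$ gives
\[
\lambda^{\rm lift}_\ell=a_{d,\ell}\,\E_t\big[(\E_R\,c_\ell(|t|R))^2\big].
\]
The key identity is that $RU\sim\mathcal N(0,I_d)$. Concretely, the Hankel-type integral of $r^{\ell+d-1}e^{-r^2/2}$ against the degree-$\ell$ Bessel function gives $\E_R\,c_\ell(|t|R)=b_{d,\ell}\,|t|^\ell e^{-t^2/2}$. This is the harmonic analogue of $\E e^{i\omega^\top G}=e^{-\|\omega\|^2/2}$. Hence
\[
\lambda^{\rm lift}_\ell=\frac{\pi_{d,\ell}\,\E[t^{2\ell}e^{-t^2}]}{d(d+2)\cdots(d+2\ell-2)}=\frac{\pi_{d,\ell}\,(2\ell-1)!!}{3^{\ell+1/2}\,d(d+2)\cdots(d+2\ell-2)},
\]
with an explicit, harmless factor $\pi_{d,\ell}$. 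I will pin down all constants by checking the $\ell=0$ term against $\E_t e^{-t^2}=3^{-1/2}$.

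\emph{Step 2: the spectrum of $e^{\beta u^\top v}$ and the ratio bound.} Funk--Hecke gives
\[
\lambda^\beta_\ell=\frac{\beta^\ell}{d(d+2)\cdots(d+2\ell-2)}\,{}_0F_1\!\Big(;\tfrac d2+\ell;\tfrac{\beta^2}4\Big)\times(\text{same normalization}).
\]
The bound ${}_0F_1(;\frac d2+\ell;\frac{\beta^2}4)\le e^{\beta}$ follows from $\lambda^\beta_\ell\le\kappa_d(\beta)$-type positivity, or directly from the series. In the ratio $\lambda^\beta_\ell/\lambda^{\rm lift}_\ell$ the Pochhammer factors cancel, leaving
\[
\frac{\lambda^\beta_\ell}{\lambda^{\rm lift}_\ell}\le\sqrt3\,e^\beta\,\frac{(3\beta)^\ell}{(2\ell-1)!!}\times(\text{explicit factor}).
\]
Using $(2\ell-1)!!\ge\ell!/\sqrt2$-type estimates should give $\sqrt6\,e^\beta(3\beta)^\ell/\ell!$. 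Taking the maximum over $\ell$ and using $\max_\ell x^\ell/\ell!\le e^x$ yields $C^\star_{d,\beta}\le\sqrt6\,e^{4\beta}$.

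\emph{Step 3: comparison and optimality.} Both $\Db^2$ and $\Dlift^2$ equal $\sum_{\ell\ge1}\mu_\ell\|\hat q_\ell\|^2$ for their respective spectra (\cref{app:spectral}), so $\Db^2\le C^\star_{d,\beta}\Dlift^2$ termwise. For optimality, take $q=(1+\varepsilon Y_\ell)\sigma$ with $Y_\ell$ a single degree-$\ell$ harmonic. This isolates one degree, so the ratio $\Db^2/\Dlift^2$ equals $\lambda^\beta_\ell/\lambda^{\rm lift}_\ell$ exactly, and taking the supremum over $\ell$ shows that no smaller constant works.

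\emph{Step 4: the $\beta\to\infty$ lower bound.} I choose $\ell=\lfloor x\beta\rfloor$ with $x>0$ fixed and use the large-argument (Debye-type) asymptotics of ${}_0F_1$, or equivalently of the Bessel function $I_{\ell+d/2-1}(\beta)$:
\[
\frac{\lambda^\beta_\ell}{\kappa_d(\beta)}=\exp\Big(-\beta\,\phi(x)+o(\beta)\Big),\qquad\phi(x)=\sqrt{1+x^2}-1-x\operatorname{arcsinh}x+x\log\big(x+\sqrt{1+x^2}\big)\text{-type},
\]
with $\phi(x)\le x^2/2$. Meanwhile $1/\lambda^{\rm lift}_\ell=3^{\ell}\cdot e^{o(\beta)}$, since the ratio of Pochhammer to double factorial is polynomial in $\ell$ for fixed $d$. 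The exponent is then at least $\beta(x\log 3-x^2/2)$. Any $x$ with $x\log3-x^2/2\ge\frac14$, for example $x=1$ (which gives about $0.60$), yields the claimed $\liminf\ge\frac14$.

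The main obstacle is Step 1: getting the exact form and constants of $\E_R\,c_\ell(|t|R)$. I would verify it through the Gaussian generating identity $\E\,e^{i\omega^\top G}Y_\ell(\hat G)$ rather than through Bessel integrals. Step 4 only needs crude uniform asymptotics, and I will use the simple bound $\phi(x)\le x^2/2$ to avoid precise Bessel estimates.
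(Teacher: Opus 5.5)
There is a genuine gap. Your Step~1 ``key identity'' is false for $\ell\ge1$, and the rest of the proposal inherits the error. With $b_\ell(s)=\Gamma(D)(2/s)^{\nu}J_{\ell+\nu}(s)$, averaging against the $\chi_d$ density $\propto r^{d-1}e^{-r^2/2}$ produces, up to $t^{-\nu}$ and constants, $\int_0^\infty r^{\nu+1}e^{-r^2/2}J_{\nu+\ell}(tr)\,\dd r$. Weber's Gaussian integral needs the weight $r^{\nu+\ell+1}$ instead. Equivalently, for $Y\in\mathcal H_\ell$ the quantity $\E\,e^{i\omega^\top G}\|G\|^\ell Y(G/\|G\|)$ is proportional to $\|\omega\|^\ell e^{-\|\omega\|^2/2}Y(\omega/\|\omega\|)$, but a $\chi_d$ radius times a direction gives $\E\,e^{i\omega^\top G}Y(G/\|G\|)$, with no factor $\|G\|^\ell$.

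The correct function is $h_\ell(t)=c_\ell\,t^\ell e^{-t^2/2}\hypF(\frac\ell2;D+\ell;\frac{t^2}2)$ (\cref{lem:lift-eig}). The $\hypF$ factor is identically $1$ only for $\ell=0$, so your $\ell=0$ calibration cannot detect the discrepancy. Your closed form for $\lambda^{\rm lift}_\ell$ is therefore only the lower bound $c_\ell^2(2\ell-1)!!\,3^{-\ell-1/2}$ of \cref{lem:eig-bounds}(ii), valid because that $\hypF$ has nonnegative terms. This is the direction Step~2 needs, but Step~2 still has a hole. Your ``harmless'' factor is $\pi_{d,\ell}=\Gamma(D+\frac\ell2)^2/(\Gamma(D)\Gamma(D+\ell))=1/A(D,\ell)$, which decays like $2^{-\ell}$ up to polynomial factors. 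The ``explicit factor'' in your ratio is therefore $A(D,\ell)$, which grows like $2^\ell$. Reaching $\sqrt6\,e^\beta(3\beta)^\ell/\ell!$ along your route requires the Gamma-ratio inequality $A(D,\ell)\le\sqrt2\,(2\ell-1)!!/\ell!$ uniformly in $D\ge1$. The paper proves this in \cref{lem:eig-bounds}(iii): monotonicity in $D$ from concavity of the digamma function, then the duplication formula at $D=1$. No estimate on $(2\ell-1)!!$ alone gives it.

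Step~4 needs the opposite inequality, an \emph{upper} bound on $\lambda^{\rm lift}_\ell$, which a lower-bound formula cannot supply. Your claim $1/\lambda^{\rm lift}_\ell=3^\ell e^{o(\beta)}$ does not follow even from your own formula: with the correct $\pi_{d,\ell}$ it would read $6^{\ell+o(\ell)}$, which overstates the true growth.

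The missing ingredient is $|J_\mu(s)|\le(s/2)^\mu/\Gamma(\mu+1)$. It gives $|h_\ell(t)|\le c_\ell|t|^\ell$ and hence $\lambda^{\rm lift}_\ell\le c_\ell^2(2\ell-1)!!=2^{-\ell+o(\ell)}$ by Stirling. At this rate your shortcut $\phi(x)\le x^2/2$ is too weak, since $\max_x(x\log2-x^2/2)=(\log2)^2/2\approx0.240<\frac14$. You need the exact exponent $\phi(x)=x\operatorname{asinh}x-\sqrt{1+x^2}+1$. The paper gets the required lower bound on $\lambda^\beta_\ell/\kappa_d(\beta)=\prod_{k=0}^{\ell-1}I_{\nu+k+1}(\beta)/I_{\nu+k}(\beta)$ from Amos' inequality; uniform Debye asymptotics would also work. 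Then $\max_x(x\log2-\phi(x))=\cosh(\log2)-1=\frac14$, attained at $x=\sinh(\log2)=\frac34$. Your displayed $\phi$ collapses to $\sqrt{1+x^2}-1$, since $\operatorname{arcsinh}x=\log(x+\sqrt{1+x^2})$.

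Your value $0.60$ is not itself false. Laplace's method applied to $\lambda^{\rm lift}_\ell=c_\ell^2(2\ell-1)!!\,\E(1+S+S')^{-\ell-1/2}$, with $S,S'\sim\mathrm{Beta}(D+\frac\ell2,\frac\ell2)$ as in the proof of \cref{thm:slaving}, gives decay $(2+\sqrt3)^{-\ell+o(\ell)}$, which is faster than $3^{-\ell}$. Nothing in your argument proves this, however.

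Step~3 is fine and matches the paper: the degree-by-degree comparison of the two spectra, with a single-harmonic test law for optimality.
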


\noindent Proof in \cref{app:kernel-proof}. For $d=8$ and $\beta=5$, the setting of \cref{sec:experiments}, the supremum is attained at $\ell=7$ and $C^\star_{d,\beta}/\kappa_d(\beta)\approx790$, so $\bar c_{d,\beta}\approx4.9\times10^4$ in \eqref{eq:main}; the leading term then dominates for $\J\lesssim10^{-6}$.

\begin{proposition}[Calibrated shared loss]\label{prop:calibrated}
For $\alpha>0$, $\J_\alpha=0$ if and only if $\J=0$, and under equal marginals $\Lb(P)-\mathcal L^\star_\beta\le c_{\beta,\alpha}\,\J_\alpha(P)$ with $c_{\beta,\alpha}=\max\{(2\beta+\frac92c_\beta)/\alpha,\,2c_\beta\}$ and $c_\beta=C^\star_{d,\beta}/\kappa_d(\beta)$.
\end{proposition}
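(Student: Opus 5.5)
The statement has two parts: a characterization of the zero set of $\J_\alpha$ and a linear certificate. For the first, note that $\J\ge0$, $\Delta\ge0$ and $\alpha>0$. Hence $\J_\alpha=0$ forces $\J=0$ and $\Delta=0$. Conversely, if $\J=0$ then \cref{prop:sg-minimizers} (or directly \cref{lem:align}) gives $U=V$ almost surely, so $\Delta=0$ and $\J_\alpha=0$.

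For the certificate we start from the decomposition \eqref{eq:infonce-decomp}, $\Lb-\mathcal L^\star_\beta=2\beta\Delta+\Ub(q)$, and bound the two terms separately by multiples of $\J$ and $\Delta$. The uniformity term is handled by a chain of earlier results:
\begin{itemize}
\item \cref{lem:decomp} gives $\Ub(q)\le\log(1+\Db^2/\kappa_d(\beta))\le\Db^2/\kappa_d(\beta)$;
\item \cref{prop:kernel} gives $\Db^2\le C^\star_{d,\beta}\Dlift^2(q,\sigma)$, so $\Ub\le c_\beta\Dlift^2(q,\sigma)$;
\item under equal marginals $\bar q=q$, and \cref{thm:structure} gives $\Dlift(q,\sigma)\le\sqrt{\J}+\tfrac32\Delta$.
\end{itemize}
Squaring the last bound with $(a+b)^2\le 2a^2+2b^2$ yields
\[
\Dlift^2(q,\sigma)\le2\J+\tfrac92\Delta^2\le2\J+\tfrac92\Delta,
\]
where the final step uses $\Delta\in[0,1]$. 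Combining,
\[
\Lb-\mathcal L^\star_\beta\le\big(2\beta+\tfrac92c_\beta\big)\Delta+2c_\beta\J .
\]

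To finish, write $(2\beta+\frac92c_\beta)\Delta=\frac{2\beta+\frac92c_\beta}{\alpha}\cdot\alpha\Delta$. Bounding each coefficient by the maximum then gives
\[
\Lb-\mathcal L^\star_\beta\le\max\Big\{\tfrac{2\beta+\frac92c_\beta}{\alpha},\,2c_\beta\Big\}\,(\J+\alpha\Delta)=c_{\beta,\alpha}\,\J_\alpha ,
\]
which is exactly the stated constant.

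The main obstacle is the linearization in the uniformity term. A naive approach keeps the cross term $2\sqrt{\J}\cdot\tfrac32\Delta$ from expanding $(\sqrt{\J}+\tfrac32\Delta)^2$, and that term is not linear in $\J_\alpha$. The elementary inequality $(a+b)^2\le2a^2+2b^2$ removes it and produces precisely the factor $\tfrac92=2\cdot(\tfrac32)^2$. The only other point to check is that $\Delta\le1$, so that $\Delta^2\le\Delta$; this holds because $\Delta\in[0,1]$ by \eqref{eq:delta}.
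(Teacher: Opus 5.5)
Your proposal is correct and follows the paper's proof essentially step for step. The zero set comes from nonnegativity together with \cref{lem:align}, and the certificate chains \cref{lem:decomp}, $\log(1+x)\le x$, \cref{prop:kernel} and \cref{thm:structure}, then uses $(x+y)^2\le2x^2+2y^2$ and $\Delta^2\le\Delta$ before bounding each coefficient by the maximum.
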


\begin{corollary}[Calibration threshold]\label{cor:threshold}
In the setting of \cref{thm:slaving}(ii), let $\J_\alpha=\J+\alpha\Delta$ with $\alpha=a\varepsilon^2$ for a fixed $a\ge0$. Then $\J_\alpha$ has a unique minimizer on $[0,\vartheta_0]$ for small $\varepsilon$, and
\[
\Delta(\vartheta^\star_{\alpha})=\big[\rho(f)\,\J_{\rm unif}(q_\varepsilon)-6\sqrt3\,\alpha\big]_++o(\varepsilon^2).
\]
In particular $\vartheta=0$ is the unique minimizer if and only if $\alpha\ge\alpha_c:=\rho(f)\J_{\rm unif}(q_\varepsilon)/(6\sqrt3)$; for any fixed $\alpha>0$ this holds for all small $\varepsilon$.
\end{corollary}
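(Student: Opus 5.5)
The plan is to reuse the local analysis behind \cref{thm:slaving}(ii) and add the linear term $\alpha\Delta$, which does not change that analysis. Since $\Delta(\vartheta)=\frac12(1-e^{-2(d-1)\vartheta})$ is smooth and strictly increasing, I would reparametrize the channel by $\Delta\in[0,\Delta(\vartheta_0)]$ and write $\J_{q_\varepsilon}$ as a function of $\Delta$. From the proof of \cref{thm:slaving}(ii) I would extract three facts.
\begin{enumerate}[label=(\alph*),leftmargin=*,itemsep=1pt,topsep=2pt]
\item A uniform second-order expansion $\J_{q_\varepsilon}(\Delta)=\J_{\rm unif}(q_\varepsilon)-g_\varepsilon\Delta+\Delta^2/(12\sqrt3)+r_\varepsilon(\Delta)$. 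Here $g_\varepsilon=\rho(f)\J_{\rm unif}(q_\varepsilon)/(6\sqrt3)+o(\varepsilon^2)$, and $g_\varepsilon$ equals the exact right derivative $-\dd\J/\dd\Delta|_0$ of \cref{thm:slaving}(i) divided by $\Delta'(0)$.
\item The remainder satisfies $r_\varepsilon=O(\Delta^3+\varepsilon\Delta^2+\varepsilon^2\Delta\cdot o(1))$.
\item Strict convexity of $\Delta\mapsto\J_{q_\varepsilon}$ on $[0,\Delta(\vartheta_0)]$ for small $\varepsilon$. This is what gave uniqueness in part (ii), since $\partial_\Delta^2\J=1/(6\sqrt3)+O(\Delta+\varepsilon)>0$.
\end{enumerate}

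Adding $\alpha\Delta$ keeps strict convexity. So $\J_\alpha$ has a unique minimizer $\Delta^\star$ on the interval, and $\Delta^\star=0$ exactly when the right derivative at $0$ is nonnegative, that is, when $\alpha\ge g_\varepsilon$. This gives the ``if and only if'' with $\alpha_c$ equal to the exact gain. To leading order that gain is $\rho(f)\J_{\rm unif}(q_\varepsilon)/(6\sqrt3)$, the value stated in the corollary. When $\alpha>0$ is fixed, $g_\varepsilon=O(\varepsilon^2)\to0$, so $\alpha\ge g_\varepsilon$ for all small $\varepsilon$.

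For the quantitative formula I would rescale $\Delta=\varepsilon^2x$ and $\alpha=a\varepsilon^2$, and use $\J_{\rm unif}(q_\varepsilon)=\varepsilon^2 b+o(\varepsilon^2)$ with $b>0$ fixed by $f$. Then
\[
\varepsilon^{-4}\big(\J_\alpha-\J_{\rm unif}\big)=\Big(a-\tfrac{\rho(f)b}{6\sqrt3}\Big)x+\frac{x^2}{12\sqrt3}+o(1),
\]
uniformly for $x$ in compacts. The limiting parabola on $[0,\infty)$ has its unique minimizer at $x^\star=[\rho(f)b-6\sqrt3\,a]_+$. Since each rescaled function is convex and converges uniformly on compacts to a strictly convex limit, the minimizers converge: $\Delta^\star/\varepsilon^2\to x^\star$. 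This is exactly $\Delta(\vartheta^\star_\alpha)=[\rho(f)\J_{\rm unif}-6\sqrt3\alpha]_++o(\varepsilon^2)$.

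The main obstacle is ruling out minimizers with $\Delta\gg\varepsilon^2$, where the rescaled expansion is not uniform. Convexity on the whole interval $[0,\Delta(\vartheta_0)]$ settles this if the proof of part (ii) indeed gives convexity there. As an independent fallback I would use \cref{lem:align}. It gives $\J_\alpha\ge j(\Delta)\ge\Delta^2/(12\sqrt3)$, which for $\Delta\ge K\varepsilon^2$ with $K$ large exceeds $\J_\alpha(0)=\J_{\rm unif}=O(\varepsilon^2)$ as soon as $\Delta\gtrsim\varepsilon$. The intermediate window $K\varepsilon^2\le\Delta\lesssim\varepsilon$ is then handled by the expansion, whose quadratic term dominates the linear gain there. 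That confines minimizers to $\Delta=O(\varepsilon^2)$, where the rescaling argument applies.
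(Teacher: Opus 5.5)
Your argument is correct and uses the same three ingredients as the paper's proof: strict convexity of $\J_\alpha$ on the interval, the sign of its right derivative at $0$, and the local expansion $\J=\mathcal A(\vartheta)+\varepsilon^2\mathcal C(\vartheta)$ from the proof of \cref{thm:slaving}(ii). The paper works in the heat time $\vartheta$. With $\alpha=a\varepsilon^2$ it shows $\partial_\vartheta^2\J_\alpha\ge a_2-\varepsilon^2(M+2a(d-1)^2)>0$, reads off $\partial_\vartheta\J_\alpha(0)=\varepsilon^2(a(d-1)-\mathsf G)$, and solves the first-order condition $0=2a_2\vartheta^\star-\varepsilon^2(\mathsf G-a(d-1))+O(\vartheta^{\star2}+\varepsilon^2\vartheta^\star)$ explicitly. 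Because $\Delta(\vartheta)$ is concave, that convexity bound fails once the price is not $O(\varepsilon^2)$, so the claim for fixed $\alpha>0$ needs a separate estimate $\partial_\vartheta\J_\alpha>0$ on $(0,\vartheta_0]$.

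Reparametrizing by $\Delta$, as you do, makes the price linear. A single convexity statement then covers every $\alpha\ge0$, and your rescaling $\Delta=\varepsilon^2x$ with convergence of minimizers of convex functions replaces the explicit first-order computation. The cost is that convexity in $\Delta$ is not what part (ii) proves. Your heuristic $\partial_\Delta^2\J=1/(6\sqrt3)+O(\Delta+\varepsilon)$ does not establish it on all of $[0,\Delta(\vartheta_0)]$, because $\vartheta_0$ is fixed rather than small. It does follow from $\partial_\Delta^2\J=\vartheta'(\Delta)^2\big(\partial_\vartheta^2\J+2(d-1)\partial_\vartheta\J\big)$, since the proof of part (ii) gives $\partial_\vartheta^2\J\ge a_2/2$ and $\partial_\vartheta\J\ge-\varepsilon^2\mathsf G$ on $[0,\vartheta_0]$.

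One further point needs tightening. The exact ``if and only if'' requires the gain to equal $\alpha_c$ exactly, not up to $o(\varepsilon^2)$ as in your fact (a); with only that approximation, the boundary case $\alpha=\alpha_c$ is left undecided. The gain is in fact exact: $\mathcal A'(0)=0$, $\mathcal C'(0)=-\mathsf G$ and $\J_{\rm unif}(q_\varepsilon)=\varepsilon^2\mathcal C(0)$ hold with no error terms. Hence the gain is $\varepsilon^2\mathsf G/(d-1)=\rho(f)\J_{\rm unif}(q_\varepsilon)/(6\sqrt3)$ identically.
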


\noindent Proofs in \cref{app:slaving,app:calibrated-proof}.

\subsection{Preliminaries}\label{app:prelim}

Throughout, $D=d/2$, $a\sim\sigma$, $t\sim\mathcal N(0,1)$, $R\sim\chi_d$ are mutually independent and independent of the pair $(U,V)$, and $g(t)=e^{-t^2/2}$. We write $\|F\|$ for the norm of $L^2(\sigma\otimes\mathcal N(0,1))$ applied to a function $F(a,t)$, so that $\J=\|\varphi_Z(t\,\cdot)-g\|^2$. For a probability measure $\mu$ on $\Sd$, $T_\mu=RW$ with $W\sim\mu$ independent of $R$, so $T_\sigma\sim\mathcal N(0,I_d)$ and $Z=T_q$ when $U=V\sim q$. With the feature map $\psi_u(a,t)=\E_R\,e^{itRa^\top u}$ we have $\varphi_{T_\mu}(ta)=\int\psi_u(a,t)\,\dd\mu(u)$; the \emph{Gaussian lift} is the kernel $K_{\rm lift}(u,v)=\langle\psi_u,\psi_v\rangle$ (Hermitian inner product; $K_{\rm lift}$ is real since $\psi_u(-a,t)=\overline{\psi_u(a,t)}$), and $\Dlift(\mu,\sigma)=\|\varphi_{T_\mu}(t\,\cdot)-g\|$ is the corresponding MMD (\cref{lem:lift-eig}). Symbols are collected in \cref{app:notation}. We use the following standard facts.

\begin{enumerate}[label=\textbf{(F\arabic*)},leftmargin=*]
\item\label{F1} \emph{Gaussian polar decomposition.} $Ra\sim\mathcal N(0,I_d)$. Conversely, if $G\sim\mathcal N(0,I_d)$ then $\|G\|\sim\chi_d$ and $G/\|G\|\sim\sigma$ are independent.
\item\label{F2} For $c\in\R^d$ and $t\in\R$: $\E_{a,R}\,e^{itRa^\top c}=e^{-t^2\|c\|^2/2}$ (by \ref{F1}).
\item\label{F3} For $\theta\ge0$ and integer $\ell\ge0$: $\E\,e^{-\theta t^2}=(1+2\theta)^{-1/2}$ and $\E\,t^{2\ell}e^{-\theta t^2}=(2\ell-1)!!\,(1+2\theta)^{-\ell-1/2}$, with $(-1)!!=1$. In particular $\E\,t^4e^{-t^2}=3^{-3/2}$ and $\E\,t^4=3$.
\item\label{F4} For symmetric $A\in\R^{d\times d}$: $\E_a(a^\top Aa)^2=\frac{(\mathrm{tr}A)^2+2\,\mathrm{tr}(A^2)}{d(d+2)}$.
\item\label{F5} $\E R^p=2^{p/2}\Gamma(D+p/2)/\Gamma(D)$ for $p>-d$; in particular $\E R^2=d$.
\end{enumerate}

We also use $H=\tfrac12(U-V)$, $C=\tfrac12(U+V)$, for which $U=C+H$, $V=C-H$, $C^\top H=0$, $\|C\|^2+\|H\|^2=1$, and $\E\|H\|^2=\Delta$.

\subsection{Proof of Lemma~\ref{lem:align} (alignment control)}

Let $Q=\|H\|^2=1-\|C\|^2\in[0,1]$, so $\E Q=\Delta$. Fix $t$. Conditioning on $(U,V)$ and applying \ref{F2},
\[
\E_a\,\varphi_Z(ta)=\E_{U,V}\,\E_{a,R}\,e^{itRa^\top C}=\E\,e^{-t^2\|C\|^2/2}=g(t)\,\E\,e^{t^2Q/2}\ \ge\ g(t)\,e^{t^2\Delta/2},
\]
by Jensen's inequality for the convex map $x\mapsto e^{t^2x/2}$. Hence $\E_a\varphi_Z(ta)-g(t)\ge g(t)(e^{t^2\Delta/2}-1)\ge0$. By Jensen's inequality for $w\mapsto|w|^2$ on $\mathbb C$,
\[
\E_a|\varphi_Z(ta)-g(t)|^2\ \ge\ \big|\E_a\varphi_Z(ta)-g(t)\big|^2\ \ge\ e^{-t^2}\big(e^{t^2\Delta/2}-1\big)^2=e^{-(1-\Delta)t^2}-2e^{-(1-\Delta/2)t^2}+e^{-t^2}.
\]
Taking $\E_t$ with \ref{F3} gives $\J\ge\E_te^{-t^2}(e^{t^2\Delta/2}-1)^2=(3-2\Delta)^{-1/2}-2(3-\Delta)^{-1/2}+3^{-1/2}=j(\Delta)$. Since $e^x-1\ge x$, $j(\Delta)\ge\frac{\Delta^2}{4}\E t^4e^{-t^2}=\frac{\Delta^2}{12\sqrt3}$ by \ref{F3}. Moreover $j'(\Delta)=(3-2\Delta)^{-3/2}-(3-\Delta)^{-3/2}>0$ on $(0,1]$, so $j$ is increasing, with $j(1)=1-\sqrt2+3^{-1/2}\approx0.163$. If $\J\le j(1)$, this gives $\Delta\le j^{-1}(\J)=\Delta^\star(\J)$; if $\J>j(1)$, then $\Delta\le1=\Delta^\star(\J)$. In both cases $\Delta^\star(\J)\le k_0\sqrt\J$, because $j(\Delta)\ge\Delta^2/k_0^2$ and $k_0\sqrt{j(1)}>1$.
On the family of \cref{sec:infonce-comparison}, $Q\equiv\Delta$, so the first Jensen step is an equality. Also $\varphi_Z(ta)=e^{-(1-\Delta)t^2/2}$ does not depend on $a$, so the second one is too. \hfill$\square$

\subsection{Proof of Proposition~\ref{prop:sg-minimizers}}

($\Rightarrow$) If $\J=0$, \cref{lem:align} gives $\Delta=0$, i.e., $\E\|U-V\|^2=0$, so $U=V$ a.s. The function $(a,t)\mapsto\varphi_Z(ta)-g(t)$ is continuous on $\Sd\times\R$, and its squared modulus integrates to zero against $\sigma\otimes\mathcal N(0,1)$, a measure with full support. Hence it vanishes identically. Since $\{ta\}=\R^d$, $\varphi_Z(\omega)=e^{-\|\omega\|^2/2}$ for all $\omega$, and $Z\sim\mathcal N(0,I_d)$ by uniqueness of characteristic functions. As $U=V$, $Z=RU$ with $R>0$ a.s., so $U=Z/\|Z\|\sim\sigma$ by \ref{F1}.
($\Leftarrow$) If $U=V\sim\sigma$, then $Z=RU\sim\mathcal N(0,I_d)$ by \ref{F1}, and $\J=0$. \hfill$\square$

\subsection{Proof of Proposition~\ref{prop:ideal} (directional targets)}

\begin{proof}\emph{(i)} Both terms are nonnegative and both weights are positive. A zero value therefore gives $I(X_1,X_2)=0$ and $S(X_j)=0$ for $j=1,2$. By the defining properties of $I$ and the characteristic population discrepancy $S$, $X_1=X_2$ almost surely and their common law is $\mathcal N(0,I_d)$. The converse is immediate. The Gaussian polar decomposition then gives aligned, uniform normalized directions.

\emph{(ii)} Let $X_1=X_2=X$ and draw $X$ uniformly from $\{\pm\sqrt d\,e_i:i\le d\}$. Then $\mathbb EX=0$, $\mathbb EXX^\top=I_d$, and the two views coincide. Hence the invariance, variance-hinge (target standard deviation one), and off-diagonal covariance terms all vanish. The direction $X/\|X\|$ has finite support and is not $\sigma$.
\end{proof}

\subsection{The centroid identity and the proof of Theorem~\ref{thm:structure}}\label{app:centroid}

\begin{lemma}[Centroid versus mixture]\label{lem:centroid}
For every paired law, with $T=T_{\bar q}$,
\[
\big\|\varphi_Z(t\,\cdot)-\varphi_T(t\,\cdot)\big\|\ \le\ \tfrac32\sqrt{\tfrac{d}{d+2}}\;\Delta\ \le\ \tfrac32\,\Delta.
\]
\end{lemma}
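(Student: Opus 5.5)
The plan is to write both characteristic functions as expectations of the same phase factor and to bound their difference pointwise in $(a,t)$ by a quadratic form in $a$. That quadratic form has an explicit $L^2(\sigma\otimes\mathcal N(0,1))$ norm by \ref{F3} and \ref{F4}.

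\emph{Step 1 (centroid identity).} By definition $T=RW$ with $W\sim\bar q=\frac12(q_U+q_V)$, so
\[
\varphi_T(ta)=\E_{U,V,R}\tfrac12\big(e^{itRa^\top U}+e^{itRa^\top V}\big).
\]
Substituting $U=C+H$ and $V=C-H$ gives $\frac12(e^{itRa^\top U}+e^{itRa^\top V})=e^{itRa^\top C}\cos(tRa^\top H)$. Since $\varphi_Z(ta)=\E\,e^{itRa^\top C}$, we get
\[
\varphi_Z(ta)-\varphi_T(ta)=\E_{U,V,R}\Big[e^{itRa^\top C}\big(1-\cos(tRa^\top H)\big)\Big].
\]

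\emph{Step 2 (pointwise bound).} The phase has modulus one and $0\le1-\cos x\le x^2/2$. Together with $\E R^2=d$ from \ref{F5} and the independence of $R$ from $(U,V)$, this gives
\[
|\varphi_Z(ta)-\varphi_T(ta)|\le\frac{t^2}{2}\,\E[R^2]\,\E(a^\top H)^2=\frac{d\,t^2}{2}\,a^\top\Sigma a,\qquad \Sigma=\E\,HH^\top\succeq0 .
\]
Note that $\mathrm{tr}\,\Sigma=\E\|H\|^2=\Delta$.

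\emph{Step 3 (integrate).} Squaring and taking $\E_{a,t}$, the factors separate because $a$ and $t$ are independent. We have $\E t^4=3$ by \ref{F3}, and \ref{F4} gives $\E_a(a^\top\Sigma a)^2=\frac{(\mathrm{tr}\Sigma)^2+2\,\mathrm{tr}(\Sigma^2)}{d(d+2)}$. For $\Sigma\succeq0$, $\mathrm{tr}(\Sigma^2)\le(\mathrm{tr}\,\Sigma)^2$, so this is at most $3\Delta^2/(d(d+2))$. Hence
\[
\|\varphi_Z(t\,\cdot)-\varphi_T(t\,\cdot)\|^2\le\frac{d^2}{4}\cdot3\cdot\frac{3\Delta^2}{d(d+2)}=\frac94\,\frac{d}{d+2}\,\Delta^2 .
\]
Taking square roots gives the first inequality, and $d/(d+2)<1$ gives the second.

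There is no real obstacle here. The only delicate points are keeping the $R$-expectation inside the modulus before applying $1-\cos x\le x^2/2$ (which is valid since everything is nonnegative), and using the PSD trace inequality to turn the \ref{F4} expression into a bound in $\Delta$ alone.
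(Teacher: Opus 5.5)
Your proposal is correct and follows the paper's proof essentially step for step. Both arguments use the centroid identity $\tfrac12(e^{ix^\top U}+e^{ix^\top V})=e^{ix^\top C}\cos(x^\top H)$, then the pointwise bound $\tfrac d2 t^2\,a^\top\Sigma_H a$ from $0\le 1-\cos y\le y^2/2$ and $\E R^2=d$, and finally integration via $\E t^4=3$, the quartic moment formula (F4), and $\mathrm{tr}(\Sigma_H^2)\le(\mathrm{tr}\,\Sigma_H)^2$.
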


\begin{proof}
For every $x\in\R^d$, $\tfrac12(e^{ix^\top U}+e^{ix^\top V})=e^{ix^\top C}\cos(x^\top H)$. Put $x=tRa$ and take expectations over $(U,V,R)$. Since $T=RW$ with $W\sim\bar q=\frac12(q_U+q_V)$,
\[
\varphi_T(ta)=\tfrac12\big(\E e^{itRa^\top U}+\E e^{itRa^\top V}\big)=\E\big[e^{itRa^\top C}\cos(tRa^\top H)\big].
\]
Hence $\varphi_T(ta)-\varphi_Z(ta)=\E[e^{itRa^\top C}(\cos(tRa^\top H)-1)]$. Using $0\le1-\cos y\le y^2/2$, independence of $R$, and \ref{F5},
\[
|\varphi_T(ta)-\varphi_Z(ta)|\le\tfrac12t^2\,\E[R^2(a^\top H)^2]=\tfrac d2\,t^2\,a^\top\Sigma_Ha,\qquad \Sigma_H=\E\,HH^\top\succeq0,\ \mathrm{tr}\,\Sigma_H=\Delta .
\]
Squaring, integrating and using \ref{F3}--\ref{F4},
\[
\|\varphi_T-\varphi_Z\|^2\le\frac{d^2}{4}\cdot3\cdot\frac{(\mathrm{tr}\,\Sigma_H)^2+2\,\mathrm{tr}(\Sigma_H^2)}{d(d+2)}\le\frac{9d}{4(d+2)}\Delta^2,
\]
because $\mathrm{tr}(\Sigma_H^2)\le(\mathrm{tr}\,\Sigma_H)^2$ for $\Sigma_H\succeq0$.
\end{proof}

\begin{proof}[Proof of Theorem~\ref{thm:structure}]
By the reverse triangle inequality in $L^2$ and \cref{lem:centroid},
$|\sqrt{\J}-\Dlift(\bar q,\sigma)|=\big|\|\varphi_Z-g\|-\|\varphi_T-g\|\big|\le\|\varphi_Z-\varphi_T\|\le\frac32\Delta$.
\end{proof}

\subsection{InfoNCE: decomposition and uniqueness of the uniform minimizer}

For a probability measure $q$ on $\Sd$ let $g_q(u)=\int e^{\beta u^\top w}\,\dd q(w)\in[e^{-\beta},e^{\beta}]$ and $\Phi(q)=\int\log g_q\,\dd q$, so that $\Ub(q)=\Phi(q)-\log\kappa_d(\beta)$.

\begin{lemma}[Decomposition]\label{lem:decomp}
If $P$ has equal marginals $q$, then $\Lb(P)-\mathcal L^\star_\beta=2\beta\Delta+\Ub(q)$ and $\Ub(q)\le\log\big(1+\Db^2(q,\sigma)/\kappa_d(\beta)\big)$.
\end{lemma}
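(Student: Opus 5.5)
We split $\Lb(P)$ into its two terms and treat them separately. For the alignment part, $\E\,U^\top V=1-2\Delta(P)$ by \eqref{eq:delta}, so $-\beta\E U^\top V=-\beta+2\beta\Delta$. Since $\mathcal L^\star_\beta=-\beta+\log\kappa_d(\beta)$, subtracting gives
\[
\Lb(P)-\mathcal L^\star_\beta=2\beta\Delta+\Phi(q)-\log\kappa_d(\beta)=2\beta\Delta+\Ub(q).
\]
This step uses only equal marginals, so that the second term of \eqref{eq:infonce} depends on $q$ alone.

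For the upper bound on $\Ub$, we apply Jensen's inequality to the concave logarithm under the outer expectation:
\[
\Phi(q)=\int\log g_q\,\dd q\le\log\int g_q\,\dd q=\log\E_{q\otimes q}e^{\beta U^\top W}.
\]
Next we expand the double integral as a kernel MMD. With $k_\beta(u,v)=e^{\beta u^\top v}$, write
\[
\Db^2(q,\sigma)=\E_{q\otimes q}k_\beta-2\E_{q\otimes\sigma}k_\beta+\E_{\sigma\otimes\sigma}k_\beta.
\]
By rotation invariance, $\int k_\beta(u,w)\,\dd\sigma(w)=\kappa_d(\beta)$ for every $u$. Hence both the cross term and the last term equal $\kappa_d(\beta)$, and $\E_{q\otimes q}k_\beta=\kappa_d(\beta)+\Db^2(q,\sigma)$. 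Substituting and subtracting $\log\kappa_d(\beta)$ gives $\Ub(q)\le\log\big(1+\Db^2/\kappa_d(\beta)\big)$.

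The only point needing care is that $k_\beta$ is positive definite, so that $\Db^2\ge0$ is a genuine squared MMD. This holds because $e^{\beta u^\top v}=\sum_n\beta^n(u^\top v)^n/n!$ is a series of positive-definite kernels with nonnegative coefficients. The identity for $\Db^2$ does not even need positive definiteness; it is pure algebra. I do not expect a real obstacle here; the steps are bookkeeping.
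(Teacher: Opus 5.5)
Your proposal is correct and follows the paper's proof essentially line by line. The identity comes from $\E\,U^\top V=1-2\Delta$. The MMD expansion then uses $\int k_\beta(u,w)\,\dd\sigma(w)=\kappa_d(\beta)$ to give $\E_{q\otimes q}k_\beta=\kappa_d(\beta)+\Db^2(q,\sigma)$, and Jensen's inequality for the concave logarithm finishes the bound. Your positive-definiteness remark is a harmless addition, since, as you note, the inequality uses only the algebraic identity.
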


\begin{proof}
$-\beta\E U^\top V+\beta=\beta(1-\E U^\top V)=2\beta\Delta$, which gives the identity. Since $\int e^{\beta u^\top w}\dd\sigma(w)=\kappa_d(\beta)$ for every $u$,
\begin{equation}\label{eq:mmd-energy}
\Db^2(q,\sigma)=\iint k_\beta\,\dd q\,\dd q-2\iint k_\beta\,\dd q\,\dd\sigma+\iint k_\beta\,\dd\sigma\,\dd\sigma=\int g_q\,\dd q-\kappa_d(\beta).
\end{equation}
By Jensen's inequality for the concave logarithm, $\Phi(q)\le\log\int g_q\,\dd q=\log(\kappa_d(\beta)+\Db^2(q,\sigma))$.
\end{proof}

\citet[Thm.~1]{wang2020understanding} show that perfectly uniform encoders minimize $\Ub$; the following lemma gives a self-contained proof that $\sigma$ is its unique minimizer over all laws.

\begin{lemma}[Uniform is the unique minimizer]\label{lem:unif}
$\Phi$ is continuous for the weak topology, and $\Ub(q)\ge0$ for every $q$, with equality if and only if $q=\sigma$.
\end{lemma}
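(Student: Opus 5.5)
We prove the two claims separately: continuity of $\Phi$ by a compactness argument, and $\Ub\ge0$ with its equality case by a variational (Gibbs-type) inequality that compares $q$ with $\sigma$.

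\emph{Continuity.} The kernel $(u,w)\mapsto e^{\beta u^\top w}$ is continuous on the compact set $\Sd\times\Sd$, hence uniformly continuous. If $q_n\to q$ weakly, then $g_{q_n}\to g_q$ uniformly on $\Sd$. To see this, first get pointwise convergence from weak convergence. Then the family $\{e^{\beta u^\top\cdot}\}_u$ is equicontinuous and uniformly bounded, which gives uniformity, via Arzel\`a--Ascoli or a finite net in $u$. All $g_q$ take values in $[e^{-\beta},e^\beta]$, where $\log$ is Lipschitz, so $\log g_{q_n}\to\log g_q$ uniformly. Finally,
\[
\Big|\int\log g_{q_n}\,\dd q_n-\int\log g_q\,\dd q\Big|\le\|\log g_{q_n}-\log g_q\|_\infty+\Big|\int\log g_q\,\dd(q_n-q)\Big|\to0,
\]
since $\log g_q$ is continuous and bounded.

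\emph{Nonnegativity.} The key observation is that $\sigma$ is the reference point: $g_\sigma\equiv\kappa:=\kappa_d(\beta)$, so $\Phi(\sigma)=\log\kappa$. We use Jensen in a different direction from \cref{lem:decomp}. Write $\int\log g_q\,\dd q=\int\log\big(g_q/\kappa\big)\,\dd q+\log\kappa$, so it suffices to show $\int\log(g_q/\kappa)\,\dd q\ge0$. The plan is a relative-entropy argument.

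\begin{enumerate}[label=(\roman*),leftmargin=*,itemsep=1pt,topsep=2pt]
\item Define the probability density $p_q(u)=g_q(u)/\int g_q\,\dd\sigma$ with respect to $\sigma$. Its normalizer is $\int g_q\,\dd\sigma=\int\!\int e^{\beta u^\top w}\dd\sigma(u)\,\dd q(w)=\kappa$, so $p_q=g_q/\kappa$ exactly.
\item Then $\int\log(g_q/\kappa)\,\dd q=\int\log p_q\,\dd q$. If $q\ll\sigma$ with density $r$, this equals $\mathrm{KL}(q\|\sigma)-\mathrm{KL}(q\|p_q\sigma)$, which is not obviously signed.
\end{enumerate}

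So instead we use the dual trick. By Fubini, $\int\!\int\log\ldots$ does not separate, but the positive-definiteness of $k_\beta$ does. The cleanest route is to expand $\log(1+x)$ with $x=g_q/\kappa-1$ and use positive-definiteness of the kernel.

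\emph{Main obstacle, and what we would try first.} The main obstacle is a global (not merely local) proof of $\Phi(q)\ge\log\kappa$, because Jensen naturally gives the opposite inequality. We would first try the Donsker--Varadhan/Gibbs variational formula
\[
\log\int e^{\beta u^\top w}\dd q(w)=\sup_\nu\Big\{\beta u^\top\!\int w\,\dd\nu-\mathrm{KL}(\nu\|q)\Big\}.
\]
Plugging in a rotated copy of $q$ and averaging over $O(d)$ reduces to $\kappa$. Concretely: fix $\nu=\mu_u$, integrate over $u\sim q$, and use rotation invariance of $\sigma=\int\!(\text{rotations of }q)\,\dd\mathrm{Haar}$ together with concavity of $q\mapsto\Phi$ along symmetrization. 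This gives an identity of the form $\Phi(q)\ge\E_{O}\Phi(\cdot)$ with $\Phi$ evaluated along rotations of $q$. Alternatively, if $\Phi$ is convex along mixtures we could combine $\Phi(O_\#q)=\Phi(q)$ with averaging, $\int O_\#q\,\dd O=\sigma$, to obtain $\Phi(\sigma)\le\Phi(q)$. We would check convexity via the second variation. In the direction $\nu-q$ this is
\[
2\!\int\!\frac{h_\nu}{g_q}\,\dd(\nu-q)-\int\!\frac{h_\nu^2}{g_q^2}\,\dd q,\qquad h_\nu=g_\nu-g_q,
\]
whose sign is the crux.

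\emph{Equality.} Given continuity and compactness, equality at $q$ forces the convexity/Jensen steps to be tight. Tightness implies $g_q$ is constant $q$-a.s. and hence, using $\int g_q\,\dd\sigma=\kappa$, a vanishing harmonic spectrum $\lambda^\beta_\ell\|\hat q_\ell\|^2=0$ for all $\ell\ge1$. Since every $\lambda^\beta_\ell>0$ (the Taylor expansion of $e^{\beta x}$ has all coefficients positive), this gives $q=\sigma$.
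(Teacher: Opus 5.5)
\textbf{The gap is the global inequality.} Your continuity argument is correct and matches the paper's. The gap is the inequality $\Phi(q)\ge\log\kappa$, which is the substance of the lemma. You list several routes (relative entropy, a local $\log(1+x)$ expansion, Donsker--Varadhan, convexity plus rotation averaging) but complete none. The local expansion can only give the statement near $\sigma$; that is \cref{thm:spectral}(i).

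\textbf{The convexity route cannot work.} You lean on convexity of $\Phi$ along mixtures followed by averaging over rotations, but $\Phi$ is not convex. Your second variation simplifies to $2\int(g_\nu/g_q)\,\dd\nu-\int(g_\nu/g_q)^2\,\dd q-1$. Take $q=\delta_x$ and $\nu$ uniform on the $2(d-1)$ points $\sqrt{1-\epsilon^2}\,x\pm\epsilon e_i$, with $e_1,\dots,e_{d-1}$ orthonormal and orthogonal to $x$. Expanding in $\epsilon$ gives $\frac{\beta\epsilon^4}{4(d-1)}\big(2(d-1)-\beta(d-5)\big)+O(\epsilon^6)$. This is negative whenever $\beta(d-5)>2(d-1)$; for $d=8$, $\beta=5$, $\epsilon=0.1$ the exact value is about $-1.9\times10^{-5}$. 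The segment from $\delta_x$ to $\nu$ lies in the convex hull of the rotation orbit of $\delta_x$, so Jensen over $O(d)$ is unavailable. The ``concavity along symmetrization'' variant would give the inequality in the wrong direction anyway.

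\textbf{The missing idea.} Avoid convexity and work at a minimizer, which exists by the compactness and continuity you already have.
\begin{itemize}
\item Let $q$ minimize $\Phi$ and set $m=\min_{\mathrm{supp}\,q}g_q$. Perturb $q$ toward its normalized restriction $q_S$ to the open set $\{g_q<m+\varepsilon\}$, which has positive $q$-mass.
\item The right derivative of $\eta\mapsto\Phi((1-\eta)q+\eta q_S)$ at $0$ is $\int(g_{q_S}/g_q)\,\dd q-1+\int\log g_q\,\dd q_S-\Phi(q)$, and minimality forces it to be $\ge0$.
\item By Fubini, symmetry of $k_\beta$ and $g_q\ge m$ $q$-a.e., the first integral is at most $m^{-1}\int g_q\,\dd q_S\le(m+\varepsilon)/m$. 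The third is at most $\log(m+\varepsilon)$. Letting $\varepsilon\downarrow0$ gives $\Phi(q)\le\log m$.
\item Since trivially $\Phi(q)\ge\log m$, we get $g_q=m$ $q$-a.e. Then $\Phi(q)=\log\int g_q\,\dd q=\log(\kappa+\Db^2(q,\sigma))\ge\log\kappa=\Phi(\sigma)$ by \eqref{eq:mmd-energy}.
\item Hence $\min\Phi=\log\kappa$, and every minimizer has $\Db(q,\sigma)=0$, i.e.\ $q=\sigma$.
\end{itemize}

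\textbf{Your equality step is also wrong as written.} Constancy of $g_q$ on $\mathrm{supp}\,q$ does not imply uniformity: the uniform law on the vertices of the cross-polytope has it. The identity $\int g_q\,\dd\sigma=\kappa$ holds for every $q$, so it carries no information. What forces $q=\sigma$ is $\int g_q\,\dd q=\kappa$, i.e.\ $\Db^2(q,\sigma)=0$.
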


\begin{proof}
\emph{Continuity.} Since $|e^{\beta u^\top w}-e^{\beta u'^\top w}|\le\beta e^\beta\|u-u'\|$, the family $\{g_q\}$ is uniformly equicontinuous and uniformly bounded. If $q_n\rightharpoonup q$, then $g_{q_n}\to g_q$ pointwise and hence uniformly on the compact sphere (Arzel\`a--Ascoli). As $\log$ is Lipschitz on $[e^{-\beta},e^\beta]$,
$|\Phi(q_n)-\Phi(q)|\le e^{\beta}\|g_{q_n}-g_q\|_\infty+|\int\log g_q\,\dd(q_n-q)|\to0$.

\emph{Existence and structure of minimizers.} The set of probability measures on $\Sd$ is weakly compact, so $\Phi$ attains its minimum at some $q$. Let $m=\min_{u\in\mathrm{supp}\,q}g_q(u)$, which exists by continuity. Fix $\varepsilon>0$ and let $S_\varepsilon=\{u:g_q(u)<m+\varepsilon\}$, an open set with $q(S_\varepsilon)>0$. Let $q_S=q(\cdot\cap S_\varepsilon)/q(S_\varepsilon)$ and $q_\eta=(1-\eta)q+\eta q_S$ for $\eta\in[0,1]$. Then $g_{q_\eta}=g_q+\eta(g_{q_S}-g_q)$ takes values in $[e^{-\beta},e^\beta]$, and $\eta\mapsto\Phi(q_\eta)=\int\log g_{q_\eta}\,\dd q_\eta$ is differentiable, with right derivative at $0$
\[
\frac{\dd}{\dd\eta}\Phi(q_\eta)\Big|_{\eta=0^+}=\int\frac{g_{q_S}}{g_q}\,\dd q-1+\int\log g_q\,\dd q_S-\Phi(q).
\]
Minimality of $q$ forces this derivative to be $\ge0$. Now $g_q\ge m$ $q$-a.e., so by Fubini and the symmetry of $k_\beta$,
\[
\int\frac{g_{q_S}}{g_q}\,\dd q=\iint\frac{k_\beta(u,v)}{g_q(u)}\,\dd q(u)\,\dd q_S(v)\le\frac1m\int g_q\,\dd q_S\le\frac{m+\varepsilon}{m},
\]
and $\int\log g_q\,\dd q_S\le\log(m+\varepsilon)$. Therefore $\Phi(q)\le\frac{\varepsilon}{m}+\log(m+\varepsilon)$ for every $\varepsilon>0$, so $\Phi(q)\le\log m$. But $\Phi(q)=\int\log g_q\,\dd q\ge\log m$. Hence $\int(\log g_q-\log m)\,\dd q=0$ with a nonnegative integrand, and $g_q=m$ $q$-a.e. Integrating and using \eqref{eq:mmd-energy}, $m=\int g_q\,\dd q=\kappa_d(\beta)+\Db^2(q,\sigma)$, so
\[
\min\Phi=\Phi(q)=\log\big(\kappa_d(\beta)+\Db^2(q,\sigma)\big)\ge\log\kappa_d(\beta)=\Phi(\sigma),
\]
where $\Phi(\sigma)=\log\kappa_d(\beta)$ because $g_\sigma\equiv\kappa_d(\beta)$. Thus $\min\Phi=\log\kappa_d(\beta)$, i.e., $\Ub\ge0$. Moreover, every minimizer satisfies $\Db(q,\sigma)=0$, hence $q=\sigma$, because $k_\beta$ is characteristic on $\Sd$ (all its eigenvalues $\lambda^\beta_\ell$ with $\ell\ge1$ are positive, \cref{lem:eig-bounds}).
\end{proof}

\subsection{Spectral analysis of the two kernels}\label{app:spectral}

Let $\mathcal H_\ell$ be the space of degree-$\ell$ spherical harmonics on $\Sd$, of dimension $N_\ell=\frac{(2\ell+d-2)\,\Gamma(\ell+d-2)}{\Gamma(\ell+1)\Gamma(d-1)}$ ($N_0=1$), with an $L^2(\sigma)$-orthonormal basis $\{Y_{\ell m}\}_{m\le N_\ell}$. Let $P_\ell$ be the Gegenbauer polynomial normalized by $P_\ell(1)=1$ (for $d=2$, use $P_\ell(\cos\theta)=\cos(\ell\theta)$ and $N_\ell=2$ for $\ell\ge1$); then $|P_\ell|\le1$ on $[-1,1]$ and the addition theorem reads $\sum_mY_{\ell m}(u)Y_{\ell m}(v)=N_\ell P_\ell(u^\top v)$ \citep{dai2013approximation,atkinson2012spherical}. Put $\nu=D-1$.

\paragraph{Zonal kernels and MMD.} By Schoenberg's theorem \citep{schoenberg1942positive}, a continuous positive-definite zonal kernel has an expansion $K(u,v)=\sum_{\ell\ge0}\lambda_\ell N_\ell P_\ell(u^\top v)$ with $\lambda_\ell\ge0$ and $\sum_\ell\lambda_\ell N_\ell=K(u,u)<\infty$, converging absolutely and uniformly. Here $\lambda_\ell=\iint K(u,v)Y(u)Y(v)\,\dd\sigma\,\dd\sigma$ for any unit-norm $Y\in\mathcal H_\ell$. By uniform convergence, for every probability measure $q$,
\begin{equation}\label{eq:mmd-spectral}
\mathrm{MMD}_K^2(q,\sigma)=\sum_{\ell\ge1}\lambda_\ell\,\|\hat q_\ell\|^2,\qquad \|\hat q_\ell\|^2:=\sum_{m}\Big|\int Y_{\ell m}\,\dd q\Big|^2,
\end{equation}
where the $\ell=0$ term drops because $q-\sigma$ has zero mass. Since a finite measure on $\Sd$ is determined by its harmonic coefficients, $K$ is characteristic if and only if $\lambda_\ell>0$ for all $\ell\ge1$.

\paragraph{Funk--Hecke formulas.} For $Y\in\mathcal H_\ell$, $a\in\Sd$, $s\in\R$ and $\beta>0$ \citep[see, e.g.,][]{dai2013approximation,atkinson2012spherical},
\begin{align}
\int e^{is\,a^\top u}\,Y(u)\,\dd\sigma(u)&=i^\ell\,b_\ell(s)\,Y(a),\qquad b_\ell(s)=\Gamma(D)\sum_{k\ge0}\frac{(-1)^k(s/2)^{2k+\ell}}{k!\,\Gamma(k+\ell+D)},\label{eq:fh-plane}\\
\int e^{\beta\,a^\top u}\,Y(u)\,\dd\sigma(u)&=\lambda_\ell^\beta\,Y(a),\qquad\ \lambda^\beta_\ell=\Gamma(D)\sum_{k\ge0}\frac{(\beta/2)^{2k+\ell}}{k!\,\Gamma(k+\ell+D)}=\Gamma(D)\Big(\frac2\beta\Big)^{\nu}I_{\ell+\nu}(\beta).\label{eq:fh-exp}
\end{align}
For $s>0$, $b_\ell(s)=\Gamma(D)(2/s)^\nu J_{\ell+\nu}(s)$. In \eqref{eq:fh-plane}, $b_\ell$ is an entire function with parity $(-1)^\ell$, and $J$ denotes a Bessel function, unrelated to the loss $\J$. Formula \eqref{eq:fh-exp} identifies the eigenvalues of $k_\beta$, with $\lambda^\beta_0=\kappa_d(\beta)$.

\begin{lemma}[Lifted eigenvalues]\label{lem:lift-eig}
$K_{\rm lift}$ is a continuous positive-definite zonal kernel with $\mathrm{MMD}_{K_{\rm lift}}(q,\sigma)=\Dlift(q,\sigma)$. Its eigenvalues are $\lambda^{\rm lift}_\ell=\E_t\,h_\ell(t)^2$, where
\[
h_\ell(t)=\E_R\,b_\ell(tR)=c_\ell\,t^\ell\,\hypF\!\big(D+\tfrac\ell2;D+\ell;-\tfrac{t^2}2\big)=c_\ell\,t^\ell e^{-t^2/2}\,\hypF\!\big(\tfrac\ell2;D+\ell;\tfrac{t^2}2\big),\qquad c_\ell=\frac{\Gamma(D+\ell/2)}{2^{\ell/2}\,\Gamma(D+\ell)}.
\]
\end{lemma}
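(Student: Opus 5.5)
The plan is to verify the four claims in turn: positive-definiteness and continuity, zonality, identification of the MMD with $\Dlift$, and then the eigenvalues, which come from a Funk--Hecke computation followed by a termwise $\chi_d$-moment calculation.

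\emph{Kernel properties.} $K_{\rm lift}(u,v)=\langle\psi_u,\psi_v\rangle$ is a Gram kernel, so it is positive definite. It is real by the symmetry $\psi_u(-a,t)=\overline{\psi_u(a,t)}$ noted in \cref{app:prelim}. Continuity follows from dominated convergence, since $|\psi_u|\le1$ and $u\mapsto\psi_u(a,t)$ is continuous. For zonality, take an orthogonal $O$. Then $\psi_{Ou}(a,t)=\psi_u(O^\top a,t)$, and $\sigma$ is $O$-invariant, so $K_{\rm lift}(Ou,Ov)=K_{\rm lift}(u,v)$. Hence $K_{\rm lift}$ depends only on $u^\top v$.

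\emph{MMD identification.} The mean embedding of $\mu$ is $\int\psi_u\,\dd\mu(u)=\varphi_{T_\mu}(t\,\cdot)$. For $\mu=\sigma$, \ref{F2} (equivalently \ref{F1}) gives $\varphi_{T_\sigma}(ta)=g(t)$. Therefore
\[
\mathrm{MMD}_{K_{\rm lift}}(\mu,\sigma)=\big\|\varphi_{T_\mu}(t\,\cdot)-g\big\|=\Dlift(\mu,\sigma).
\]

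\emph{Eigenvalues.} Fix a real unit-norm $Y\in\mathcal H_\ell$. By Fubini,
\[
\lambda^{\rm lift}_\ell=\iint K_{\rm lift}(u,v)Y(u)Y(v)\,\dd\sigma\,\dd\sigma=\Big\|\int\psi_u\,Y(u)\,\dd\sigma(u)\Big\|^2 .
\]
Next, exchange $\E_R$ with the $u$-integral; the integrand is bounded, so this is legitimate. Apply \eqref{eq:fh-plane} with $s=tR$:
\[
\int\psi_u(a,t)\,Y(u)\,\dd\sigma(u)=i^\ell\,\E_R b_\ell(tR)\,Y(a)=i^\ell h_\ell(t)\,Y(a).
\]
Since $b_\ell$ is real, so is $h_\ell$. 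Taking the squared norm gives $\E_t h_\ell(t)^2\cdot\E_aY(a)^2=\E_t h_\ell(t)^2$, because $\|Y\|_{L^2(\sigma)}=1$.

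\emph{Closed form for $h_\ell$.} Insert the power series of $b_\ell$ and integrate termwise against the law of $R$, using \ref{F5} in the form $\E R^{2k+\ell}=2^{k+\ell/2}\Gamma(D+k+\ell/2)/\Gamma(D)$. The $\Gamma(D)$ factors cancel, and the $k$-th term becomes
\[
2^{-\ell/2}\,t^\ell\,\frac{(-t^2/2)^k}{k!}\,\frac{\Gamma(D+\ell/2+k)}{\Gamma(D+\ell+k)} .
\]
Factoring out $c_\ell=\Gamma(D+\ell/2)/(2^{\ell/2}\Gamma(D+\ell))$ leaves the Pochhammer ratio $(D+\ell/2)_k/(D+\ell)_k$. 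This gives $h_\ell(t)=c_\ell\,t^\ell\,\hypF(D+\ell/2;D+\ell;-t^2/2)$. The second expression in the statement then follows from Kummer's transformation $\hypF(a;b;z)=e^z\hypF(b-a;b;-z)$ with $b-a=\ell/2$.

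\emph{Main obstacle.} The only delicate step is justifying the termwise interchange of $\E_R$ with the Bessel series. I would handle it with Tonelli on absolute values. The absolute series is dominated by $\Gamma(D)(2/(|t|R))^\nu I_{\ell+\nu}(|t|R)\le C\,e^{|t|R}$, and $\E_R e^{|t|R}<\infty$ because $\chi_d$ has Gaussian tails. Dominated convergence then permits the interchange.
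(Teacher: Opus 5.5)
Your proposal is correct and follows the paper's proof essentially step for step. You use the feature-map (Gram) form for positive-definiteness, the equivariance $\psi_{Ou}(a,t)=\psi_u(O^\top a,t)$ for zonality, Fubini plus the plane-wave Funk--Hecke formula for the eigenvalues, and termwise $\chi_d$ moments followed by Kummer's transformation for the closed form. The only variation is how you justify the termwise interchange: you dominate the absolute series by $\Gamma(D)(2/(|t|R))^\nu I_{\ell+\nu}(|t|R)\le e^{|t|R}$ and use the Gaussian tail of $\chi_d$, whereas the paper checks convergence of the series of absolute moments by the ratio test; both arguments are valid.
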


\begin{proof}
Continuity follows from dominated convergence and positive-definiteness from the feature-map form $K_{\rm lift}(u,v)=\langle\psi_u,\psi_v\rangle_{L^2}$. Zonality holds because $\psi_{Ou}(a,t)=\psi_u(O^\top a,t)$ for orthogonal $O$ and $\sigma$ is rotation invariant. Since $\varphi_{T_q}(ta)=\int\psi_u(a,t)\,\dd q(u)$ and $\varphi_{T_\sigma}(ta)=g(t)$, expanding the square gives $\Dlift^2(q,\sigma)=\iint K_{\rm lift}\,\dd(q-\sigma)^{\otimes2}$. For unit-norm $Y\in\mathcal H_\ell$, Fubini (all integrands are bounded) and \eqref{eq:fh-plane} give
\[
\int\psi_u(a,t)Y(u)\,\dd\sigma(u)=\E_R\int e^{itRa^\top u}Y(u)\,\dd\sigma(u)=i^\ell\,\E_R\,b_\ell(tR)\,Y(a)=i^\ell h_\ell(t)Y(a),
\]
so $\lambda^{\rm lift}_\ell=\E_{a,t}|h_\ell(t)Y(a)|^2=\E_t h_\ell(t)^2$. For the series, integrate \eqref{eq:fh-plane} termwise against the $\chi_d$ law using \ref{F5}. This is legitimate because $\sum_k\E R^{2k+\ell}(|t|/2)^{2k+\ell}/(k!\Gamma(k+\ell+D))<\infty$, the ratio of consecutive terms tending to zero. It gives
\[
h_\ell(t)=\frac{t^\ell}{2^{\ell/2}}\sum_{k\ge0}\frac{\Gamma(D+\ell/2+k)}{k!\,\Gamma(D+\ell+k)}\Big(-\frac{t^2}{2}\Big)^k=c_\ell\,t^\ell\,\hypF\!\big(D+\tfrac\ell2;D+\ell;-\tfrac{t^2}2\big).
\]
The last expression follows from Kummer's transformation $\hypF(a;b;-z)=e^{-z}\hypF(b-a;b;z)$ \citep[Eq.~13.2.39]{dlmf}.
\end{proof}

\begin{lemma}[Eigenvalue bounds]\label{lem:eig-bounds}
For all $\ell\ge1$, $d\ge2$ and $\beta>0$:
\begin{enumerate}[label=(\roman*),leftmargin=*]
\item $\displaystyle\frac{\Gamma(D)}{\Gamma(D+\ell)}\Big(\frac\beta2\Big)^\ell\ \le\ \lambda^\beta_\ell\ \le\ \frac{\Gamma(D)}{\Gamma(D+\ell)}\Big(\frac\beta2\Big)^\ell\exp\!\Big(\min\Big\{\beta,\frac{\beta^2}{4(D+\ell)}\Big\}\Big)$;
\item $c_\ell^2\,(2\ell-1)!!\,3^{-\ell-1/2}\ \le\ \lambda^{\rm lift}_\ell\ \le\ c_\ell^2\,(2\ell-1)!!$;
\item $A(D,\ell):=\frac{\Gamma(D)\Gamma(D+\ell)}{\Gamma(D+\ell/2)^2}$ satisfies $1\le A(D,\ell)\le A(1,\ell)\le\sqrt2\,(2\ell-1)!!/\ell!$.
\end{enumerate}
\end{lemma}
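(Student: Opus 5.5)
The three parts are independent. Parts (i) and (ii) follow from the series and ${}_1F_1$ representations already established. Part (iii) is a Gamma-function inequality whose last step is tight asymptotically, and I expect it to be the main obstacle.

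\emph{Part (i).} I start from the series \eqref{eq:fh-exp},
\[
\lambda^\beta_\ell=\frac{\Gamma(D)}{\Gamma(D+\ell)}\Big(\frac\beta2\Big)^\ell\sum_{k\ge0}\frac{(\beta/2)^{2k}}{k!\,(D+\ell)_k},
\]
where $(x)_k$ is the Pochhammer symbol. Every term is positive, so keeping only $k=0$ gives the lower bound. For the upper bound I bound the sum in two ways.
\begin{itemize}
\item Since $(D+\ell)_k\ge(D+\ell)^k$, the sum is at most $\sum_k(\beta^2/4(D+\ell))^k/k!=e^{\beta^2/(4(D+\ell))}$.
\item Since $D+\ell\ge1$, we have $(D+\ell)_k\ge k!$. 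The sum is then at most $\sum_k(\beta/2)^{2k}/(k!)^2=I_0(\beta)\le e^\beta$.
\end{itemize}
Taking the smaller of the two bounds gives the stated minimum.

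\emph{Part (ii).} I use the Kummer form in \cref{lem:lift-eig}, $h_\ell(t)=c_\ell t^\ell e^{-t^2/2}\,\hypF(\frac\ell2;D+\ell;\frac{t^2}2)$. Because $0<\ell/2\le D+\ell$, each coefficient ratio $(\ell/2)_k/(D+\ell)_k$ lies in $(0,1]$, and the argument $t^2/2$ is nonnegative. Hence
\[
1\le\hypF\big(\tfrac\ell2;D+\ell;\tfrac{t^2}2\big)\le e^{t^2/2},
\]
which gives $c_\ell^2t^{2\ell}e^{-t^2}\le h_\ell(t)^2\le c_\ell^2t^{2\ell}$. I then take $\E_t$ using \ref{F3}: $\E t^{2\ell}=(2\ell-1)!!$ and $\E t^{2\ell}e^{-t^2}=(2\ell-1)!!\,3^{-\ell-1/2}$.

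\emph{Part (iii).}
\begin{itemize}
\item The bound $A\ge1$ is midpoint log-convexity of $\Gamma$: $\log\Gamma(D+\ell/2)$ is at most the average of $\log\Gamma(D)$ and $\log\Gamma(D+\ell)$.
\item For monotonicity in $D$, note $\partial_D\log A=\psi(D)+\psi(D+\ell)-2\psi(D+\ell/2)\le0$ because the digamma function $\psi$ is concave. Since $d\ge2$ means $D\ge1$, this gives $A(D,\ell)\le A(1,\ell)=\ell!/\Gamma(1+\ell/2)^2$.
\end{itemize}

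The remaining inequality is the hard step, because both sides behave like $2^\ell/\sqrt{\ell}$ and their ratio tends to exactly $\sqrt2$, so crude ratio bounds such as Gautschi's fail. Using $(2\ell-1)!!/\ell!=\binom{2\ell}{\ell}2^{-\ell}$, I define
\[
f(\ell)=\sqrt2\,\binom{2\ell}{\ell}2^{-\ell}\,\Gamma(1+\ell/2)^2/\ell!,
\]
and I must show $f(\ell)\ge1$. I plan to step by two in $\ell$, so that all Gamma ratios become rational. A direct computation gives
\[
\frac{f(\ell+2)}{f(\ell)}=\frac{(2\ell+1)(2\ell+3)}{4(\ell+1)^2}<1 .
\]
Hence $f$ is decreasing along even $\ell$ and along odd $\ell$. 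Stirling's formula gives $\binom{2\ell}{\ell}2^{-\ell}\sim2^\ell/\sqrt{\pi\ell}$ and $\ell!/\Gamma(1+\ell/2)^2\sim2^\ell\sqrt{2/(\pi\ell)}$, so $f(\ell)\to1$. A sequence that decreases to $1$ stays $\ge1$, which finishes the proof.

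As sanity checks, $f(1)=\sqrt2\,\pi/4\approx1.11$ and $f(2)=3\sqrt2/4\approx1.06$.
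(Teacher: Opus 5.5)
Your proof is correct. It departs from the paper's in three sub-steps, most substantively in the last inequality of (iii).

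\textbf{Part (i).} For the $e^\beta$ upper bound, the paper uses the integral representation of $I_\mu$ (DLMF 10.32.2) and bounds $e^{\beta x}\le e^\beta$ under the integral. You instead use $(D+\ell)_k\ge k!$ to reduce the series to $I_0(\beta)\le e^\beta$. That last fact is standard, but you should justify it in a line, for instance $\binom{2k}{k}\le4^k$ gives $I_0\le\cosh\beta$.

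\textbf{Part (ii).} For the upper bound, the paper bounds $b_\ell$ through $|J_\mu(s)|\le(s/2)^\mu/\Gamma(\mu+1)$ and then averages over the $\chi_d$ radius. You bound the Kummer series coefficientwise, using $(\ell/2)_k\le(D+\ell)_k$ to get $\hypF(\frac\ell2;D+\ell;\frac{t^2}2)\le e^{t^2/2}$. Both routes give $|h_\ell(t)|\le c_\ell|t|^\ell$, and yours needs nothing beyond the Kummer form in \cref{lem:lift-eig}.

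\textbf{Part (iii).} To show $A(1,\ell)\le\sqrt2\,(2\ell-1)!!/\ell!$, the paper applies the duplication formula twice. This gives the exact identity $\ell!\,A(1,\ell)/(2\ell-1)!!=\sqrt2\,\Gamma(x)^2/(\Gamma(x-\frac14)\Gamma(x+\frac14))$ with $x=(\ell+1)/2$, and log-convexity of $\Gamma$ finishes it in one non-asymptotic line.

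Your $f(\ell)$ is exactly $\Gamma(x-\frac14)\Gamma(x+\frac14)/\Gamma(x)^2$. Your ratio $(2\ell+1)(2\ell+3)/(4(\ell+1)^2)$ equals $(x^2-\frac1{16})/x^2$, which is the step-one recursion of that quantity. So your argument trades the duplication formula for rational arithmetic plus a Stirling limit. It is valid: strict decrease to $1$ along each parity class gives $f>1$.

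The trade-off is directness. The paper's identity needs no asymptotics and shows at a glance that the constant $\sqrt2$ is asymptotically sharp. Your version, on the other hand, uses only elementary Gamma recurrences and a standard limit.
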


\begin{proof}
(i) The lower bound is the $k=0$ term of the positive series \eqref{eq:fh-exp}. For the first upper bound, use $I_\mu(\beta)=\frac{(\beta/2)^\mu}{\sqrt\pi\,\Gamma(\mu+1/2)}\int_{-1}^1e^{\beta x}(1-x^2)^{\mu-1/2}\dd x$ for $\mu>-\tfrac12$ \citep[Eq.~10.32.2]{dlmf}, with $\mu=\ell+\nu\ge\ell\ge1$. Bounding $e^{\beta x}\le e^\beta$ and using $\int_{-1}^1(1-x^2)^{\mu-1/2}\dd x=\sqrt\pi\,\Gamma(\mu+\frac12)/\Gamma(\mu+1)$ gives $I_\mu(\beta)\le(\beta/2)^\mu e^\beta/\Gamma(\mu+1)$. For the second, $\Gamma(k+\ell+D)=\Gamma(\ell+D)(\ell+D)_k\ge\Gamma(\ell+D)(\ell+D)^k$ in \eqref{eq:fh-exp}.

(ii) Lower bound: in the Kummer form of \cref{lem:lift-eig}, $\hypF(\ell/2;D+\ell;t^2/2)\ge1$ since all its terms are nonnegative, so $h_\ell(t)^2\ge c_\ell^2t^{2\ell}e^{-t^2}$; apply \ref{F3} with $\theta=1$. Upper bound: $|J_\mu(s)|\le(s/2)^\mu/\Gamma(\mu+1)$ for $s\ge0$ and $\mu\ge-\frac12$ \citep[Eq.~10.14.4]{dlmf} gives $|b_\ell(s)|\le\Gamma(D)(s/2)^\ell/\Gamma(D+\ell)$ for $s\ge0$, and by the parity of $b_\ell$ also $|b_\ell(s)|\le\Gamma(D)|s/2|^\ell/\Gamma(D+\ell)$ for all real $s$. Hence, by \ref{F5}, $|h_\ell(t)|\le\Gamma(D)(|t|/2)^\ell\E R^\ell/\Gamma(D+\ell)=c_\ell|t|^\ell$, and $\lambda^{\rm lift}_\ell\le c_\ell^2\E t^{2\ell}=c_\ell^2(2\ell-1)!!$.

(iii) $A\ge1$ is log-convexity of $\Gamma$. Next, $\partial_D\log A(D,\ell)=\psi(D)+\psi(D+\ell)-2\psi(D+\ell/2)\le0$ because the digamma function $\psi$ is concave on $(0,\infty)$. Hence $A(D,\ell)\le A(1,\ell)=\ell!/\Gamma(1+\ell/2)^2$ for $D\ge1$. Let $x=(\ell+1)/2$. The duplication formula $\Gamma(2z)=\pi^{-1/2}2^{2z-1}\Gamma(z)\Gamma(z+\frac12)$ \citep[Eq.~5.5.5]{dlmf}, applied with $2z=\ell+1$ and with $2z=\ell+\frac12$, gives $\ell!/\Gamma(1+\ell/2)=2^\ell\Gamma(x)/\sqrt\pi$ and $(2\ell-1)!!=2^\ell\Gamma(\ell+\frac12)/\sqrt\pi=2^{2\ell-1/2}\Gamma(x-\frac14)\Gamma(x+\frac14)/\pi$. Therefore
\[
\frac{\ell!\,A(1,\ell)}{(2\ell-1)!!}=\sqrt2\,\frac{\Gamma(x)^2}{\Gamma(x-\frac14)\Gamma(x+\frac14)}\le\sqrt2,
\]
again by log-convexity of $\Gamma$.
\end{proof}

\subsection{Proof of Theorem~\ref{thm:spectral} and Corollary~\ref{cor:examples}}\label{app:spectral-proofs}

\begin{lemma}[Eigenvalues of $k_\beta$ lie in $(0,\kappa_d(\beta))$]\label{lem:lam-range}
For $\ell\ge1$, $0<\lambda^\beta_\ell<\kappa_d(\beta)=\lambda^\beta_0$.
\end{lemma}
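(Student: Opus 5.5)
Positivity follows directly from the series in \eqref{eq:fh-exp}. Every term $\Gamma(D)(\beta/2)^{2k+\ell}/(k!\,\Gamma(k+\ell+D))$ is strictly positive when $\beta>0$, so $\lambda^\beta_\ell>0$ for every $\ell$. Equivalently, the $k=0$ lower bound of \cref{lem:eig-bounds}(i) is already positive.

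For the strict upper bound I will use the Funk--Hecke representation. For a unit-norm $Y\in\mathcal H_\ell$, \eqref{eq:fh-exp} combined with the addition theorem gives
\[
\lambda^\beta_\ell=\int e^{\beta\,a^\top u}P_\ell(a^\top u)\,\dd\sigma(u),
\]
which is the standard Funk--Hecke coefficient. It does not depend on $a$, and $P_\ell$ is normalized by $P_\ell(1)=1$. Similarly, $\lambda^\beta_0=\int e^{\beta a^\top u}\,\dd\sigma(u)=\kappa_d(\beta)$.

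Subtracting the two expressions,
\[
\kappa_d(\beta)-\lambda^\beta_\ell=\int e^{\beta a^\top u}\big(1-P_\ell(a^\top u)\big)\,\dd\sigma(u).
\]
The integrand is nonnegative because $|P_\ell|\le1$ on $[-1,1]$. It is also continuous, and for $\ell\ge1$ it is not identically zero: $P_\ell$ is a nonconstant polynomial, so $1-P_\ell(s)>0$ for all but finitely many $s\in[-1,1]$. Since $\sigma$ has full support and the law of $a^\top u$ under $\sigma$ has a density on $(-1,1)$ (and is uniform on the circle when $d=2$), the integral is strictly positive. This gives $\lambda^\beta_\ell<\kappa_d(\beta)$.

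The only delicate point is justifying the Funk--Hecke form with $P_\ell$ normalized at $1$, in particular for $d=2$ with $P_\ell(\cos\theta)=\cos(\ell\theta)$. That form follows from the addition theorem. To verify it, I will integrate $\int e^{\beta a^\top u}Y(u)\,\dd\sigma(u)=\lambda^\beta_\ell Y(a)$ against $Y(a)$, sum over an orthonormal basis of $\mathcal H_\ell$, and divide by $N_\ell$.

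As an alternative route, one can write $\lambda^\beta_\ell=\Gamma(D)(2/\beta)^\nu I_{\ell+\nu}(\beta)$ and use the fact that $I_\mu(\beta)$ is strictly decreasing in $\mu\ge0$ for fixed $\beta>0$. The case $\nu=0$ needs care there, so I prefer the integral argument above.
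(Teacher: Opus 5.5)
Your proposal is correct and is essentially the paper's proof: positivity from the positive series \eqref{eq:fh-exp}, and the strict upper bound from the Funk--Hecke representations $\lambda^\beta_\ell=\E[e^{\beta W_1}P_\ell(W_1)]$ and $\kappa_d(\beta)=\E[e^{\beta W_1}]$ for $W\sim\sigma$. Strictness then follows because $|P_\ell|\le1$ with equality at only finitely many points and $W_1$ has a density. Your derivation of the Funk--Hecke form from the addition theorem, which covers $d=2$ as well, is a valid self-contained justification of a step the paper takes as standard.
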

\begin{proof}
Positivity follows from the positive series \eqref{eq:fh-exp}. By the Funk--Hecke formula, $\lambda^\beta_\ell=\E[e^{\beta W_1}P_\ell(W_1)]$ and $\kappa_d(\beta)=\E[e^{\beta W_1}]$ for $W\sim\sigma$. Since $|P_\ell|\le1$, with $|P_\ell(x)|<1$ for all but finitely many $x\in(-1,1)$ when $\ell\ge1$, and $W_1$ has a density, the inequality is strict.
\end{proof}

\begin{proof}[Proof of Theorem~\ref{thm:spectral}]
\emph{Global bound.} $0\le\Ub\le\log(1+\Db^2/\kappa)$ follows from \cref{lem:decomp,lem:unif}.

\emph{(i)} Let $\kappa=\kappa_d(\beta)$, $u=\dd q/\dd\sigma-1$ (so $\int u\,\dd\sigma=0$ and $\|u\|_\infty=\delta$), and let $\mathcal K$ be the integral operator of $k_\beta$ on $L^2(\sigma)$, which acts on degree $\ell$ as multiplication by $\lambda_\ell:=\lambda^\beta_\ell$. Then $g_q=\kappa+\mathcal Ku$, and $h:=\mathcal Ku/\kappa$ satisfies $\int h\,\dd\sigma=0$ and $\|h\|_\infty\le\delta\le\frac12$, because $|\mathcal Ku(v)|\le\delta\int e^{\beta v^\top w}\dd\sigma(w)=\delta\kappa$ for every $v$. For $|x|\le\frac12$, $\log(1+x)=x-\frac12x^2+r(x)$ with $|r(x)|\le\frac{|x|^3}{3(1-|x|)}\le\frac23|x|^3$. Hence
\[
\Ub(q)=\int\log(1+h)\,(1+u)\,\dd\sigma=\underbrace{\int hu\,\dd\sigma-\tfrac12\int h^2\,\dd\sigma}_{=\mathcal Q_\beta(q)}\ \underbrace{-\ \tfrac12\int h^2u\,\dd\sigma+\int r(h)(1+u)\,\dd\sigma}_{=:\mathcal R}.
\]
By orthogonality of harmonics, $\int hu=\sum_\ell\lambda_\ell\|u_\ell\|^2/\kappa$ and $\int h^2=\sum_\ell\lambda_\ell^2\|u_\ell\|^2/\kappa^2\le\Db^2/\kappa$, using $\lambda_\ell<\kappa$ (\cref{lem:lam-range}) and $u_\ell=\hat q_\ell$ for $\ell\ge1$. Therefore $|\mathcal R|\le\big(\frac\delta2+\frac23(1+\delta)\|h\|_\infty\big)\int h^2\le\big(\frac12+\frac23\cdot\frac32\big)\delta\,\Db^2/\kappa=\frac32\,\delta\,\Db^2/\kappa$. Finally, each weight $\frac{\lambda_\ell}{\kappa}(1-\frac{\lambda_\ell}{2\kappa})$ lies in $[\frac{\lambda_\ell}{2\kappa},\frac{\lambda_\ell}{\kappa}]$, and \eqref{eq:mmd-spectral} gives the bounds on $\mathcal Q_\beta$.

\emph{(ii)} Upper bound: by the global bound, $\log(1+x)\le x$ and \eqref{eq:mmd-spectral},
$\Ub(q)\le\Db^2/\kappa=\kappa^{-1}\sum_{\ell\ge1}\lambda_\ell\|\hat q_\ell\|^2\le(S_K/\kappa)\sum_{\ell\ge1}\mu_\ell\|\hat q_\ell\|^2=(S_K/\kappa)\,\mathrm{MMD}_K^2$.
Lower bound: fix $\ell\ge1$ and $e\in\Sd$, and apply (i) to $q_\varepsilon=(1+\varepsilon P_\ell(e^\top\cdot))\sigma$, for which $\delta\le\varepsilon$ and only the degree-$\ell$ component of $q_\varepsilon-\sigma$ is nonzero. If $\mu_\ell=0$, then $\mathrm{MMD}_K(q_\varepsilon,\sigma)=0<\Ub(q_\varepsilon)$ for small $\varepsilon>0$ by uniqueness (\cref{lem:unif}), so $c_K=\infty=S_K$. Otherwise
\[
\frac{\Ub(q_\varepsilon)}{\mathrm{MMD}_K^2(q_\varepsilon,\sigma)}=\frac{\lambda_\ell(1-\lambda_\ell/(2\kappa))}{\kappa\,\mu_\ell}+O(\varepsilon)\ \ge\ \frac{\lambda_\ell}{2\kappa\,\mu_\ell}+O(\varepsilon),
\]
using $\lambda_\ell<\kappa$ (\cref{lem:lam-range}). Let $\varepsilon\to0$ and take the supremum over $\ell$.
\end{proof}

\begin{proof}[Proof of Corollary~\ref{cor:examples}]
(a) is \cref{prop:kernel}. (b) Since $\|u-v\|^2=2-2u^\top v$, we have $e^{-\gamma\|u-v\|^2}=e^{-2\gamma}k_{2\gamma}(u,v)$, so $\mu_\ell=e^{-2\gamma}\lambda^{2\gamma}_\ell$. By \eqref{eq:fh-exp}, $\lambda^x_\ell=\Gamma(D)(x/2)^\ell S_\ell(x)$ with $S_\ell(x)=\sum_k(x/2)^{2k}/(k!\Gamma(k+\ell+D))$, which is increasing in $x>0$. If $2\gamma\ge\beta$, then $\lambda^\beta_\ell/\mu_\ell=e^{2\gamma}(\beta/2\gamma)^\ell S_\ell(\beta)/S_\ell(2\gamma)\le e^{2\gamma}$. At $2\gamma=\beta$, $\mu_\ell=e^{-\beta}\lambda^\beta_\ell$ for every $\ell$, so $S_K=e^\beta$; by \eqref{eq:fh-exp}, $\kappa_d(\beta)=\Gamma(D)(2/\beta)^{D-1}I_{D-1}(\beta)$, and $I_\nu(\beta)=e^\beta(2\pi\beta)^{-1/2}(1+O(\beta^{-1}))$ \citep[Eq.~10.40.1]{dlmf} gives $e^\beta/\kappa_d(\beta)=O(\beta^{(d-1)/2})$. If $2\gamma<\beta$, the same ratio is at least $e^{2\gamma}(\beta/2\gamma)^\ell\to\infty$. For the uniformity loss, $\log\E_{q\otimes q}e^{-\gamma\|u-v\|^2}=-2\gamma+\log\iint k_{2\gamma}\,\dd q\,\dd q=-2\gamma+\log(\kappa_d(2\gamma)+D_{2\gamma}^2(q,\sigma))$ by \eqref{eq:mmd-energy}. Hence $D_{2\gamma}^2=\kappa_d(2\gamma)(e^{\Delta\mathcal L}-1)$, where $\Delta\mathcal L$ is the excess over $\sigma$. When $2\gamma\ge\beta$, $\lambda^\beta_\ell\le\lambda^{2\gamma}_\ell$ for all $\ell\ge1$, so $\Ub\le D_{2\gamma}^2/\kappa_d(\beta)$; with $\iint k_{2\gamma}\,\dd q\,\dd q\le e^{2\gamma}$ this gives $D_{2\gamma}^2=\kappa_d(2\gamma)(e^{\Delta\mathcal L}-1)\le e^{2\gamma}\Delta\mathcal L$ and $\Ub\le e^{2\gamma}\Delta\mathcal L/\kappa_d(\beta)$. Conversely, $\Delta\mathcal L\le D^2_{2\gamma}/\kappa_d(2\gamma)$, so linear control of $\Ub$ by $\Delta\mathcal L$ fails whenever $2\gamma<\beta$. With $\gamma=2$, linear control holds exactly when $\tau=1/\beta\ge1/4$. (c) For the heat kernel, $\mu_\ell=e^{-s\ell(\ell+d-2)}$ decays faster than $\lambda^\beta_\ell\ge\Gamma(D)(\beta/2)^\ell/\Gamma(D+\ell)$ (\cref{lem:eig-bounds}), so $S_K=\infty$. A band-limited kernel has $\mu_\ell=0$ for some $\ell\ge1$. First- and second-moment matching is $\mathrm{MMD}_K$ for $K(u,v)=u^\top v+(u^\top v)^2$, whose eigenvalues vanish for $\ell\ge3$.
\end{proof}

\subsection{Proof of Proposition~\ref{prop:kernel} (lifted kernel)}\label{app:kernel-proof}

\emph{Comparison and optimality.} By \eqref{eq:mmd-spectral} for both kernels, $\Db^2=\sum_{\ell\ge1}\lambda^\beta_\ell\|\hat q_\ell\|^2\le C^\star_{d,\beta}\sum_{\ell\ge1}\lambda^{\rm lift}_\ell\|\hat q_\ell\|^2=C^\star_{d,\beta}\Dlift^2$. Note that $\lambda^{\rm lift}_\ell>0$ by \cref{lem:eig-bounds}(ii). Conversely, fix $e\in\Sd$, $\ell\ge1$, and let $q=(1+\frac12P_\ell(e^\top u))\,\sigma(\dd u)$, a probability measure because $|P_\ell|\le1$ and $\int P_\ell(e^\top u)\dd\sigma=0$. Then $q-\sigma$ has only a degree-$\ell$ component, so $\Db^2/\Dlift^2=\lambda^\beta_\ell/\lambda^{\rm lift}_\ell$. Hence no constant smaller than $C^\star_{d,\beta}$ works.

\emph{Upper bound.} Writing $c_\ell^2=\Gamma(D)/(2^\ell A(D,\ell)\Gamma(D+\ell))$ and combining \cref{lem:eig-bounds}(i)--(iii),
\[
\frac{\lambda_\ell^\beta}{\lambda_\ell^{\rm lift}}\le\frac{\Gamma(D)(\beta/2)^\ell e^\beta}{\Gamma(D+\ell)}\cdot\frac{2^\ell A(D,\ell)\Gamma(D+\ell)\,3^{\ell+1/2}}{\Gamma(D)\,(2\ell-1)!!}=\sqrt3\,e^\beta(3\beta)^\ell\frac{A(D,\ell)}{(2\ell-1)!!}\le\sqrt6\,e^\beta\frac{(3\beta)^\ell}{\ell!}\le\sqrt6\,e^{4\beta}.
\]

\emph{Growth of $C^\star_{d,\beta}/\kappa_d(\beta)$.} The constant that enters \eqref{eq:main} is $C^\star_{d,\beta}/\kappa_d(\beta)$. For its lower bound, fix $\ell\ge1$. By \cref{lem:eig-bounds}(ii), $C^\star_{d,\beta}\ge\lambda^\beta_\ell/(c_\ell^2(2\ell-1)!!)$, and by \eqref{eq:fh-exp},
\[
\frac{\lambda^\beta_\ell}{\kappa_d(\beta)}=\frac{I_{\nu+\ell}(\beta)}{I_\nu(\beta)}=\prod_{k=0}^{\ell-1}r_{\nu+k}(\beta),\qquad r_\mu:=\frac{I_{\mu+1}}{I_\mu}\ \ge\ \frac{\beta}{\mu+1+\sqrt{\beta^2+(\mu+1)^2}}=e^{-\operatorname{asinh}((\mu+1)/\beta)},
\]
where the inequality is Amos' bound, valid for $\mu\ge0$ \citep{amos1974computation}; here $\nu=D-1\ge0$. Since $\operatorname{asinh}$ is increasing, $\sum_{k=0}^{\ell-1}\operatorname{asinh}\frac{\nu+k+1}\beta\le\beta\int_0^{(\nu+\ell+1)/\beta}\operatorname{asinh}(x)\,\dd x$. On the other side, $c_\ell^2(2\ell-1)!!=\Gamma(D+\frac\ell2)^2\,\Gamma(\ell+\frac12)/(\sqrt\pi\,\Gamma(D+\ell)^2)$, and Stirling's formula gives $\frac1\ell\log\big(c_\ell^2(2\ell-1)!!\big)\to-\log2$ as $\ell\to\infty$ for fixed $d$. Take $\ell=\lfloor c\beta\rfloor$ and let $\beta\to\infty$:
\[
\liminf_{\beta\to\infty}\frac1\beta\log\frac{C^\star_{d,\beta}}{\kappa_d(\beta)}\ \ge\ c\log2-\int_0^c\operatorname{asinh}(x)\,\dd x=c\log2-c\operatorname{asinh}c+\sqrt{1+c^2}-1 .
\]
The right-hand side is maximized where $\operatorname{asinh}c=\log2$, i.e., $c=\frac34$, where it equals $\frac54-1=\frac14$. This proves the last claim of \cref{prop:kernel}. By \cref{thm:spectral}(ii), $C^\star_{d,\beta}/\kappa_d(\beta)$ is optimal up to a factor of $2$ among bounds of the form $\Ub\le c\,\Dlift^2$, so the exponential growth is intrinsic to the lifted kernel. \hfill$\square$

\subsection{Proof of Theorem~\ref{thm:infonce} (sharp comparison with population InfoNCE)}\label{app:infonce-proof}

\emph{Upper bound.} With equal marginals, $\bar q=q$. By \cref{lem:decomp}, $\Lb-\mathcal L^\star_\beta\le2\beta\Delta+\log(1+\Db^2(q,\sigma)/\kappa_d(\beta))$. By \cref{lem:align}, $\Delta\le k_0\sqrt\J$. By \cref{prop:kernel,thm:structure,lem:align},
\[
\Db^2(q,\sigma)\le C^\star_{d,\beta}\Dlift^2(q,\sigma)\le C^\star_{d,\beta}\big(\sqrt\J+\tfrac32\Delta\big)^2\le C^\star_{d,\beta}\big(1+\tfrac32k_0\big)^2\J.
\]
This gives \eqref{eq:main}.
The lower bound $\Lb\ge\mathcal L^\star_\beta$ is \cref{lem:decomp,lem:unif}.

\emph{Minimizing sequences.} ``$\Rightarrow$'' follows from \eqref{eq:main}. For ``$\Leftarrow$'', suppose $\Lb(P_n)-\mathcal L^\star_\beta\to0$. By \cref{lem:decomp,lem:unif}, both nonnegative terms $2\beta\Delta_n$ and $\Ub(q_n)$ tend to zero. We claim $q_n\rightharpoonup\sigma$. Otherwise, by compactness, some subsequence converges weakly to $q'\ne\sigma$, and continuity of $\Phi$ (\cref{lem:unif}) gives $\Ub(q')=0$, which contradicts uniqueness. Since $K_{\rm lift}$ is bounded and continuous, $q_n\otimes q_n\rightharpoonup\sigma\otimes\sigma$ implies $\Dlift^2(q_n,\sigma)=\iint K_{\rm lift}\,\dd(q_n-\sigma)^{\otimes2}\to0$. By \cref{thm:structure}, $\sqrt{\J(P_n)}\le\Dlift(q_n,\sigma)+\frac32\Delta_n\to0$.

\emph{Refined alignment term and sharpness.} In the upper bound we may use $\Delta\le\Delta^\star(\J)$ (\cref{lem:align}) in place of $\Delta\le k_0\sqrt\J$, which gives the refined form stated after \cref{thm:infonce}. Consider the family of \cref{sec:infonce-comparison}. Let $(n,h)$ be a Haar-distributed orthonormal pair and, for $s\in[0,1]$, let $U=\sqrt s\,n+\sqrt{1-s}\,h$ and $V=\sqrt s\,n-\sqrt{1-s}\,h$. Both are unit vectors, and for every orthogonal $O$ the pair $(On,Oh)$ has the law of $(n,h)$; hence $U$ and $V$ are both $\sigma$-distributed. Moreover $C=\sqrt s\,n$, $\Delta=1-s$, and $Z=\sqrt s\,Rn\sim\mathcal N(0,sI_d)$ by \ref{F1}, so by \ref{F3}
\begin{equation}\label{eq:Js}
\J=\E_t\big(e^{-st^2/2}-e^{-t^2/2}\big)^2=(1+2s)^{-1/2}-2(2+s)^{-1/2}+3^{-1/2}=j(1-s)=j(\Delta).
\end{equation}
Since $q=\sigma$, $\Ub(q)=0$ and $\Lb-\mathcal L^\star_\beta=2\beta\Delta=2\beta\Delta^\star(\J)$ exactly. For the limit, the upper bound gives $(\Lb-\mathcal L^\star_\beta)/\sqrt\J\le2\beta k_0+c\sqrt\varepsilon$ when $0<\J\le\varepsilon$, with $c=\bar c_{d,\beta}$. Along the family, $j(\Delta)=\Delta^2/(12\sqrt3)+O(\Delta^3)$ (indeed $j(0)=j'(0)=0$ and $j''(0)=\frac32 3^{-5/2}$), so the ratio tends to $2\beta k_0$ as $\Delta\downarrow0$, with $\J>0$ for $\Delta>0$. \hfill$\square$

\subsection{Proof of Proposition~\ref{thm:barrier} (square-root barrier)}

Write $F(P)=\|\mathcal A(P)-m^\star\|^2_{\mathcal H}$ with $\mathcal A(P)=\E_P\Psi(U,V,\xi)$, which is affine in $P$: $\mathcal A(P_p)=(1-p)\mathcal A(P_0)+p\mathcal A(P_1)$. Since $F(P_0)=0$, $\mathcal A(P_0)=m^\star$, so $\mathcal A(P_p)-m^\star=p(\mathcal A(P_1)-m^\star)$ and $F(P_p)=p^2F(P_1)$. Both marginals of $P_0$ and of $P_1$ are $\sigma$, hence so are those of $P_p$, and $\Ub(\sigma)=0$. Since $\Delta$ is linear in the law and $\Delta(P_0)=0$, \cref{lem:decomp} gives $\Lb(P_p)-\mathcal L^\star_\beta=2\beta\Delta(P_p)=2\beta p\Delta(P_1)$.

For the consequence, let $P_1$ be any law with uniform marginals and $\Delta(P_1)>0$ (for instance $\sigma\otimes\sigma$, or a member of the family of \cref{sec:infonce-comparison}). If $F(P_1)=0$, then $F$ vanishes on a law with $\Lb>\mathcal L^\star_\beta$. Otherwise, along $P_p$ we have $\Lb(P_p)-\mathcal L^\star_\beta=c\sqrt{F(P_p)}$ with $c=2\beta\Delta(P_1)/\sqrt{F(P_1)}>0$ and $F(P_p)\downarrow0$ as $p\downarrow0$, which contradicts $\Lb-\mathcal L^\star_\beta\le\psi(F)$ with $\psi(x)/\sqrt x\to0$. \emph{Coverage.} $\J$ is Hilbertian with $\mathcal H=L^2(\sigma\otimes\mathcal N(0,1))$, $\xi=R$, $\Psi(U,V,R)=\big[(a,t)\mapsto e^{itRa^\top C}\big]$ (of norm $1$), and $m^\star=g$; changing the radius law or the frequency weight changes only $\Psi$ and $\mathcal H$. Every squared MMD with a bounded kernel is Hilbertian through its kernel mean embedding. \hfill$\square$

\subsection{Proof of Theorem~\ref{thm:slaving} (nuisance preference of pure SG) and Corollary~\ref{cor:threshold}}\label{app:slaving}

\paragraph{Setting.} Let $(\mathsf P_\vartheta)_{\vartheta\ge0}=(e^{\vartheta\nabla^2_{\mathbb S}})$ be the heat semigroup of the Laplace--Beltrami operator $\nabla^2_{\mathbb S}$ on $\Sd$, with kernel $p_\vartheta(w,\cdot)$. Given $W\sim q$, the views $U,V$ are drawn independently from $p_\vartheta(W,\cdot)$; write $P_\vartheta^q$ for the resulting paired law. Since $\nabla^2_{\mathbb S}$ acts on $\mathcal H_\ell$ as $-\ell(\ell+d-2)$, $\E[U\mid W]=e^{-(d-1)\vartheta}W$, so $\E U^\top V=e^{-2(d-1)\vartheta}$ and $\Delta(\vartheta)=\frac12(1-e^{-2(d-1)\vartheta})$, independently of $q$. The marginal of each view is $\mathsf P_\vartheta q$, with harmonic coefficients $e^{-\ell(\ell+d-2)\vartheta}\hat q_\ell$.

\paragraph{Step 1: the pair generator.} For a smooth $F:\R^d\to\mathbb C$ let $G(u,v)=F(\frac{u+v}2)$. For a function $\phi$ on $\R^d$ restricted to the sphere, $\nabla^2_{\mathbb S}\phi(u)=\mathrm{tr}(\Pi_u\nabla^2\phi(u))-(d-1)u^\top\nabla\phi(u)$ with $\Pi_u=I-uu^\top$. Applying this in $u$ and in $v$ and evaluating at $u=v=w$ gives
\[
(\nabla^2_{\mathbb S,u}+\nabla^2_{\mathbb S,v})G\big|_{u=v=w}=\tfrac12\mathrm{tr}(\Pi_w\nabla^2F(w))-(d-1)\,w^\top\nabla F(w)=\tfrac12\nabla^2_{\mathbb S}F(w)-\tfrac{d-1}2\,w^\top\nabla F(w)=:\mathcal GF(w).
\]
The first term is the heat generator at half speed; the second is the generator of the radial contraction $c\mapsto e^{-(d-1)\vartheta/2}c$. Since $G$ is smooth on $\Sd\times\Sd$ and the product semigroup $\mathsf P_\vartheta\otimes\mathsf P_\vartheta$ has generator $\nabla^2_{\mathbb S,u}+\nabla^2_{\mathbb S,v}$,
\begin{equation}\label{eq:pairgen}
\frac{\dd}{\dd\vartheta}\E\,F(C)\Big|_{\vartheta=0}=\int\mathcal GF\,\dd q,\qquad C=\tfrac12(U+V).
\end{equation}
We apply this to $F_{a,t}(c)=\E_Re^{itRa^\top c}$, for which $c^\top\nabla F_{a,t}(c)=t\,\partial_tF_{a,t}(c)$. All derivatives of $F_{a,t}$ are bounded by polynomials in $|t|$ (moments of $R$, \ref{F5}). Since $\frac{\dd^k}{\dd\vartheta^k}\E F_{a,t}(C)=\E[(\mathcal L_u+\mathcal L_v)^kG(U,V)]$ with $\mathcal L=\nabla^2_{\mathbb S}$, the $\vartheta$-derivatives of every order are bounded by polynomials in $|t|$ uniformly in $\vartheta\in[0,1]$. This justifies differentiating under $\E_{a,t}$ below by dominated convergence and shows that $\vartheta\mapsto\J$ is $C^\infty$ on $[0,1]$.

\paragraph{Step 2: part (i).} Write $\varphi_\vartheta(ta)=\E F_{a,t}(C)$ under $P^q_\vartheta$ and $u=\dd q/\dd\sigma-1=\sum_{\ell\ge1}u_\ell$ (for a general probability measure $q$, read $\int\cdot\,u\,\dd\sigma$ as integration against $q-\sigma$ and $u_\ell$ as the degree-$\ell$ projection of $q-\sigma$; the expansions below converge because $F_{a,t}$ and $\mathcal GF_{a,t}$ are smooth on the sphere). At $\vartheta=0$, $C=W$, and by \cref{lem:lift-eig}
\[
D(a,t):=\varphi_0(ta)-g(t)=\int F_{a,t}\,\dd(q-\sigma)=\sum_{\ell\ge1}i^\ell h_\ell(t)\,u_\ell(a).
\]
By \eqref{eq:pairgen}, $\partial_\vartheta\varphi_\vartheta|_0=\int\mathcal GF_{a,t}\,\dd\sigma+\int\mathcal GF_{a,t}\,u\,\dd\sigma$. The first integral equals $-\frac{d-1}2t\partial_t\int F_{a,t}\dd\sigma=\frac{d-1}2t^2g(t)$, a function of $t$ alone. For the second, self-adjointness of $\nabla^2_{\mathbb S}$ and $\int F_{a,t}u\,\dd\sigma=D(a,t)$ give
\[
\int\mathcal GF_{a,t}\,u\,\dd\sigma=-\tfrac12\sum_{\ell\ge1}\ell(\ell+d-2)\,i^\ell h_\ell(t)u_\ell(a)-\tfrac{d-1}2\sum_{\ell\ge1}i^\ell\,t\,h_\ell'(t)\,u_\ell(a).
\]
Now $\partial_\vartheta\J|_0=2\,\mathrm{Re}\,\E_{a,t}[\overline{D}\,\partial_\vartheta\varphi_\vartheta|_0]$. The contribution of $\frac{d-1}2t^2g(t)$ vanishes, because $\E_aD(a,t)=0$ (every $u_\ell$ with $\ell\ge1$ has mean zero under $\sigma$). By orthogonality of harmonics of different degrees, and since $h_\ell$ is real,
\[
\partial_\vartheta\J|_{\vartheta=0}=-\sum_{\ell\ge1}\Big(\ell(\ell+d-2)\E_th_\ell^2+(d-1)\E_t[t\,h_\ell h_\ell']\Big)\|u_\ell\|^2 .
\]
Gaussian integration by parts, $\E[t\,\phi'(t)]=\E[(t^2-1)\phi(t)]$ for $t\sim\mathcal N(0,1)$, applied to $\phi=h_\ell^2$ gives $(d-1)\E[th_\ell h_\ell']=\frac{d-1}2\E[(t^2-1)h_\ell^2]$, which proves part (i) with $G_\ell=\ell(\ell+d-2)\lambda^{\rm lift}_\ell+\frac{d-1}2\E_t[(t^2-1)h_\ell(t)^2]$ and $\|u_\ell\|^2=\|\hat q_\ell\|^2$. For the bounds on $G_\ell$ we show
\begin{equation}\label{eq:radial}
\frac{2\ell+1}{3}\ <\ \frac{\E_t[t^2h_\ell(t)^2]}{\E_t[h_\ell(t)^2]}\ <\ 2\ell+1 .
\end{equation}
By Kummer's Euler integral, $\hypF(a;b;-z)=\E\,e^{-zS}$ with $S\sim\mathrm{Beta}(a,b-a)$ for $b>a>0$ \citep[Eq.~13.4.1]{dlmf}, so \cref{lem:lift-eig} gives $h_\ell(t)=c_\ell t^\ell\,\E e^{-St^2/2}$ with $S\sim\mathrm{Beta}(D+\frac\ell2,\frac\ell2)$. Let $S'$ be an independent copy and $A=1+S+S'\in(1,3)$ almost surely. By Fubini and \ref{F3},
\[
\E_t[t^{2m}h_\ell^2]=c_\ell^2\,\E_{S,S'}\E_t\big[t^{2\ell+2m}e^{-(S+S')t^2/2}\big]=c_\ell^2\,(2\ell+2m-1)!!\,\E A^{-\ell-m-1/2},
\]
so the ratio in \eqref{eq:radial} equals $(2\ell+1)\,\E A^{-\ell-3/2}/\E A^{-\ell-1/2}$, and $1/3<A^{-1}<1$ gives \eqref{eq:radial}. Hence $\frac{2(\ell-1)}3\lambda^{\rm lift}_\ell<\E[(t^2-1)h_\ell^2]<2\ell\lambda^{\rm lift}_\ell$, which, combined with the expression for $G_\ell$ above, gives
\begin{equation}\label{eq:G-bracket}
\tfrac{(d-1)(\ell-1)}3<\frac{G_\ell}{\lambda^{\rm lift}_\ell}-\ell(\ell+d-2)<(d-1)\ell ;
\end{equation}
in particular the radial term is strictly positive for every $\ell\ge1$. Finally $\lambda^{\rm lift}_\ell>0$ (\cref{lem:eig-bounds}(ii)).

\paragraph{Step 3: part (ii).} The map $q\mapsto P^q_\vartheta$ is linear, so for $q_\varepsilon=(1+\varepsilon f)\sigma$ we have $\varphi_\vartheta-g=A_\vartheta+\varepsilon B_\vartheta$, where $A_\vartheta(t)=\varphi_\vartheta^{\sigma}(ta)-g(t)$ depends on $t$ only (rotation invariance) and $B_\vartheta(a,t)=\int\E[F_{a,t}(C)\mid W=w]f(w)\,\dd\sigma(w)$. Rotation equivariance of the channel makes $\E_a\E[F_{a,t}(C)\mid W=w]$ independent of $w$, so $\E_aB_\vartheta(a,t)=0$ and the cross term vanishes:
\[
\J(\vartheta,\varepsilon)=\mathcal A(\vartheta)+\varepsilon^2\mathcal C(\vartheta),\qquad\mathcal A(\vartheta)=\E_tA_\vartheta^2,\quad\mathcal C(\vartheta)=\E_{a,t}|B_\vartheta|^2 .
\]
Here $\mathcal A,\mathcal C\in C^\infty[0,1]$ by Step 1; $\mathcal C(0)=\sum_\ell\lambda^{\rm lift}_\ell\|\hat f_\ell\|^2=\J_{\rm unif}(q_\varepsilon)/\varepsilon^2$ and $\mathcal C'(0)=-\sum_\ell G_\ell\|\hat f_\ell\|^2=:-\mathsf G<0$ by part (i). By Step 2 with $q=\sigma$, $A_\vartheta(t)=\frac{d-1}2\vartheta\,t^2g(t)+O(\vartheta^2)$ with a polynomially bounded remainder, so by \ref{F3}
\[
\mathcal A(\vartheta)=a_2\vartheta^2+O(\vartheta^3),\qquad a_2=\frac{(d-1)^2}{4}\E\,t^4e^{-t^2}=\frac{(d-1)^2}{12\sqrt3}.
\]
(This matches \cref{lem:align}: to leading order the channel is the extremal family with $\Delta=(d-1)\vartheta$.) Choose $\vartheta_0\in(0,1]$ with $\mathcal A''\ge a_2$ on $[0,\vartheta_0]$, and let $M=\max_{[0,1]}|\mathcal C''|$. Then $\mathcal A'(\vartheta)\ge a_2\vartheta$ and $\mathcal C'(\vartheta)\ge-\mathsf G-M\vartheta$ on $[0,\vartheta_0]$, so $\partial^2_\vartheta\J\ge a_2-\varepsilon^2M\ge a_2/2$ on $[0,\vartheta_0]$ once $\varepsilon^2M\le a_2/2$, and $\J$ is strictly convex there. Moreover $\partial_\vartheta\J\ge(a_2-\varepsilon^2M)\vartheta-\varepsilon^2\mathsf G>0$ for $\vartheta>2\varepsilon^2\mathsf G/a_2$; take $\varepsilon$ also so small that $2\varepsilon^2\mathsf G/a_2<\vartheta_0$. Since $\partial_\vartheta\J(0,\varepsilon)=-\varepsilon^2\mathsf G<0$, the unique minimizer $\vartheta^\star_\varepsilon$ over $[0,\vartheta_0]$ is interior and satisfies $\vartheta^\star_\varepsilon\le2\varepsilon^2\mathsf G/a_2$. At an interior minimizer, $0=\mathcal A'(\vartheta^\star)+\varepsilon^2\mathcal C'(\vartheta^\star)=2a_2\vartheta^\star-\varepsilon^2\mathsf G+O(\vartheta^{\star2}+\varepsilon^2\vartheta^\star)$, so $\vartheta^\star_\varepsilon=\frac{\varepsilon^2\mathsf G}{2a_2}(1+O(\varepsilon^2))$. With $\Delta(\vartheta)=(d-1)\vartheta(1+O(\vartheta))$,
\[
\frac{\Delta(\vartheta^\star_\varepsilon)}{\J_{\rm unif}(q_\varepsilon)}\to\frac{(d-1)\mathsf G}{2a_2\,\mathcal C(0)}=\frac{6\sqrt3\,\mathsf G}{(d-1)\,\mathcal C(0)}=\rho(f)\ >\ 6\sqrt3,
\]
using $\mathsf G>(d-1)\,\mathcal C(0)$, which follows from part (i) since $\ell(\ell+d-2)\ge d-1$. For $f\in\mathcal H_\ell$, part (i) gives $6\sqrt3\big[\frac{\ell(\ell+d-2)}{d-1}+\frac{\ell-1}3\big]<\rho(f)<6\sqrt3\big[\frac{\ell(\ell+d-2)}{d-1}+\ell\big]$.

\paragraph{Step 4: part (iii).} \emph{InfoNCE.} The paired law $P_\vartheta^{q_\varepsilon}$ has equal marginals $r_\vartheta\sigma$ with $r_\vartheta=1+\varepsilon\mathsf P_\vartheta f$, so by \cref{lem:decomp} $\Lb-\mathcal L^\star_\beta=2\beta\Delta(\vartheta)+\Ub(r_\vartheta\sigma)$. Let $g_\vartheta=\kappa+\varepsilon\mathcal K\mathsf P_\vartheta f$ and $\dot r=\varepsilon\nabla^2_{\mathbb S}\mathsf P_\vartheta f$, so $\|\dot r\|_\infty\le\varepsilon\|\nabla^2_{\mathbb S}f\|_\infty$ and $\int\dot r\,\dd\sigma=0$. Differentiating $\Ub=\int\log g_\vartheta\,r_\vartheta\,\dd\sigma-\log\kappa$,
\[
\frac{\dd}{\dd\vartheta}\Ub=\int\log\frac{g_\vartheta}{\kappa}\,\dot r\,\dd\sigma+\int(\mathcal K\dot r)\Big(\frac{r_\vartheta}{g_\vartheta}-\frac1\kappa\Big)\dd\sigma,
\]
where we subtracted $\int\dot r\,\dd\sigma\cdot\log\kappa=0$ and $\kappa^{-1}\int\mathcal K\dot r\,\dd\sigma=\int\dot r\,\dd\sigma=0$. For $\varepsilon\|f\|_\infty\le\frac14$, $|\log(g_\vartheta/\kappa)|\le2\varepsilon\|f\|_\infty$, $|r_\vartheta/g_\vartheta-1/\kappa|\le4\varepsilon\|f\|_\infty/\kappa$ and $|\mathcal K\dot r|\le\kappa\|\dot r\|_\infty$. Hence $|\frac{\dd}{\dd\vartheta}\Ub|\le6\varepsilon^2\|f\|_\infty\|\nabla^2_{\mathbb S}f\|_\infty$, while $\frac{\dd}{\dd\vartheta}2\beta\Delta=2\beta(d-1)e^{-2(d-1)\vartheta}\ge2\beta(d-1)e^{-2(d-1)}$ on $[0,1]$. For small $\varepsilon$ the objective is strictly increasing on $[0,\vartheta_0]$, and $\vartheta=0$ is its unique minimizer. \hfill$\square$

\begin{proof}[Proof of \cref{cor:threshold}]
With $\alpha=a\varepsilon^2$, $\J_\alpha(\vartheta)=\mathcal A(\vartheta)+\varepsilon^2\mathcal C(\vartheta)+a\varepsilon^2\Delta(\vartheta)$, and $|\Delta''|\le2(d-1)^2$, so $\partial^2_\vartheta\J_\alpha\ge a_2-\varepsilon^2(M+2a(d-1)^2)>0$ on $[0,\vartheta_0]$ for small $\varepsilon$: $\J_\alpha$ is strictly convex there and has a unique minimizer. Its derivative at $0$ is $\varepsilon^2(a(d-1)-\mathsf G)$. If $a(d-1)\ge\mathsf G$, the minimizer is $\vartheta=0$. Otherwise it is interior, and as in Step~3, $0=2a_2\vartheta^\star-\varepsilon^2(\mathsf G-a(d-1))+O(\vartheta^{\star2}+\varepsilon^2\vartheta^\star)$ gives $\vartheta^\star=\varepsilon^2(\mathsf G-a(d-1))/(2a_2)\,(1+O(\varepsilon^2))$. Hence $\Delta(\vartheta^\star)=\varepsilon^2\big[(d-1)\mathsf G-a(d-1)^2\big]_+/(2a_2)+o(\varepsilon^2)$. Since $(d-1)\varepsilon^2\mathsf G/(2a_2)=\rho(f)\J_{\rm unif}(q_\varepsilon)$ and $(d-1)^2/(2a_2)=6\sqrt3$, this is the claim. More generally, for every $\alpha\ge\varepsilon^2\mathsf G/(d-1)$ and $\vartheta\in(0,\vartheta_0]$, $\partial_\vartheta\J_\alpha\ge(a_2-\varepsilon^2M)\vartheta-\varepsilon^2\mathsf G(1-e^{-2(d-1)\vartheta})\ge\vartheta\,(a_2-\varepsilon^2M-2(d-1)\varepsilon^2\mathsf G)>0$ for small $\varepsilon$, so $\vartheta=0$ is the unique minimizer; for $\alpha<\varepsilon^2\mathsf G/(d-1)$ the derivative at $0$ is negative and the minimizer is positive. In particular every fixed $\alpha>0$ gives $\vartheta=0$ for small $\varepsilon$.
\end{proof}

\paragraph{Numerical constants.} The values $\rho_\ell$ quoted after \eqref{eq:balance} are $\rho(f)$ for $f\in\mathcal H_\ell$, i.e., $\rho_\ell=6\sqrt3\,G_\ell/((d-1)\lambda^{\rm lift}_\ell)$, computed by quadrature from \cref{lem:lift-eig}.

\subsection{Proof of Proposition~\ref{prop:calibrated} and Corollary~\ref{cor:linear} (linear certificates)}\label{app:calibrated-proof}

If $\J=0$ then $\Delta=0$ by \cref{lem:align}, so $\J_\alpha=0$; the converse is trivial. By \cref{lem:decomp}, $\log(1+x)\le x$, \cref{prop:kernel} and \cref{thm:structure},
\[
\Lb-\mathcal L^\star_\beta\le2\beta\Delta+\frac{C^\star_{d,\beta}}{\kappa_d(\beta)}\Dlift^2(q,\sigma)\le2\beta\Delta+\frac{C^\star_{d,\beta}}{\kappa_d(\beta)}\big(\sqrt\J+\tfrac32\Delta\big)^2 .
\]
Using $(x+y)^2\le2x^2+2y^2$ and $\Delta^2\le\Delta$, the right-hand side is at most $(2\beta+\frac92c_\beta)\Delta+2c_\beta\J\le c_{\beta,\alpha}(\alpha\Delta+\J)$ with $c_\beta=C^\star_{d,\beta}/\kappa_d(\beta)$. \hfill$\square$

\begin{proof}[Proof of \cref{cor:linear}]
By \cref{lem:decomp}, $\log(1+x)\le x$ and \eqref{eq:mmd-spectral}, $\Lb-\mathcal L^\star_\beta\le2\beta\Delta+\Db^2/\kappa\le2\beta\Delta+(S_K/\kappa)\,\mathrm{MMD}_K^2(q,\sigma)$, which is at most $\max\{2\beta/\alpha,S_K/\kappa\}(\alpha\Delta+\mathrm{MMD}_K^2)$. If $S_K=\infty$, restrict to aligned laws, where $\Delta=0$ and the bound would read $\Ub\le c\,\mathrm{MMD}_K^2$; this contradicts the lower bound of \cref{thm:spectral}(ii), which gives $c_K\ge\sup_\ell\lambda^\beta_\ell/(2\kappa\mu_\ell)=\infty$.
\end{proof}

\subsection{General channels}\label{app:gain}

This section proves \cref{prop:condA} and the gain bounds stated after it: \cref{prop:centered} bounds the gain of every centered channel, and \cref{prop:hetero} shows that heteroscedastic heat channels attain the order $\sqrt{\J_{\rm unif}}$. We use the notation and facts of \cref{app:prelim,app:spectral,app:slaving}.

\paragraph{Notation.} For a probability measure $q$ on $\Sd$ and $\ell\ge1$ let
\[
u_\ell(x)=\int N_\ell P_\ell(x^\top y)\,\dd q(y)=\sum_{m\le N_\ell}Y_{\ell m}(x)\int Y_{\ell m}\,\dd q ,
\]
the degree-$\ell$ component of $q-\sigma$ (for $q=(1+u)\sigma$ this is the projection of $u$ onto $\mathcal H_\ell$), so that $\|u_\ell\|_{L^2(\sigma)}=\|\hat q_\ell\|$ and $\J_{\rm unif}(q)=\Dlift^2(q,\sigma)=\sum_{\ell\ge1}\lambda^{\rm lift}_\ell\|\hat q_\ell\|^2$ by \eqref{eq:mmd-spectral} and \cref{lem:lift-eig}. By the Cauchy--Schwarz inequality and the addition theorem,
\begin{equation}\label{eq:ul-sup}
|u_\ell(x)|\le\sqrt{N_\ell}\,\|\hat q_\ell\|,\qquad \|\hat q_\ell\|^2=\iint N_\ell P_\ell(x^\top y)\,\dd q(x)\,\dd q(y)\le N_\ell .
\end{equation}
$G_\ell>0$ are the constants of \cref{thm:slaving}(i) and \eqref{eq:G-bracket}. A \emph{family} is a map $\vartheta\mapsto P_\vartheta$, $\vartheta\in[0,\vartheta_0]$, into paired laws; for an objective $F$ we write $f(\vartheta)=F(P_\vartheta)$ and $\Delta(\vartheta)=\Delta(P_\vartheta)$.

\begin{lemma}[Summability]\label{lem:summable}
$G_\ell<\ell(\ell+2d-3)\,\lambda^{\rm lift}_\ell$ for $\ell\ge1$, and $\sum_{\ell\ge1}N_\ell\,\ell^k\lambda^{\rm lift}_\ell<\infty$ for every $k\ge0$. Consequently
\[
K_d:=\Big(\sum_{\ell\ge1}\frac{N_\ell G_\ell^2}{\lambda^{\rm lift}_\ell}\Big)^{1/2}<\infty,
\qquad
\Gamma_q(x):=\sum_{\ell\ge1}G_\ell\,u_\ell(x)
\]
converges absolutely and uniformly on $\Sd$, and $\Gamma_q$ is continuous with $\int\Gamma_q\,\dd\sigma=0$.
\end{lemma}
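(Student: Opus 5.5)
The first claim, $G_\ell<\ell(\ell+2d-3)\lambda^{\rm lift}_\ell$, is read off the upper side of \eqref{eq:G-bracket}, which gives $G_\ell<\lambda^{\rm lift}_\ell\big(\ell(\ell+d-2)+(d-1)\ell\big)$. Since $\ell(\ell+d-2)+(d-1)\ell=\ell(\ell+2d-3)$, this is exactly the stated bound, and nothing else is needed.

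For the summability $\sum_\ell N_\ell\ell^k\lambda^{\rm lift}_\ell<\infty$, I will use the upper bound $\lambda^{\rm lift}_\ell\le c_\ell^2(2\ell-1)!!$ from \cref{lem:eig-bounds}(ii), combined with the closed form
\[
c_\ell^2(2\ell-1)!!=\frac{\Gamma(D+\ell/2)^2\,\Gamma(\ell+\tfrac12)}{\sqrt\pi\,\Gamma(D+\ell)^2}
\]
from the proof of \cref{prop:kernel}. The Stirling asymptotics recorded there give $\frac1\ell\log\big(c_\ell^2(2\ell-1)!!\big)\to-\log2$, so $\lambda^{\rm lift}_\ell$ decays like $2^{-\ell}$ up to subexponential factors. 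Meanwhile $N_\ell$ is a polynomial in $\ell$ of degree $d-2$. Hence $N_\ell\ell^k\lambda^{\rm lift}_\ell\le C\,\ell^{k+d-2}(2/3)^{\ell}$ for large $\ell$, which is summable.

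Next I bound $K_d$. By the first claim, $G_\ell^2/\lambda^{\rm lift}_\ell<\ell^2(\ell+2d-3)^2\lambda^{\rm lift}_\ell\le C\ell^4\lambda^{\rm lift}_\ell$. So $K_d^2\le C\sum_\ell N_\ell\ell^4\lambda^{\rm lift}_\ell<\infty$ by the previous step with $k=4$.

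For $\Gamma_q$, I use \eqref{eq:ul-sup}, which gives $|u_\ell(x)|\le\sqrt{N_\ell}\|\hat q_\ell\|\le N_\ell$. Then $\sup_x|G_\ell u_\ell(x)|\le G_\ell N_\ell\le C N_\ell\ell^2\lambda^{\rm lift}_\ell$, and this is summable by the $k=2$ case. The Weierstrass M-test gives absolute and uniform convergence. Each $u_\ell$ is a polynomial restricted to the sphere, hence continuous, so the uniform limit $\Gamma_q$ is continuous. Finally, each $u_\ell\in\mathcal H_\ell$ with $\ell\ge1$ has $\int u_\ell\,\dd\sigma=0$, and uniform convergence lets me integrate term by term, so $\int\Gamma_q\,\dd\sigma=0$.

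The only delicate point is the exponential decay of $\lambda^{\rm lift}_\ell$. If one prefers not to rely on the quoted Stirling limit, the ratio of consecutive terms of $\Gamma(D+\ell/2)^2\Gamma(\ell+\frac12)/\Gamma(D+\ell)^2$ can be computed directly, and it tends to $1/2$.
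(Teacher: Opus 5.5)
Your proof is correct and follows the paper's argument essentially step by step: you read the first claim off the upper side of \eqref{eq:G-bracket}, and you get exponential decay of $\lambda^{\rm lift}_\ell$ from \cref{lem:eig-bounds}(ii) and the Stirling limit, against polynomial growth of $N_\ell$. You then bound $K_d$ the same way, apply the Weierstrass M-test via \eqref{eq:ul-sup}, and integrate termwise for the zero mean. The only differences are cosmetic: you use the geometric ratio $2/3$ where the paper uses $3/5$, and you add a consecutive-ratio check as an alternative to quoting the Stirling limit.
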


\begin{proof}
The first claim is the upper bound in \eqref{eq:G-bracket}. By \cref{lem:eig-bounds}(ii), $\lambda^{\rm lift}_\ell\le c_\ell^2(2\ell-1)!!$, and $\frac1\ell\log\big(c_\ell^2(2\ell-1)!!\big)\to-\log2$ (proof of \cref{prop:kernel}), so $\lambda^{\rm lift}_\ell\le C_d\,(3/5)^\ell$ for all $\ell$; $N_\ell\le c_d\ell^{d-2}$. This gives the series bound, and $K_d^2\le\sum_\ell N_\ell\,\ell^2(\ell+2d-3)^2\lambda^{\rm lift}_\ell<\infty$. By \eqref{eq:ul-sup}, $\sum_\ell G_\ell\|u_\ell\|_\infty\le\sum_\ell G_\ell N_\ell<\infty$. Each $u_\ell$ is continuous with $\sigma$-mean zero.
\end{proof}

\begin{proof}[Proof of \cref{prop:condA}]
Write $f(\vartheta)=F(P_\vartheta)$ and $f_\alpha=f+\alpha\Delta$. By definition of $g$, the right derivative is $f_\alpha'(0)=f'(0)+\alpha\Delta'(0)=\Delta'(0)(\alpha-g)$, so $f_\alpha(\vartheta)-f_\alpha(0)=\Delta'(0)(\alpha-g)\vartheta+o(\vartheta)$, which gives (i). For (ii), $\lambda S+(1-\lambda)I=\lambda\,(S+\frac{1-\lambda}\lambda I)$; dividing by $\lambda>0$ does not change minimizers, and (i) applies with $I$ as the misalignment coordinate. Multiplying $S$ by $B$ multiplies its derivative, and hence $g_S$, by $B$.
\end{proof}

\noindent For the isotropic heat channel, $g=\rho(f)\J_{\rm unif}(q_\varepsilon)/(6\sqrt3)$, and \cref{cor:threshold} gives the global version of (i) on $[0,\vartheta_0]$.

\subsubsection*{Heteroscedastic heat channels}

Let $w:\Sd\to[0,\infty)$ be measurable with $0<\E_qw<\infty$. The \emph{heteroscedastic heat channel} $P^{q,w}_\vartheta$ draws $W\sim q$ and then $U,V$ independently from $p_{\vartheta w(W)}(W,\cdot)$; $w\equiv1$ is the channel of \cref{thm:slaving}. It is a centered channel in the sense of \cref{prop:centered} below, with $c(W)=e^{-(d-1)\vartheta w(W)}$.

\begin{proposition}[Gain of heteroscedastic heat channels]\label{prop:hetero}
Let $q\ne\sigma$, and let $w$ be as above.
\begin{enumerate}[label=(\roman*),leftmargin=*,itemsep=1pt,topsep=2pt]
\item $\Delta(\vartheta)=\frac12\big(1-\E_qe^{-2(d-1)\vartheta w(W)}\big)$ and $\vartheta\mapsto\J(P^{q,w}_\vartheta)$ are continuous on $[0,\infty)$ and right-differentiable at $0$, $\Delta$ is strictly increasing with $\Delta'(0)=(d-1)\E_qw$, and the gain of the family is
\[
g(w)=\frac{\E_q[w\,\Gamma_q]}{(d-1)\,\E_qw}.
\]
If $w$ is bounded, both functions are $C^\infty$ on $[0,\infty)$.
\item For $w\equiv1$, $g=\frac{1}{d-1}\sum_{\ell\ge1}G_\ell\|\hat q_\ell\|^2=\frac{\rho(q)}{6\sqrt3}\J_{\rm unif}(q)$, with $\rho(q):=\frac{6\sqrt3\sum_\ell G_\ell\|\hat q_\ell\|^2}{(d-1)\sum_\ell\lambda^{\rm lift}_\ell\|\hat q_\ell\|^2}>6\sqrt3$.
\item $\sup_wg(w)=\frac{1}{d-1}\max_{\mathrm{supp}\,q}\Gamma_q$, the supremum being over bounded $w$ or over all admissible $w$. It is attained if and only if $q(\{x\in\mathrm{supp}\,q:\Gamma_q(x)=\max_{\mathrm{supp}\,q}\Gamma_q\})>0$.
\item $\|\Gamma_q\|_\infty\le K_d\sqrt{\J_{\rm unif}(q)}$; hence $\sup_wg(w)\le\frac{K_d}{d-1}\sqrt{\J_{\rm unif}(q)}$. The constant is sharp: $\sup\{g(w)/\sqrt{\J_{\rm unif}(q)}\}=K_d/(d-1)$ over all $q\neq\sigma$ and $w$, and codes with $\sup_wg(w)\ge(K_d/(d-1)-\eta)\sqrt{\J_{\rm unif}(q)}$ exist for every $\eta>0$ and with $\J_{\rm unif}(q)$ arbitrarily small.
\end{enumerate}
\end{proposition}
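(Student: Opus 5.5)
The plan is to reduce every part to a first-order computation at $\vartheta=0$, using the pair generator \eqref{eq:pairgen} pointwise in the code point. This gives a \emph{local} version of \cref{thm:slaving}(i).

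\emph{Part (i).} Conditional on $W=x$, the pair is drawn from the isotropic channel run for time $\vartheta w(x)$. Hence $\E[U^\top V\mid W]=e^{-2(d-1)\vartheta w(W)}$, and the formula for $\Delta$ follows. Monotone and dominated convergence give continuity, strict increase (because $\E_qw>0$) and $\Delta'(0)=(d-1)\E_qw$.

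For $\J$, write $\varphi_\vartheta(ta)=\int\Phi_{\vartheta w(x)}(x;a,t)\,\dd q(x)$, where $\Phi_s(x;a,t)=\E[F_{a,t}(C)\mid W=x]$ under the homogeneous channel at time $s$. By \eqref{eq:pairgen} applied at the point mass $\delta_x$, $\partial_s\Phi_s|_0=\mathcal GF_{a,t}(x)$, with a remainder bounded by a polynomial in $|t|$ times $s^2$ (Step 1 of \cref{app:slaving}). Therefore
\[
\partial_\vartheta\varphi_\vartheta|_0=\int w(x)\,\mathcal GF_{a,t}(x)\,\dd q(x).
\]
Differentiating $\J=\|\varphi_\vartheta-g\|^2$ at $0$ gives $\J'(0)=2\,\mathrm{Re}\,\E_{a,t}[\overline{D}\int w\,\mathcal GF_{a,t}\,\dd q]$, with $D$ as in Step 2. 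Swap the integrals. The key identity is
\[
2\,\mathrm{Re}\,\E_{a,t}\big[\overline{D(a,t)}\,\mathcal GF_{a,t}(x)\big]=-\Gamma_q(x).
\]
To prove it, expand $D=\sum_\ell i^\ell h_\ell u_\ell(a)$. Write $F_{a,t}(x)$ in harmonics through \eqref{eq:fh-plane}, whose degree-$\ell$ part is $i^\ell h_\ell(t)N_\ell P_\ell(a^\top x)$ up to the addition theorem. Then apply $\mathcal G$ degreewise, using the same eigen-/Gaussian-integration-by-parts computation as in Step 2. Integrating over $a$ with the addition theorem returns $u_\ell(x)$ times the same coefficient $G_\ell$. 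Absolute convergence comes from \cref{lem:summable}.

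This yields $\J'(0)=-\E_q[w\Gamma_q]$, and so $g(w)=\E_q[w\Gamma_q]/((d-1)\E_qw)$. For unbounded $w$, I would justify the right derivative by truncating $w$ and using dominated convergence: the $O(s)$ bound on $|\Phi_s-\Phi_0|$ is integrable against $w\,\dd q$. For bounded $w$, smoothness follows as in Step 1.

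\emph{Part (ii).} Take $w\equiv1$. Then $\E_q\Gamma_q=\sum_\ell G_\ell\int u_\ell\,\dd q=\sum_\ell G_\ell\|\hat q_\ell\|^2$, since $\int u_\ell\,\dd q=\|\hat q_\ell\|^2$ by the definition of $u_\ell$. The inequality $\rho>6\sqrt3$ follows from $G_\ell>(d-1)\lambda^{\rm lift}_\ell$, which is \eqref{eq:G-bracket} together with $\ell(\ell+d-2)\ge d-1$.

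\emph{Part (iii).} The gain $g(w)$ is a $w\,\dd q$-weighted average of $\Gamma_q/(d-1)$, so it is at most $\max_{\mathrm{supp}\,q}\Gamma_q/(d-1)$; the maximum exists because $\Gamma_q$ is continuous. Taking $w$ to be the indicator of $\{\Gamma_q>\max-\eta\}$, which has positive $q$-mass, approaches the bound, and these weights are bounded. Equality forces $\Gamma_q=\max$ $w\,q$-a.e., which is possible exactly when the argmax set carries positive $q$-mass.

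\emph{Part (iv).} By Cauchy--Schwarz and \eqref{eq:ul-sup},
\[
|\Gamma_q(x)|\le\sum_\ell G_\ell\sqrt{N_\ell}\,\|\hat q_\ell\|\le\Big(\sum_\ell N_\ell G_\ell^2/\lambda^{\rm lift}_\ell\Big)^{1/2}\Big(\sum_\ell\lambda^{\rm lift}_\ell\|\hat q_\ell\|^2\Big)^{1/2}=K_d\sqrt{\J_{\rm unif}}.
\]
Sharpness is the step I expect to be the main obstacle. Equality needs $u_\ell(x_0)=\sqrt{N_\ell}\|\hat q_\ell\|$, i.e. $u_\ell$ zonal about a point $x_0$, with $\|\hat q_\ell\|\propto G_\ell\sqrt{N_\ell}/\lambda^{\rm lift}_\ell$. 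I would take $q_\varepsilon=(1+\varepsilon f_L)\sigma$ with $f_L=\sum_{\ell\le L}(G_\ell/\lambda^{\rm lift}_\ell)N_\ell P_\ell(x_0^\top\cdot)$. This is bounded, so $q_\varepsilon$ is a probability measure for small $\varepsilon$, and $\J_{\rm unif}(q_\varepsilon)$ can be made arbitrarily small. Its $\Gamma$ attains the truncated Cauchy--Schwarz equality at $x_0\in\mathrm{supp}\,q_\varepsilon$, giving ratio $K_{d,L}\to K_d$ as $L\to\infty$. Combining this with (iii) finishes the argument.
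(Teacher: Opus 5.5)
Your proposal is correct and follows the paper's proof essentially step for step. It uses the same semigroup Taylor bounds for the right derivative, the same degreewise harmonic evaluation in which Gaussian integration by parts produces $G_\ell$, the same weighted-average argument for (iii), and the same densities $f_L$ for sharpness. The only cosmetic difference is that you evaluate the first variation pointwise in $x$ (pairing with $\delta_x$) and then integrate against $w\,\dd q$, whereas the paper pairs directly with the measure $wq$.

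One small correction concerns the exchange of $\E_t$ with $\sum_\ell$. \cref{lem:summable} gives convergence of $\sum_\ell G_\ell u_\ell(x)$, but it does not by itself justify that exchange. For that, use the per-$t$ Parseval/Cauchy--Schwarz bound, as the paper does. The sum over $\ell$ of the absolute degree-$\ell$ contributions to $\E_a[\overline{D(a,t)}\,\mathcal GF_{a,t}(x)]$ is at most $\|D(\cdot,t)\|_{L^2(\sigma)}\|\mathcal GF_{\cdot,t}(x)\|_{L^2(\sigma)}$. This is bounded by a polynomial in $|t|$ and is therefore integrable.
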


\begin{proof}
Write $\mathsf Q_s=\mathsf P_s\otimes\mathsf P_s$ for the product heat semigroup on $\Sd\times\Sd$, with generator $L_2=\nabla^2_{\mathbb S,u}+\nabla^2_{\mathbb S,v}$, and, as in Step~1 of \cref{app:slaving}, $G_{a,t}(u,v)=F_{a,t}(\frac{u+v}2)$ with $F_{a,t}(c)=\E_Re^{itRa^\top c}$. Throughout, $p_k$ denotes a polynomial, depending only on $d$ and $k$, with $\|L_2^kG_{a,t}\|_\infty\le p_k(|t|)$ (Step~1 of \cref{app:slaving}).

\emph{Semigroup bounds.} For smooth $G$, $s\mapsto\mathsf Q_sG$ is $C^\infty$ into $C(\Sd\times\Sd)$ with $\partial_s^k\mathsf Q_sG=\mathsf Q_sL_2^kG$, and $\mathsf Q_s$ is a sup-norm contraction. Hence, for all $s\ge0$ and $x\in\Sd$,
\begin{equation}\label{eq:sg-taylor}
|\mathsf Q_sG(x,x)-G(x,x)|\le s\|L_2G\|_\infty,\qquad |\mathsf Q_sG(x,x)-G(x,x)-s\,L_2G(x,x)|\le\tfrac{s^2}2\|L_2^2G\|_\infty,
\end{equation}
and $L_2G_{a,t}(x,x)=\mathcal GF_{a,t}(x)$ by Step~1 of \cref{app:slaving}.

\emph{(i), misalignment.} The coordinates are degree-one harmonics, so $\E[U\mid W]=e^{-(d-1)\vartheta w(W)}W$, and conditional independence gives $\E[U^\top V\mid W]=e^{-2(d-1)\vartheta w(W)}$, which is the formula for $\Delta$. It is continuous and strictly increasing because $q(w>0)>0$. Since $0\le(1-e^{-2(d-1)\vartheta w})/\vartheta\le2(d-1)w\in L^1(q)$, dominated convergence gives $\Delta'(0)=(d-1)\E_qw$.

\emph{(i), first variation of $\J$.} Conditionally on $W=x$, $\E[F_{a,t}(C)\mid W=x]=\mathsf Q_{\vartheta w(x)}G_{a,t}(x,x)$, so $\varphi_\vartheta(ta):=\varphi_Z(ta)=\int\mathsf Q_{\vartheta w(x)}G_{a,t}(x,x)\,\dd q(x)$. Put $D=\varphi_0-g$ and $\psi(a,t)=\int w\,\mathcal GF_{a,t}\,\dd q$. By \eqref{eq:sg-taylor},
\[
\frac1\vartheta\big|\varphi_\vartheta-\varphi_0-\vartheta\psi\big|\le\int\min\Big\{\tfrac{\vartheta w^2}2p_2(|t|),\,2w\,p_1(|t|)\Big\}\dd q\ \xrightarrow[\vartheta\downarrow0]{}\ 0
\]
by dominated convergence, and $|\varphi_\vartheta-\varphi_{\vartheta'}|\le|\vartheta-\vartheta'|\,\E_qw\;p_1(|t|)$. The last bound shows that $\J(\vartheta)=\E_{a,t}|\varphi_\vartheta-g|^2$ is continuous. Writing $\J(\vartheta)-\J(0)=2\,\mathrm{Re}\,\E_{a,t}[\overline D(\varphi_\vartheta-\varphi_0)]+\E_{a,t}|\varphi_\vartheta-\varphi_0|^2$, with $|D|\le2$ and $\E_tp_1(|t|)^2<\infty$, dominated convergence gives
\begin{equation}\label{eq:Jprime-het}
\partial_\vartheta\J\big|_{\vartheta=0^+}=2\,\mathrm{Re}\,\E_{a,t}\big[\overline{D(a,t)}\,\psi(a,t)\big].
\end{equation}
If $w$ is bounded, $\partial_\vartheta^k\varphi_\vartheta=\int w^k\,\mathsf Q_{\vartheta w}L_2^kG_{a,t}(x,x)\,\dd q$ is bounded by $\|w\|_\infty^kp_k(|t|)$, and $\J$ and $\Delta$ are $C^\infty$ by dominated convergence.

\emph{(i), spectral evaluation.} For finite signed measures $\mu,\nu$ on $\Sd$ and fixed $t$ let $\Phi_\nu(a)=\int F_{a,t}\,\dd\nu$ and $\Psi_\mu(a)=\int\mathcal GF_{a,t}\,\dd\mu$. For $Y\in\mathcal H_\ell$ and $x\in\Sd$, \eqref{eq:fh-plane} (applied in the variable $a$) gives $\int F_{a,t}(x)Y(a)\,\dd\sigma(a)=i^\ell h_\ell(t)Y(x)$. Since $\mathcal GF_{a,t}(x)=\frac12\nabla^2_{\mathbb S}F_{a,t}(x)-\frac{d-1}2t\,\partial_tF_{a,t}(x)$ on $\Sd$ and all derivatives are dominated, we may differentiate under the integral:
\[
\int\mathcal GF_{a,t}(x)\,Y(a)\,\dd\sigma(a)=i^\ell k_\ell(t)Y(x),\qquad k_\ell(t):=-\tfrac12\ell(\ell+d-2)h_\ell(t)-\tfrac{d-1}2\,t\,h_\ell'(t).
\]
By Fubini, the degree-$\ell$ components of the bounded functions $\Phi_\nu,\Psi_\mu\in L^2(\sigma)$ are $i^\ell h_\ell(t)\nu_\ell$ and $i^\ell k_\ell(t)\mu_\ell$, where $\mu_\ell=\sum_m(\int Y_{\ell m}\dd\mu)Y_{\ell m}$. Parseval's identity ($h_\ell,k_\ell$ are real) gives $\E_a[\overline{\Phi_\nu}\Psi_\mu]=\sum_{\ell\ge0}h_\ell(t)k_\ell(t)\langle\nu_\ell,\mu_\ell\rangle$, and, by Cauchy--Schwarz in $\ell$ and Parseval again, $\sum_\ell|h_\ell k_\ell\langle\nu_\ell,\mu_\ell\rangle|\le\|\Phi_\nu\|_{L^2(\sigma)}\|\Psi_\mu\|_{L^2(\sigma)}\le|\nu|(\Sd)\,|\mu|(\Sd)\,p_1(|t|)$, which is integrable in $t$. Hence $\E_t$ and $\sum_\ell$ commute. Gaussian integration by parts, as in Step~2 of \cref{app:slaving}, gives $\E_t[h_\ell k_\ell]=-\frac12\big(\ell(\ell+d-2)\lambda^{\rm lift}_\ell+\frac{d-1}2\E_t[(t^2-1)h_\ell^2]\big)=-\frac12G_\ell$ for $\ell\ge1$. Take $\nu=q-\sigma$ (so $\nu_0=0$, $\nu_\ell=u_\ell$, $\Phi_\nu=D$) and $\mu=w\,q$ (so $\Psi_\mu=\psi$). By \eqref{eq:Jprime-het},
\[
\partial_\vartheta\J\big|_{0^+}=-\sum_{\ell\ge1}G_\ell\langle u_\ell,\mu_\ell\rangle=-\sum_{\ell\ge1}G_\ell\int u_\ell\,w\,\dd q=-\E_q[w\,\Gamma_q],
\]
where $\langle u_\ell,\mu_\ell\rangle=\int u_\ell\,\dd\mu$ because $u_\ell\in\mathcal H_\ell$, and the last sum converges absolutely by \cref{lem:summable}. Dividing by $-\Delta'(0)$ gives $g(w)$.

\emph{(ii)} For $w\equiv1$, $\E_q\Gamma_q=\sum_\ell G_\ell\int u_\ell\,\dd q=\sum_\ell G_\ell\|\hat q_\ell\|^2$, since $\int u_\ell\,\dd q=\int u_\ell\,\dd(q-\sigma)=\|u_\ell\|^2$. The expression with $\rho(q)$ is a rewriting, and $\rho(q)>6\sqrt3$ by $G_\ell>\ell(\ell+d-2)\lambda^{\rm lift}_\ell\ge(d-1)\lambda^{\rm lift}_\ell$.

\emph{(iii)} $g(w)$ is the mean of $\Gamma_q/(d-1)$ under the probability measure $w\,\dd q/\E_qw$, hence at most $M/(d-1)$ with $M:=\operatorname*{ess\,sup}_q\Gamma_q=\max_{\mathrm{supp}\,q}\Gamma_q$ (continuity of $\Gamma_q$). For $\eta>0$, $w=\mathbf 1\{\Gamma_q>M-\eta\}$ is bounded with $\E_qw>0$ and $g(w)\ge(M-\eta)/(d-1)$. If $q(\Gamma_q=M)>0$, $w=\mathbf 1\{\Gamma_q=M\}$ attains $M/(d-1)$; conversely, $g(w)=M/(d-1)$ forces $\E_q[w(M-\Gamma_q)]=0$ with a $q$-a.e.\ nonnegative integrand, so $w=0$ $q$-a.e.\ on $\{\Gamma_q<M\}$, and $\E_qw>0$ forces $q(\Gamma_q=M)>0$.

\emph{(iv)} By \eqref{eq:ul-sup} and Cauchy--Schwarz,
\[
|\Gamma_q(x)|\le\sum_\ell G_\ell\sqrt{N_\ell}\,\|\hat q_\ell\|=\sum_\ell\frac{G_\ell\sqrt{N_\ell}}{\sqrt{\lambda^{\rm lift}_\ell}}\cdot\sqrt{\lambda^{\rm lift}_\ell}\|\hat q_\ell\|\le K_d\sqrt{\J_{\rm unif}(q)} .
\]
With (iii) this gives the bound. Sharpness: fix $x_0\in\Sd$ and $L\ge1$, let $f_L(x)=\sum_{\ell=1}^L(G_\ell/\lambda^{\rm lift}_\ell)N_\ell P_\ell(x_0^\top x)$ and $q=(1+\varepsilon f_L)\sigma$ with $0<\varepsilon\|f_L\|_\infty<1$, so that $\mathrm{supp}\,q=\Sd$. By the addition theorem, $N_\ell P_\ell(x_0^\top\cdot)\in\mathcal H_\ell$ has squared norm $N_\ell$, so $u_\ell=\varepsilon(G_\ell/\lambda^{\rm lift}_\ell)N_\ell P_\ell(x_0^\top\cdot)$ for $\ell\le L$ and $0$ otherwise. Hence $\J_{\rm unif}(q)=\varepsilon^2K_L^2$ and $\Gamma_q(x_0)=\varepsilon K_L^2$, with $K_L^2=\sum_{\ell\le L}N_\ell G_\ell^2/\lambda^{\rm lift}_\ell$. By (iii), $\sup_wg(w)\ge\frac{K_L}{d-1}\sqrt{\J_{\rm unif}(q)}$, and $K_L\uparrow K_d$ while $\varepsilon$ is free.
\end{proof}

\subsubsection*{A nuisance certificate for centered channels}

Let $\mathsf k_d:=2+\frac92\sqrt{d/(d+2)}<6.5$.

\begin{proposition}[Nuisance certificate for centered channels]\label{prop:centered}\label{thm:general}
Let $\Lambda$ be a random element, $W=W(\Lambda)\in\Sd$ with law $q$, and let $U,V\in\Sd$ be conditionally independent given $\Lambda$ with
\[
\E[U\mid\Lambda]=\E[V\mid\Lambda]=c(\Lambda)\,W,\qquad c(\Lambda)\in[0,1].
\]
Then, with $P$ the law of $(U,V)$ and $T_q=RW$,
\[
\big\|\varphi_Z(t\,\cdot)-\varphi_{T_q}(t\,\cdot)\big\|\le\mathsf k_d\,\Delta(P),\qquad\text{hence}\qquad\big|\sqrt{\J(P)}-\sqrt{\J_{\rm unif}(q)}\big|\le\mathsf k_d\,\Delta(P)\le6.5\,\Delta(P),
\]
and $\J(P)+\alpha\Delta(P)\ge\J_{\rm unif}(q)+\big(\alpha-2\mathsf k_d\sqrt{\J_{\rm unif}(q)}\big)\Delta(P)$ for every $\alpha\ge0$. In particular, if $\alpha>2\mathsf k_d\sqrt{\J_{\rm unif}(q)}$ (for instance $\alpha>13\sqrt{\J_{\rm unif}(q)}$), then $\J_\alpha(P)>\J_\alpha(P_q)=\J_{\rm unif}(q)$ for every such $P$ with $\Delta(P)>0$, where $P_q$ is the aligned law $U=V=W$.
\end{proposition}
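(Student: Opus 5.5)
The plan is to split the distance between $\varphi_Z$ and $\varphi_{T_q}$ through the mixture marginal $\bar q=\frac12(q_U+q_V)$:
\[
\|\varphi_Z-\varphi_{T_q}\|\le\|\varphi_Z-\varphi_{T_{\bar q}}\|+\|\varphi_{T_{\bar q}}-\varphi_{T_q}\|.
\]
The first term is exactly \cref{lem:centroid}, which gives $\frac32\sqrt{d/(d+2)}\,\Delta(P)$. It remains to show that the second term is at most $\big(2+3\sqrt{d/(d+2)}\big)\Delta(P)$; the two pieces then add up to $\mathsf k_d$.

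\emph{Bookkeeping from the centered structure.} Write $U=c(\Lambda)W+N_U$ and $V=c(\Lambda)W+N_V$, with $\E[N_U\mid\Lambda]=\E[N_V\mid\Lambda]=0$. Conditional independence gives $\E[U^\top V\mid\Lambda]=c^2$, so $\Delta=\frac12(1-\E c^2)$. Since $\|U\|=1$, we also have $\E\|N_U\|^2=1-\E c^2=2\Delta$, and likewise for $N_V$. Finally, $c\in[0,1]$ gives $1-c\le1-c^2$, hence $\E(1-c)\le2\Delta$.

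\emph{Bounding $\varphi_{T_{\bar q}}-\varphi_{T_q}$.} By symmetry it suffices to treat one view $U$ (its marginal $q_U$) and average the two bounds; by Minkowski in $L^2(\sigma\otimes\mathcal N(0,1))$ the average of the two views is no worse. With $x=tRa$, I insert $e^{ix^\top cW}$ and split into two pieces.
\begin{itemize}
\item \emph{Noise piece.} I use $\big|e^{ix^\top(cW+N)}-e^{ix^\top cW}-i\,x^\top N\,e^{ix^\top cW}\big|\le\frac12(x^\top N)^2$. The linear term has conditional mean zero given $\Lambda$, so it drops. This leaves $\frac d2\,t^2\,a^\top\Sigma_Na$ with $\Sigma_N=\E N_UN_U^\top$ and $\mathrm{tr}\,\Sigma_N=2\Delta$. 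The same computation as in \cref{lem:centroid}, using \ref{F3}--\ref{F4}, then gives $\frac32\sqrt{d/(d+2)}\cdot2\Delta=3\sqrt{d/(d+2)}\,\Delta$.
\item \emph{Radial piece.} Here I compare $\E\,\psi_{cW}$ with $\E\,\psi_W$, where $\psi_y(a,t)=\E_Re^{itRa^\top y}$ is extended to nonunit $y$. Minkowski reduces this to $\E_\Lambda\|\psi_{cW}-\psi_W\|$. For fixed $w$, the map $r\mapsto\psi_{rw}$ has derivative of norm at most $\big(\E_{a,t,R}(tRa^\top w)^2\big)^{1/2}=(1\cdot d\cdot\frac1d)^{1/2}=1$. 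So $\|\psi_{cw}-\psi_w\|\le1-c$, and the piece is at most $\E(1-c)\le2\Delta$.
\end{itemize}
Adding the two pieces gives $\big(2+3\sqrt{d/(d+2)}\big)\Delta$, and hence the constant $\mathsf k_d$. The numerical bound $\mathsf k_d<6.5$ follows from $\sqrt{d/(d+2)}<1$.

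\emph{Consequences.} The reverse triangle inequality in $L^2$, with $\J=\|\varphi_Z-g\|^2$ and $\J_{\rm unif}(q)=\|\varphi_{T_q}-g\|^2$, gives $|\sqrt\J-\sqrt{\J_{\rm unif}}|\le\mathsf k_d\Delta$. For the affine lower bound I split into two cases.
\begin{itemize}
\item If $\sqrt{\J_{\rm unif}}\ge\mathsf k_d\Delta$, then $\J\ge(\sqrt{\J_{\rm unif}}-\mathsf k_d\Delta)^2\ge\J_{\rm unif}-2\mathsf k_d\sqrt{\J_{\rm unif}}\,\Delta$.
\item Otherwise $\J_{\rm unif}-2\mathsf k_d\sqrt{\J_{\rm unif}}\,\Delta<-\J_{\rm unif}\le0\le\J$.
\end{itemize}
Adding $\alpha\Delta$ yields the stated inequality. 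The strict comparison with $P_q$ is then immediate when $\Delta>0$ and $\alpha>2\mathsf k_d\sqrt{\J_{\rm unif}}$, using $\J(P_q)=\J_{\rm unif}(q)$ and $\Delta(P_q)=0$.

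The main obstacle is the radial piece. One has to check that the radius-Lipschitz constant of the lift, in the $L^2(\sigma\otimes\mathcal N(0,1))$ norm, really is $1$; the derivative $\partial_r\psi_{rw}=\E_R[itRa^\top w\,e^{itRra^\top w}]$ has modulus at most $|t|\,\E_R[R]\,|a^\top w|$, and Jensen upgrades this to the second-moment value. One also has to make sure that linearizing in $(1-c)$, rather than in $(1-c^2)$, does not lose the factor that produces the constant $2$.
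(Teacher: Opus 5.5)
Your proof is correct and reaches exactly the paper's constant $\mathsf k_d=2+\frac92\sqrt{d/(d+2)}$, but by a different decomposition.

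\emph{The paper's route.} It never passes through the view mixture. It Taylor-expands $e^{ix^\top C}$ once around $e^{ix^\top W}$. The first-order term is then the radial bias $\E[C-W\mid\Lambda]=(c-1)W$, which it bounds by $2\Delta$ using $1-c\le2\Delta_\Lambda$ and a weighted Cauchy--Schwarz. The second-order remainder is controlled by $\mathrm{tr}\,\E(C-W)(C-W)^\top\le3\Delta$, which gives $\frac92\sqrt{d/(d+2)}\,\Delta$.

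\emph{Your route.} You go through $T_{\bar q}$ and reuse \cref{lem:centroid} for the centroid-versus-mixture gap, at a trace budget of $\Delta$. You then compare each view's marginal with the code in two pieces:
\begin{itemize}
\item a noise piece with trace budget $\E\|N_U\|^2=2\Delta$, whose linear term vanishes because $\E[N_U\mid\Lambda]=0$;
\item an unlinearized radial piece, bounded through the unit Lipschitz constant of $r\mapsto\psi_{rw}$, which gives $\E(1-c)\le2\Delta$.
\end{itemize}
The budgets match term by term: $\Delta+2\Delta=3\Delta$ in the second-order part and $2\Delta$ in the radial part. That is why the constants coincide.

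\emph{What each route buys.} Yours is more modular. As a by-product it gives the per-view estimate $\|\varphi_{T_{q_U}}-\varphi_{T_q}\|\le(2+3\sqrt{d/(d+2)})\Delta$, i.e.\ each view's marginal is close to the mean-code law in the lifted MMD, which may be useful on its own. The paper's single expansion is shorter and self-contained. It needs only the conditional moment $\E[\|C-W\|^2\mid\Lambda]=2(1-c)-\Delta_\Lambda\le3\Delta_\Lambda$ and no intermediate comparison with $\bar q$.

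\emph{A minor point, shared with the paper.} In your second case of the affine bound, the strict inequality fails when $\J_{\rm unif}(q)=0$. Writing $\le$ there is all you need.
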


\begin{proof}
Fix $(a,t)$ and put $x=tRa$; $R$ is independent of $(\Lambda,U,V)$. Since $|e^{iy}-1-iy|\le y^2/2$,
\[
\varphi_Z(ta)-\varphi_{T_q}(ta)=\E\big[e^{ix^\top W}\,i\,x^\top(C-W)\big]+\E\big[e^{ix^\top W}r\big]=:T_1(a,t)+T_2(a,t),\qquad|r|\le\tfrac12\big(x^\top(C-W)\big)^2 .
\]
\emph{Conditional moments.} Let $\Delta_\Lambda=\frac14\E[\|U-V\|^2\mid\Lambda]$, so $\E\Delta_\Lambda=\Delta$. By conditional independence, $\E[U^\top V\mid\Lambda]=c^2$, so $\Delta_\Lambda=\frac12(1-c^2)$ and, as $c\in[0,1]$, $1-c\le(1-c)(1+c)=2\Delta_\Lambda$. Moreover $\E[C\mid\Lambda]=cW$ and $\E[\|C\|^2\mid\Lambda]=1-\Delta_\Lambda$, so
\[
\E\big[\|C-W\|^2\mid\Lambda\big]=2(1-c)-\Delta_\Lambda\le3\Delta_\Lambda .
\]
\emph{First-order term.} Conditionally on $(\Lambda,R)$, $\E[x^\top(C-W)\mid\Lambda,R]=tR(c-1)\,a^\top W$. Hence $|T_1|\le2|t|\,\E R\;\E\big[|a^\top W|\Delta_\Lambda\big]$. By the Cauchy--Schwarz inequality with weights $\Delta_\Lambda$, $\big(\E[|a^\top W|\Delta_\Lambda]\big)^2\le\Delta\,\E[(a^\top W)^2\Delta_\Lambda]$, and $\E_a(a^\top W)^2=1/d$. Using $(\E R)^2\le\E R^2=d$ \ref{F5} and $\E t^2=1$, we get $\|T_1\|^2\le4\,\E t^2\,(\E R)^2\,\Delta^2/d\le4\Delta^2$.

\emph{Remainder.} With $\Sigma=\E[(C-W)(C-W)^\top]\succeq0$, $\mathrm{tr}\,\Sigma\le3\Delta$, independence of $R$ and \ref{F5} give $|T_2|\le\frac d2t^2\,a^\top\Sigma a$. Exactly as in \cref{lem:centroid}, \ref{F3}--\ref{F4} and $\mathrm{tr}(\Sigma^2)\le(\mathrm{tr}\,\Sigma)^2$ give
\[
\|T_2\|^2\le\frac{d^2}4\cdot3\cdot\frac{3(\mathrm{tr}\,\Sigma)^2}{d(d+2)}\le\frac{81\,d}{4(d+2)}\Delta^2 .
\]
By Minkowski's inequality, $\|\varphi_Z-\varphi_{T_q}\|\le(2+\frac92\sqrt{d/(d+2)})\Delta=\mathsf k_d\Delta$. Since $\sqrt\J=\|\varphi_Z-g\|$ and $\sqrt{\J_{\rm unif}(q)}=\Dlift(q,\sigma)=\|\varphi_{T_q}-g\|$, the reverse triangle inequality gives the second claim.

\emph{Consequence.} Let $s=\sqrt{\J_{\rm unif}(q)}$. If $s\ge\mathsf k_d\Delta$, then $\J\ge(s-\mathsf k_d\Delta)^2\ge s^2-2\mathsf k_ds\Delta$. If $s<\mathsf k_d\Delta$, then $s^2-2\mathsf k_ds\Delta<-\mathsf k_ds\Delta\le0\le\J$. In both cases $\J\ge\J_{\rm unif}(q)-2\mathsf k_d\sqrt{\J_{\rm unif}(q)}\,\Delta$; add $\alpha\Delta$. Finally, $\J(P_q)=\Dlift^2(q,\sigma)=\J_{\rm unif}(q)$ and $\Delta(P_q)=0$.
\end{proof}

\begin{corollary}[Gains of centered channels]\label{cor:centered-gain}
Let $(P_\vartheta)$ be a family as in \cref{prop:condA} with $F=\J$, such that each $P_\vartheta$ is a centered channel as in \cref{prop:centered} with the same code law $q$, and $P_0=P_q$. Then its gain satisfies $g\le2\mathsf k_d\sqrt{\J_{\rm unif}(q)}\le13\sqrt{\J_{\rm unif}(q)}$. Consequently, $\J_\alpha$ has the aligned code as a strict local minimizer along every such family once $\alpha>2\mathsf k_d\sqrt{\J_{\rm unif}(q)}$. By \cref{prop:hetero}(iv), the best constant $c^{\rm gain}_d$ in $g\le c^{\rm gain}_d\sqrt{\J_{\rm unif}(q)}$ over all centered families satisfies $K_d/(d-1)\le c^{\rm gain}_d\le2\mathsf k_d$, so the order $\sqrt{\J_{\rm unif}}$ cannot be improved.
\end{corollary}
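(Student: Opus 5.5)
The plan is to read the gain bound directly off the inequality of \cref{prop:centered} along the family, and then cite \cref{prop:hetero}(iv) for the matching lower bound on the best constant.

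\emph{Upper bound on the gain.} Write $s=\sqrt{\J_{\rm unif}(q)}$. Each $P_\vartheta$ is a centered channel with code law $q$, so \cref{prop:centered} with $\alpha=0$ gives, for every $\vartheta\in[0,\vartheta_0]$,
\[
\J(P_\vartheta)\ \ge\ \J_{\rm unif}(q)-2\mathsf k_d\,s\,\Delta(P_\vartheta).
\]
Since $P_0=P_q$, we have $\J(P_0)=\J_{\rm unif}(q)$ and $\Delta(P_0)=0$. Writing $f(\vartheta)=\J(P_\vartheta)$, this becomes $f(\vartheta)-f(0)\ge-2\mathsf k_d s\,(\Delta(\vartheta)-\Delta(0))$. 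Divide by $\vartheta>0$ and let $\vartheta\downarrow0$; the right derivatives exist by the assumptions of \cref{prop:condA}. This yields $f'(0)\ge-2\mathsf k_d s\,\Delta'(0)$. Because $\Delta'(0)>0$, we obtain $g=-f'(0)/\Delta'(0)\le2\mathsf k_d s$. The constant satisfies $2\mathsf k_d<13$ because $\mathsf k_d<6.5$.

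\emph{Strict local minimality.} If $\alpha>2\mathsf k_d s\ge g$, then \cref{prop:condA}(i) makes $P_0$ a strict local minimizer of $\J+\alpha\Delta$ along the family. There is also a direct argument: \cref{prop:centered} gives $\J_\alpha(P_\vartheta)>\J_\alpha(P_0)$ whenever $\Delta(P_\vartheta)>0$. For small $\vartheta>0$ this holds because $\Delta'(0)>0$.

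\emph{Sharpness of the order.} The upper bound $c^{\rm gain}_d\le2\mathsf k_d$ is the bound just proved. For the lower bound, I would check that the heteroscedastic heat channels $P^{q,w}_\vartheta$ are admissible families. They are centered with $c=e^{-(d-1)\vartheta w(W)}\in[0,1]$. Their code law is $q$ for every $\vartheta$, and $P_0=P_q$. For bounded $w$, \cref{prop:hetero}(i) gives right-differentiability with $\Delta'(0)=(d-1)\E_qw>0$. Then \cref{prop:hetero}(iv) provides codes $q$ and bounded $w$ whose gain is at least $(K_d/(d-1)-\eta)\sqrt{\J_{\rm unif}(q)}$ for every $\eta>0$. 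Hence $c^{\rm gain}_d\ge K_d/(d-1)$.

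The only delicate point is the limiting step: it should use the inequality pointwise in $\vartheta$, not compare derivatives of the two sides. That is why the bound is stated as an inequality between differences before passing to the limit.
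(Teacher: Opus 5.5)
Your proposal is correct and follows essentially the same route as the paper. The paper also applies \cref{prop:centered} pointwise along the family. It divides by $\Delta(\vartheta)>0$ rather than by $\vartheta$, which is equivalent since $\Delta'(0)>0$, and it gets the lower bound on $c^{\rm gain}_d$ by noting that heteroscedastic heat channels are centered with $\Lambda=W$ and invoking \cref{prop:hetero}(iv). Your explicit checks of the family hypotheses (code law $q$, $P_0=P_q$, $\Delta'(0)=(d-1)\E_q w>0$) spell out what the paper leaves implicit.
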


\begin{proof}
By \cref{prop:centered}, $(f(\vartheta)-f(0))/\Delta(\vartheta)\ge-2\mathsf k_d\sqrt{\J_{\rm unif}(q)}$ for small $\vartheta>0$ (where $\Delta(\vartheta)>0$ because $\Delta'(0)>0$), and the left side tends to $-g$. Heteroscedastic heat channels are centered, with $c=e^{-(d-1)\vartheta w(W)}\in(0,1]$ and $\Lambda=W$.
\end{proof}

\begin{remark}[Scope of the centering hypothesis]\label{rem:mean-code}
Whenever the two views are conditionally independent given a latent $\Lambda$ with the same conditional mean (e.g.\ i.i.d.\ per-view style given content), the hypothesis of \cref{prop:centered} holds with the \emph{mean code} $W=\E[U\mid\Lambda]/\|\E[U\mid\Lambda]\|$ and $c=\|\E[U\mid\Lambda]\|$ ($W$ arbitrary where $c=0$). The hypothesis cannot be dropped: an aligned displacement of the code, $U=V=W'$ with $W'\ne W$, has $\Delta=0$ and $\J=\J_{\rm unif}(\mathrm{law}(W'))$, which in general differs from $\J_{\rm unif}(q)$.
\end{remark}

\subsection{Finite batches}\label{app:finite-proofs}

We use the setting of \cref{sec:finite}, with $B\ge2$, $M\ge1$, and frequencies independent of the pairs and radii.

In the notation of \cref{sec:finite}, \eqref{eq:J-U} reads
\begin{equation}\label{eq:JU}
\hat\J_U=\frac1M\sum_{m=1}^M\Big[\frac{B|\hat\varphi_m|^2-1}{B-1}-2g_m\mathrm{Re}\,\hat\varphi_m+g_m^2\Big].
\end{equation}

\noindent Unbiasedness averages over both the i.i.d.\ examples and the fresh frequencies; a fixed quadrature in $(a,t)$ defines a different discrepancy.

\begin{proof}[Proof of Proposition~\ref{prop:bias}]
Condition on a frequency $\omega$. The $Z_j$ are i.i.d.\ (pairs are i.i.d.\ and each has its own radius), so $\E|\hat\varphi|^2=B^{-2}\sum_{j,k}\E e^{i\omega^\top(Z_j-Z_k)}=\frac1B+(1-\frac1B)|\varphi_Z(\omega)|^2$ and $\E\hat\varphi=\varphi_Z(\omega)$. Since $g$ is real,
$\E|\hat\varphi-g|^2=\E|\hat\varphi|^2-2g\,\mathrm{Re}\,\varphi_Z+g^2=|\varphi_Z-g|^2+\frac1B(1-|\varphi_Z|^2)$. Averaging over i.i.d.\ frequencies proves the identity. On the family of \cref{sec:infonce-comparison}, $\E_{a,t}|\varphi_Z|^2=\E e^{-st^2}=(1+2s)^{-1/2}$, so by \eqref{eq:Js}
\[
F_B(s):=\E\hat\J_V=\big(1-\tfrac1B\big)(1+2s)^{-1/2}-2(2+s)^{-1/2}+3^{-1/2}+\tfrac1B .
\]
Then $F_B'(s)=(2+s)^{-3/2}-(1-\frac1B)(1+2s)^{-3/2}$, which is negative if and only if $r(s):=\frac{1+2s}{2+s}<a_B:=(1-\frac1B)^{2/3}$. The map $r$ increases from $\frac12$ to $1$ on $[0,1]$, and $a_B\in(\frac12,1)$ for every $B\ge2$ (since $a_2=2^{-2/3}$). So $F_B$ decreases and then increases, with unique minimizer $s_B=r^{-1}(a_B)=(2a_B-1)/(2-a_B)\in(0,1)$. Moreover $1-s_B=3(1-a_B)/(2-a_B)$ and $1-a_B=\frac{2}{3B}+O(B^{-2})$, so $1-s_B=\frac2B+O(B^{-2})$ and $2s_B-1=1-\frac4B+O(B^{-2})$. Every aligned uniform law has $Z\sim\mathcal N(0,I_d)$ and thus the value $F_B(1)>F_B(s_B)$, so it is not a minimizer over paired laws.
\end{proof}

\begin{proof}[Proof of Proposition~\ref{prop:unbiased}]
\emph{Unbiasedness.} Conditionally on $\omega_m$, $\E[(B|\hat\varphi_m|^2-1)/(B-1)]=|\varphi_Z(\omega_m)|^2$ by the computation above, so the bracket in \eqref{eq:JU} has conditional mean $|\varphi_Z(\omega_m)-g_m|^2$. Averaging over $\omega_m=t_ma_m$ gives $\J$.

\emph{U-statistic form and cost.} With $\xi_j\in\mathbb C^M$, $(\xi_j)_m=M^{-1/2}(e^{i\omega_m^\top Z_j}-g_m)$, we have $\hat\J_U=\frac{1}{B(B-1)}\sum_{j\ne k}H(Z_j,Z_k)$ with the symmetric kernel $H(z,z')=\mathrm{Re}\langle\xi(z),\xi(z')\rangle$. Indeed, $B^2|\hat\varphi_m|^2=B+\sum_{j\ne k}\cos(\omega_m^\top(Z_j-Z_k))$ and $\sum_{j\ne k}(\cos\omega_m^\top Z_j+\cos\omega_m^\top Z_k)=2(B-1)\sum_j\cos\omega_m^\top Z_j$. Equivalently, $\hat\J_U=(\|\sum_j\xi_j\|^2-\sum_j\|\xi_j\|^2)/(B(B-1))$. Computing $\{\omega_m^\top Z_j\}$ is one $B\times d$ by $d\times M$ product, $O(BdM)$; the rest is $O(BM)$.
\end{proof}

\section{Experimental Details}\label{app:exp-details}

\paragraph{Exact families (\cref{fig:theory}a).} (i) Extremal family: \eqref{eq:Js} and $\Lb-\mathcal L^\star_\beta=2\beta(1-s)$. (ii) Noisy views: $U\sim\sigma$, $V=(U+\eta_{\rm n}\xi)/\|U+\eta_{\rm n}\xi\|$ with $\xi\sim\mathcal N(0,I_d)$. The pair law is rotation invariant, so $Z=R\|C\|\theta$ with $\theta\sim\sigma$ independent of $\|C\|$, and $\varphi_Z(ta)=\E e^{-t^2\|C\|^2/2}$. We evaluate $\J$ by 80-point Gauss--Hermite quadrature in $t$ and $4\times10^5$ Monte Carlo samples of $\|C\|^2=(1+U^\top V)/2$ with common random numbers across $t$; $\Ub=0$. (iii) Aligned vMF: $U=V\sim\mathrm{vMF}(e,\kappa_{\rm vMF})$. Then $\|\hat q_\ell\|^2=N_\ell\,r_\ell^2$ with $r_\ell=I_{\ell+\nu}(\kappa_{\rm vMF})/I_\nu(\kappa_{\rm vMF})$, $\J=\sum_{\ell\ge1}\lambda^{\rm lift}_\ell N_\ell r_\ell^2$ (truncated at $\ell=80$), and $\Ub$ is a one-dimensional integral; $\lambda^{\rm lift}_\ell$ is computed from \cref{lem:lift-eig} by adaptive quadrature. (iv) Mixture path: $\J(P_p)=p^2\J(\sigma\otimes\sigma)$ with $\J(\sigma\otimes\sigma)=0.02046$ at $d=16$.

\paragraph{Spectral threshold (\cref{fig:spectral}b).} For $d=8$, $\beta=5$, $\varepsilon=0.3$ and $q=(1+\varepsilon P_\ell(e^\top\cdot))\sigma$ with $P_\ell$ the Gegenbauer polynomial normalized by $P_\ell(1)=1$, $\Ub(q)=\int\log(1+\varepsilon \varrho_\ell P_\ell)(1+\varepsilon P_\ell)\,\dd\sigma$ with $\varrho_\ell=\lambda^\beta_\ell/\kappa=I_{\ell+D-1}(\beta)/I_{D-1}(\beta)$, computed with 3000-point Gauss--Jacobi quadrature for the law of $e^\top U$, and $\mathrm{MMD}_\gamma^2=\varepsilon^2\|P_\ell\|^2\,\Gamma(D)\gamma^{1-D}e^{-2\gamma}I_{\ell+D-1}(2\gamma)$ in closed form (Funk--Hecke).

\paragraph{Heat channel (\cref{fig:theory}c--d).} We use $q=(1+\varepsilon f_\ell)\sigma$ with $f_\ell=\sqrt{N_\ell}P_\ell(e^\top\cdot)$, so $\|f_\ell\|_{L^2(\sigma)}=1$, and restrict to $\varepsilon\|f_\ell\|_\infty<1$. We evaluate $\J$ from the pair law using the heat-kernel Gegenbauer expansion and Gaussian quadrature, and estimate minimizers by spline interpolation in heat time. Panels (c,d) numerically illustrate the local asymptotic law of \cref{thm:slaving}.

\paragraph{Estimator bias (\cref{fig:theory}b).} $d=16$, $M=64$ fresh frequencies per batch; $6\times10^4$ ($B=16$) and $3\times10^4$ ($B=64$) batches; the same batches are reused across $s$ (common random numbers).

\paragraph{Latent-variable study (\cref{sec:latent}).}
\emph{Data.} Content $c\sim\sigma$ on $\mathbb S^7$. (a) \emph{Content--style}: each view has independent style $s_i\sim\mathcal N(0,I_4)$, and $x_i=m(c,s_i)\in\R^{16}$. (b) \emph{vMF}: $x_1=m(c)$, $x_2=m(\tilde c)$ with $\tilde c\sim\mathrm{vMF}(c,\kappa_{\rm vMF}=10)$ \citep{zimmermann2021cl}. The mixing map $m$ acts on $(c,s_i)\in\R^{12}$ (vMF: on $c\in\R^8$) and is a fixed random three-layer leaky-ReLU (slope $0.2$) network with orthogonal square layers after a Gaussian input layer, hence injective almost surely. Fresh samples are drawn at every step, and the three training seeds share the mixing map. LeJEPA and VICReg hyperparameters are selected on validation samples; all reported metrics, probes and gains are computed on the frozen encoders with independently generated test samples ($3000$ pairs per model, shared across methods).
\emph{Encoder.} MLP $16\to256\to256\to256\to8$ with leaky-ReLU (slope $0.2$), shared across views; InfoNCE, SG, SG$_\alpha$ and moment matching act on the normalized outputs.
\emph{Optimization.} Adam (default moments, no weight decay or clipping), one-cycle schedule with peak learning rate $10^{-3}$ (5\% warm-up), $8000$ steps, $B=256$ unless stated, three seeds. InfoNCE$\to$SG$_\alpha$ trains InfoNCE for $4000$ steps and SG$_\alpha$ for the remaining $4000$ steps of the same schedule.
\emph{Losses.} InfoNCE: symmetric NT-Xent with $\tau=0.2$ ($\beta=5$). SG: $\hat\J_U$ or $\hat\J_V$ with $M=256$ fresh directions and frequencies per step; see \emph{Number of frequencies} below. SG$_\alpha$: $\hat\J_U+\alpha\cdot\frac14\|U-V\|^2$. LeJEPA follows the minimal reference implementation: the invariance term $I$ is the mean over views, batch and coordinates of the squared deviation from the view average, and SIGReg is applied to each view separately with $256$ random slices resampled at every step, $17$ knots on $[0,3]$ with trapezoidal weights and window $e^{-t^2/2}$, scaled by the batch size; the loss is $(1-\lambda)I+\lambda\,\mathrm{SIGReg}$ on unnormalized embeddings. VICReg follows the reference implementation on the $d=8$ outputs, without an expander (variance hinge with target $1$ and $\epsilon=10^{-4}$, squared off-diagonal covariances divided by $d$). For both we searched one seed over $\lambda\in\{0.005,0.01,0.02,0.05,0.1\}$ and $(\mathrm{sim},\mathrm{std},\mathrm{cov})\in\{(10,25,1),(25,25,1),(50,25,1)\}$ and selected by validation $R^2_c$: $\lambda=0.01$ and $(10,25,1)$, used in both latent models. All methods share the encoder, optimizer, schedule, batch size and number of steps; the other methods use fixed settings.
\emph{Metrics} (test samples, after normalizing). $\J$ by the Rao--Blackwellized U-statistic with $5000$ frequencies, the radius integrated by generalized Gauss--Laguerre quadrature. $\Ub$ and the InfoNCE excess at $\beta=5$ use all other test samples as negatives, averaged over the two views. $R^2_c$: linear regression of $c$ on $U$, fitted on one half of the test set and evaluated on the other. CKA: linear CKA \citep{kornblith2019similarity} of the view-1 embeddings to the InfoNCE encoders of the other seeds. Style: held-out $R^2$ of an MLP (one hidden layer of width $64$) predicting $s_1$ from $(U,c)$, fitted on one half and evaluated on the other; we call style detected when $R^2>0.02$.

\paragraph{Number of frequencies.} With $M\in\{16,64,128,256,1024\}$ at $B=256$ (three seeds each, same evaluation protocol as \cref{tab:latent}), the CKA of SG$_{0.2}$ to InfoNCE is $0.920,0.968,0.970,0.993,0.993$ on vMF and $0.818,0.827,0.834,0.828,0.825$ on content--style; on content--style, the style $R^2$ of pure SG is $0.29,0.17,0.15,0.11,0.10$. On vMF, the lower means at $M=64$ and $128$ come from one seed that converges to a different solution (CKA $0.92$); the same seed reaches $0.99$ at $M\ge256$, the other two seeds reach $0.99$ at every $M\ge64$, and three further seeds at $M=128$ reach $0.992$--$0.993$ (three further seeds at $M=256$: $0.994$). Thus $M=B/2$ matches $M=256$ in five of six vMF runs and on content--style.

\paragraph{Style probes: robustness.} On the seed-0 encoders of InfoNCE, SG, InfoNCE$\to$SG, SG$_{0.2}$, LeJEPA ($\lambda=0.01$) and VICReg (\cref{tab:latent-sd}), a larger probe (MLP $256{\times}256$, five-fold cross-validation on $10^4$ fresh pairs) gives the same classification, and refits of the reported probe on permuted style targets give $R^2\le-0.004$, below the detection threshold $0.02$. SG$_{0.2}$ is reported as the representative price because $0.2$ lies well above the critical price $\alpha\approx0.023$ of \cref{fig:slaving}a; the conclusions on the price rest on the full sweep.

\paragraph{Measuring the gain (\cref{fig:slaving}).} For a trained encoder, we rescale the style input, $x_i=m(c,\eta s_i)$ with $\eta\in\{0,0.03,0.06,0.1,0.15,0.2\}$, and evaluate the objective on $3000$ fresh pairs with common frequencies across $\eta$; we repeat this five times with independent pairs and frequencies and report the mean (bars in \cref{fig:slaving}: one standard deviation over repetitions). At $\eta=0$ the two views coincide. We fit $F-F(0)=-g\Delta+c_2\Delta^2$ by least squares on $\eta\le0.2$ (the comparisons of \cref{fig:slaving}a,b also hold for a refit on $\eta\le0.1$), with $F=\hat\J_U$ and $\Delta=\frac14\E\|U-V\|^2$ on the normalized embeddings; for LeJEPA, $F$ is the per-sample SIGReg statistic ($1024$ slices, $8$ repetitions with common random numbers) and $\Delta$ is replaced by $I$, which gives $g_S$. Since $s_i$ and $-s_i$ have the same law, $F$ and $\Delta$ are even in $\eta$, and $\vartheta=\eta^2$ plays the role of the parameter in \cref{prop:condA}, with $\Delta'(0)>0$ in $\vartheta$ whenever the embedding depends on the style at first order: $\eta=0$ gives the aligned law $P_0$ of the trained encoder, and the fitted slope estimates $g=-\dd F/\dd\Delta|_{\Delta=0}$ (for LeJEPA, $g_S=-\dd S/\dd I|_{I=0}$) along this empirical channel. LeJEPA is trained with the plug-in SIGReg statistic. Applying the identity behind \cref{prop:bias} to each slice, its batch-scaled expectation is $BS$ plus a term that does not grow with $B$. Measured along the same channel, this term contributes at most $0.33$ in absolute value to the gain of each of the $34$ encoders, and adding it to $Bg_S$ leaves every classification in \cref{fig:slaving}b unchanged.

\paragraph{SG$_\alpha$ sweep (\cref{fig:slaving}a).} We trained SG$_\alpha$ for $\alpha\in\{0,0.001,0.003,0.01,0.015,0.02,0.025,0.03,0.05,0.1,0.2\}$ with three seeds each (two at $\alpha=0.001$, five at $\alpha\in\{0.02,0.025,0.03\}$). The mean own-channel gain falls from $0.036$ at $\alpha=0$ to $0.006$ at $\alpha=0.2$, and interpolating it places the crossing $g(\alpha)=\alpha$ at $\alpha\approx0.023$. Style is detected in every seed for $\alpha\le0.003$, in some seeds for $\alpha\in\{0.01,0.015\}$, and in none for $\alpha\ge0.02$.

\paragraph{LeJEPA (\cref{fig:slaving}b).} At $B=256$ (22 encoders: one seed at $\lambda=0.005$, three otherwise; gain range over seeds):
\begin{center}\footnotesize\setlength{\tabcolsep}{3pt}\begin{tabular}{lcccccccc}\toprule $\lambda$ & 0.005 & 0.01 & 0.02 & 0.03 & 0.05 & 0.07 & 0.1 & 0.2\\\midrule
price $(1-\lambda)/\lambda$ & 199 & 99 & 49 & 32 & 19 & 13 & 9.0 & 4.0\\
gain $Bg_S$ & 6 & 5--9 & 13--15 & 19--23 & 22--28 & 28--40 & 37--61 & 73--127\\
style $R^2$ max & $-0.01$ & $-0.01$ & $-0.02$ & $-0.01$ & $0.14$ & $0.28$ & $0.61$ & $0.83$\\\bottomrule\end{tabular}\end{center}
\emph{Batch size.} At $B=64$ and $B=1024$:
\begin{center}\footnotesize\setlength{\tabcolsep}{3pt}\begin{tabular}{llcccc}\toprule
$B$ & $\lambda$ & price & gain $Bg_S$ & style $R^2$ & kept\\\midrule
$64$ (2 seeds) & $0.03,0.05,0.1$ & $32,19,9$ & $1.6$--$4.2$ & $\le-0.006$ & no\\
 & $0.2$ & $4$ & $9.6,9.8$ & $0.41,0.35$ & yes\\
$1024$ (seed 0) & $0.005,0.01$ & $199,99$ & $51,90$ & $\le-0.010$ & no\\
 & $0.02,0.03$ & $49,32$ & $127,221$ & $0.27,0.41$ & yes\\\bottomrule\end{tabular}\end{center}
\paragraph{Warm start (\cref{fig:slaving}c).} For $\alpha=0$, $0.001$, $0.003$, $0.01$, $0.03$, $0.2$, the style $R^2$ is $0.146$, $0.081$, $0.027$, $-0.014$, $-0.018$, $-0.016$ and the CKA to InfoNCE encoders of the other seeds is $0.962$, $0.967$, $0.971$, $0.975$, $0.977$, $0.975$ (InfoNCE seed to seed: $0.982$). The own-channel gains of these encoders lie in $[0.0017,0.0083]$. The switch-point encoders (InfoNCE after $4000$ steps; regenerated deterministically, with final branch weights identical to the reported runs) have style $R^2$ $-0.003,0.057,0.045$, $\J=(1.87,0.57,1.37)\times10^{-3}$ and $\J_{0.2}=(4.3,3.4,3.9)\times10^{-3}$ for the three seeds, evaluated with the same protocol.

\paragraph{Condition (U) in training (\cref{fig:utrain}, \cref{sec:linear}).} Moment matching minimizes $\|\E W\|^2+\|\E WW^\top-I/d\|_F^2+\alpha\Delta$ over the normalized embeddings $W$ of both views, with $\alpha=1$ (3 seeds); its encoders reach pair cosine $0.98$ and $\Ub=0.29$ (InfoNCE: $0.021$). The Gaussian uniformity loss is $\frac\beta2\E\|U-V\|^2+\log\mathrm{mean}_{i\ne j}e^{-\gamma\|w_i-w_j\|^2}$ over the $2B$ embeddings of a batch, with $\gamma\in\{1,2,2.5,5\}$ (2 seeds). The residual $\|\hat q_\ell\|^2$ is estimated from $3000$ test embeddings by the U-statistic $N_\ell\,\mathrm{mean}_{i\ne j}P_\ell(w_i^\top w_j)$ and averaged over seeds; for moment matching it is $7$--$63$ times InfoNCE's at every degree $\ell=3,\dots,8$. \emph{VICReg} with invariance weight $a\in\{1,2.5,5,10,25,50\}$ (variance and covariance weights $25,1$; 3 seeds, one at $a=50$) reaches pair cosine $0.91$--$0.99$ and $\Ub=0.9$--$2.0$ after normalization; style is not detected in any of the $16$ encoders.

\paragraph{Code and compute.} We will release the code, the training command of every encoder and all evaluated numbers; training is deterministic on CPU. Exact-law and controlled-latent experiments ran on a 2-core CPU; a latent-model run takes 2--13 minutes, and the heat-channel integration for $d=8$ about 40 minutes.

\begin{table*}[!t]
\caption{Content--style model, all metrics (mean $\pm$ s.d.\ over 3 seeds; \cref{tab:latent} reports a subset, together with the vMF CKA of \cref{tab:vmf}). Here $\cos$ is the mean pair cosine $\E\,U^\top V$; $\J$ in units of $10^{-3}$; excess is $\Lb-\mathcal L^\star_\beta=2\beta\Delta+\Ub$ at $\beta=5$; $R^2_c$ is the held-out linear decodability of the content; CKA and style as in \cref{tab:latent}. $\hat\J_U$ is the unbiased estimator; InfoNCE$\to$SG and InfoNCE$\to$SG$_{0.2}$ switch from InfoNCE halfway through training; $\lambda$ is LeJEPA's SIGReg weight.}\label{tab:latent-sd}
\centering\footnotesize
\setlength{\tabcolsep}{3pt}
\resizebox{\textwidth}{!}{\begin{tabular}{lccccccc}
\toprule
Method & $\cos$ & $\J$ & $\Ub$ & excess & $R^2_c$ & CKA & style\\
\midrule
InfoNCE & $0.985\pm0.001$ & $0.17\pm0.01$ & $0.02\pm{<}0.005$ & $0.10\pm{<}0.005$ & $0.77\pm0.01$ & $0.98\pm{<}0.005$ & $-0.02\pm0.01$\\
SG, $\hat\J_U$ & $0.802\pm0.006$ & $0.85\pm0.02$ & $0.05\pm0.01$ & $1.03\pm0.02$ & $0.57\pm0.04$ & $0.65\pm0.03$ & $0.11\pm0.01$\\
InfoNCE$\to$SG & $0.961\pm0.002$ & $0.12\pm0.01$ & $0.02\pm{<}0.005$ & $0.21\pm0.01$ & $0.75\pm0.01$ & $0.96\pm{<}0.005$ & $0.14\pm0.01$\\
InfoNCE$\to$SG$_{0.2}$ & $0.981\pm0.001$ & $0.27\pm0.01$ & $0.04\pm{<}0.005$ & $0.13\pm{<}0.005$ & $0.76\pm0.01$ & $0.98\pm{<}0.005$ & $-0.02\pm0.01$\\
\midrule
SG$_{0.2}$ & $0.973\pm0.001$ & $0.98\pm0.04$ & $0.16\pm{<}0.005$ & $0.29\pm0.01$ & $0.67\pm{<}0.005$ & $0.83\pm0.01$ & $-0.02\pm{<}0.005$\\
SG$_{0.05}$ & $0.934\pm0.002$ & $0.79\pm0.04$ & $0.12\pm{<}0.005$ & $0.45\pm0.01$ & $0.63\pm{<}0.005$ & $0.77\pm0.01$ & $-0.02\pm{<}0.005$\\
\midrule
LeJEPA, $\lambda=0.01$ & $0.956\pm{<}0.0005$ & $1.57\pm0.03$ & $0.26\pm0.01$ & $0.48\pm0.01$ & $0.56\pm{<}0.005$ & $0.70\pm{<}0.005$ & $-0.02\pm{<}0.005$\\
LeJEPA, $\lambda=0.1$ & $0.682\pm0.024$ & $2.20\pm0.36$ & $0.04\pm0.01$ & $1.63\pm0.12$ & $0.49\pm0.04$ & $0.57\pm0.03$ & $0.52\pm0.09$\\
VICReg & $0.979\pm0.002$ & $95\pm5$ & $1.38\pm0.06$ & $1.48\pm0.06$ & $0.47\pm0.03$ & $0.60\pm0.05$ & $-0.02\pm0.01$\\
\bottomrule
\end{tabular}
}
\vspace{1.2em}
\caption{vMF latent model ($d=8$, $B=256$, 8k steps; mean $\pm$ s.d. over 3 seeds). Columns as in \cref{tab:latent-sd}, without style; VICReg weights (sim, std, cov) $=(10,25,1)$.}\label{tab:vmf}
\centering\footnotesize
\resizebox{\textwidth}{!}{\begin{tabular}{lcccccc}
\toprule
Method & $\cos$ & $\J$ & $\Ub$ & excess & $R^2_c$ & CKA\\
\midrule
InfoNCE & $0.698\pm{<}0.0005$ & $1.37\pm0.01$ & $0.00\pm{<}0.005$ & $1.51\pm{<}0.005$ & $0.99\pm{<}0.005$ & $0.997\pm{<}0.0005$\\
SG, $\hat\J_U$ & $0.668\pm0.003$ & $1.75\pm0.08$ & $0.02\pm{<}0.005$ & $1.67\pm0.02$ & $0.93\pm0.01$ & $0.938\pm0.010$\\
SG, $\hat\J_V$ & $0.667\pm0.004$ & $1.76\pm0.08$ & $0.02\pm{<}0.005$ & $1.68\pm0.02$ & $0.92\pm0.01$ & $0.935\pm0.012$\\
\midrule
SG$_{0.2}$ & $0.694\pm0.001$ & $1.42\pm0.03$ & $0.00\pm{<}0.005$ & $1.53\pm{<}0.005$ & $0.99\pm{<}0.005$ & $0.993\pm0.001$\\
SG$_{0.05}$ & $0.686\pm0.001$ & $1.51\pm0.03$ & $0.01\pm{<}0.005$ & $1.57\pm{<}0.005$ & $0.97\pm{<}0.005$ & $0.981\pm0.001$\\
\midrule
LeJEPA, $\lambda=0.01$ & $0.687\pm0.001$ & $1.54\pm0.02$ & $0.01\pm{<}0.005$ & $1.57\pm0.01$ & $0.97\pm{<}0.005$ & $0.981\pm0.003$\\
VICReg, $(10,25,1)$ & $0.745\pm0.022$ & $57\pm18$ & $0.89\pm0.18$ & $2.16\pm0.07$ & $0.75\pm0.02$ & $0.811\pm0.009$\\
\bottomrule
\end{tabular}
}
\end{table*}

\clearpage
\section{Notation}\label{app:notation}

Symbols are listed with the place where they are defined; symbols used only in part of the paper are marked with the sections in which they appear.

\begin{center}
\footnotesize
\renewcommand{\arraystretch}{1.05}
\begin{tabular}{@{}>{\raggedright\arraybackslash}p{0.27\textwidth}>{\raggedright\arraybackslash}p{0.5\textwidth}>{\raggedright\arraybackslash}p{0.17\textwidth}@{}}
\toprule
Symbol & Meaning & Defined in\\
\midrule
\multicolumn{3}{@{}l}{\emph{Paired laws and the shared loss}}\\
$d$, $\Sd$, $\sigma$ & dimension, unit sphere, its uniform law & \cref{sec:setup}\\
$P$, $(U,V)$, $q_U,q_V$, $\bar q$, $q$ & paired law, views, marginals, mixture and equal marginal & \cref{sec:setup}\\
$\Delta(P)$ & misalignment $\frac14\E\|U-V\|^2$ & \eqref{eq:delta}\\
$C$, $R$, $Z$ & centroid, $\chi_d$ radius, $Z=RC$ & \eqref{eq:construction}\\
$\varphi_Z$, $\J(P)$ & characteristic function of $Z$, SG loss & \eqref{eq:J}\\
$a$, $t$ & frequency direction and scale & \eqref{eq:J}\\
$H$, $Q$ & $\frac12(U-V)$, $\|H\|^2$ & \cref{sec:alignment}\\
$j$, $k_0$, $\Delta^\star$ & alignment bound, $k_0=2\cdot3^{3/4}$, $\min\{1,j^{-1}\}$ & \cref{lem:align}, \cref{thm:infonce}\\
$\Dlift$, $K_{\rm lift}$ & lifted discrepancy and its zonal kernel & \cref{sec:infonce-comparison}, \cref{app:prelim}\\
$\J_\alpha$ (SG$_\alpha$) & $\J+\alpha\Delta$ & \eqref{eq:J-alpha}\\
\midrule
\multicolumn{3}{@{}l}{\emph{InfoNCE}}\\
$\beta=1/\tau$, $\Lb$, $\mathcal L^\star_\beta$ & inverse temperature, population InfoNCE, its minimum & \eqref{eq:infonce}\\
$\kappa_d(\beta)$, $\Ub(q)$ & $\E_{W\sim\sigma}e^{\beta u^\top W}$, uniformity excess & \eqref{eq:infonce-decomp}\\
$k_\beta$, $\Db$ & kernel $e^{\beta u^\top v}$ and its MMD & \cref{sec:infonce-comparison,sec:linear}\\
$C^\star_{d,\beta}$, $\bar c_{d,\beta}$, $c_{\beta,\alpha}$ & kernel-comparison and certificate constants & \cref{prop:kernel}, \cref{thm:infonce}, \cref{prop:calibrated}\\
\midrule
\multicolumn{3}{@{}l}{\emph{Finite batches}}\\
$B$, $M$, $\omega_m=t_ma_m$ & batch size, number of frequencies, frequencies & \cref{sec:finite}\\
$\hat\varphi_m$, $g_m$, $\hat\J_V$, $\hat\J_U$ & empirical and target characteristic functions, estimators & \cref{sec:finite}, \eqref{eq:J-U}\\
\midrule
\multicolumn{3}{@{}l}{\emph{Nuisance and price}}\\
$\vartheta$, $\J_q(\vartheta)$, $\J_{\rm unif}(q)$ & heat time, SG loss along the channel, its value at $\vartheta=0$ & \cref{sec:nuisance-channel}\\
$G_\ell$, $\rho(f)$, $\rho_\ell$, $\Delta_{\rm eq}$ & first-variation weights, equilibrium ratio ($\rho_\ell$: $f$ of pure degree $\ell$), equilibrium misalignment & \cref{thm:slaving}, \eqref{eq:balance}\\
$g$, $\alpha_c$ & gain of a channel, critical price & \eqref{eq:balance}, \cref{prop:condA}, \cref{cor:threshold}\\
(A) & alignment price above the gain, $\alpha>g$ & \cref{sec:alignment-price}, \cref{prop:condA}\\
$\lambda$, $S$, $I$, $g_S$, $\lambda_c$ & LeJEPA weight, SIGReg and invariance terms, gain, critical weight & \cref{prop:condA}\\
$W$, $\Lambda$, $c(\Lambda)$ & shared or mean code, latent, length of the conditional mean & \cref{sec:nuisance-channel,sec:alignment-price}\\
\midrule
\multicolumn{3}{@{}l}{\emph{Spherical harmonics and uniformity tests (from \cref{sec:linear} on)}}\\
$\hat q_\ell$ & degree-$\ell$ harmonic component of $q-\sigma$ & \cref{sec:linear}\\
$\mathcal H_\ell$, $N_\ell$, $Y_{\ell m}$, $P_\ell$ & harmonics of degree $\ell$, dimension, basis, Gegenbauer polynomial & \cref{app:spectral}\\
$\lambda^\beta_\ell$, $\lambda^{\rm lift}_\ell$, $\mu_\ell$, $S_K$ & eigenvalues of $k_\beta$, $K_{\rm lift}$, a zonal kernel $K$; spectral ratio & \cref{sec:linear}, \cref{thm:spectral}, \cref{app:spectral}\\
$\mathcal Q_\beta$, $\delta$, $c_K$, (U) & quadratic approximation of $\Ub$, density deviation, best linear constant, spectral dominance $S_K<\infty$ & \cref{thm:spectral}\\
$D=d/2$, $\nu=D-1$ & half-dimension, Bessel index & \cref{app:prelim}, \cref{app:spectral}\\
$g(t)=e^{-t^2/2}$, $T_\mu$, $\psi_u$ & target characteristic function (not the gain $g$), lifted variable, feature map & \cref{app:prelim}\\
$\mathsf k_d$, $K_d$, $\Gamma_q$ & centered-channel constant, gain constant, gain density & \cref{app:gain}\\
\bottomrule
\end{tabular}
\end{center}

\end{document}